\documentclass{article}
\usepackage[margin=0.85in]{geometry}
\usepackage[authoryear,round]{natbib}
\usepackage{authblk}

\usepackage{amsmath,amsfonts,bm}

\def\eqref#1{equation~\ref{#1}}

\def\1{\bm{1}}

\DeclareMathAlphabet{\mathsfit}{\encodingdefault}{\sfdefault}{m}{sl}
\SetMathAlphabet{\mathsfit}{bold}{\encodingdefault}{\sfdefault}{bx}{n}

\usepackage{xcolor}
\usepackage{graphicx}
\usepackage{float}
\usepackage{xurl}
\usepackage{hyperref}
\ifdefined\pdfminorversion\pdfminorversion=7\fi

\usepackage{booktabs}
\usepackage{enumitem}
\usepackage{listings}
\usepackage{amssymb}
\usepackage{amsthm}
\usepackage{mathtools}

\theoremstyle{definition}

\theoremstyle{plain}
\newtheorem{theorem}{Theorem}[section]
\newtheorem{lemma}[theorem]{Lemma}

\lstdefinestyle{prompt}{
  basicstyle=\ttfamily\scriptsize,
  breaklines=true,
  frame=single,
  backgroundcolor=\color{gray!6},
  keywordstyle=\color{blue!60!black}\bfseries,
  showstringspaces=false,
  columns=fullflexible,
  keepspaces=true,
  captionpos=t,
  abovecaptionskip=\baselineskip,
  belowcaptionskip=\baselineskip,
}
\lstdefinelanguage{json}{
  basicstyle=\ttfamily\scriptsize,
  showstringspaces=false,
  morestring=[b]",
  stringstyle=\color{teal!70!black},
  morecomment=[l]{//},
  commentstyle=\color{gray},
  morekeywords={true,false,null},
  keywordstyle=\color{blue!60!black}\bfseries,
}

\title{
  Harnessing Large Language Models to Compile Task-Relevant Context into Bayesian Optimisation
}

\author[1]{Zhongwei Yu}
\author[2]{Sourabh Roy}
\author[1]{Bin Cao}
\author[3]{Xue Yan}
\author[1]{Anjie Liu}
\author[2\ensuremath{\dagger}]{Jun Wang}

\affil[1]{The Hong Kong University of Science and Technology (Guangzhou)}
\affil[2]{University College London (UCL)}

\affil[3]{Institute of Automation, Chinese Academy of Sciences}

\date{}

\makeatletter
\def\ps@preprint{%
  \let\@mkboth\@gobbletwo
  \def\@oddhead{\hbox to\textwidth{\footnotesize Preprint\hfil}}%
  \let\@evenhead\@oddhead
  \def\@oddfoot{\hbox to\textwidth{\hfil\thepage\hfil}}%
  \let\@evenfoot\@oddfoot
}
\makeatother

\begin{document}

\maketitle
\begingroup
\renewcommand{\thefootnote}{\ensuremath{\dagger}}
\footnotetext{Corresponding author}
\endgroup
\thispagestyle{preprint}
\begin{center}
\footnotesize
\textbf{Emails:} \texttt{zyu950@hkust-gz.connect.edu.cn}; \texttt{jun.wang@ucl.ac.uk}
\end{center}

\begin{abstract}
Incorporating rich task-relevant context, such as domain knowledge and external observations, is a key capability yet remains challenging for Bayesian optimisation (BO). Recently, practitioners have started to use large language models (LLMs) to generate and execute BO programs through coding harnesses. In such emerging practices, the posterior belief is shaped not only by Bayesian inference but also by LLM-generated model and data artefacts, offering a flexible route for task context to enter BO as executable code. To study whether and how LLMs can be harnessed to compile diverse contextual signals for BO, we formulate LLM-compiled BO as generalised-context decision making. We propose HarBO, a BO-specialised harness that compiles generalised context into the core artefacts of standard BO through a validated multi-stage workflow. Our theory analyses the regret under imperfect compilation and the effect of adding new context. Across synthetic functions and real-world benchmarks, we find that LLM harnesses can effectively compile context into standard BO, achieving competitive performance with specialised LLM-embedding-based and direct LLM-in-the-loop BO methods. General coding harnesses can be effective in familiar domains such as hyperparameter optimisation, but fall short in unfamiliar, context-rich domains. Together, these results establish LLM harnesses as a promising, but not automatically reliable, route for making rich task context usable in BO.
\end{abstract}

\section{Introduction}

Bayesian optimisation (BO) is a principled approach to optimising an expensive, unknown black-box objective $f\colon\mathcal{X}\to\mathbb{R}$~\citep{shahriariTakingHumanOut2016, garnettBayesianOptimization2023}. A typical BO loop gathers evidence about $f$, represents the current belief with a probabilistic surrogate, most often a Gaussian process (GP)~\citep{rasmussenGaussianProcessesMachine2008}, and selects the next evaluation through an acquisition function. BO is widely used in hyperparameter tuning~\citep{turnerBayesianOptimizationSuperior2021} and scientific discovery~\citep{yuEfficientPrincipledScientific2026}, where rich contextual information is available: background knowledge, domain literature, and external observations. Injecting such information into BO has attracted considerable interest~\citep{ramachandranIncorporatingExpertPrior2020,xieDomainKnowledgeInjection2023}. However, existing methods require substantial expertise to translate context into mathematical protocols, limiting accessibility for practitioners who use BO as an off-the-shelf tool.

The development of LLM coding agents~\citep{yangSWEagentAgentComputerInterfaces2024} creates a different possibility that we call \emph{LLM-compiled BO}, where an LLM specifies, and may drive, a BO loop from the information available. Figure~\ref{fig:llm-compiled-bo} contrasts it with conventional human-specified BO. We do not claim to invent LLM-compiled BO; rather, it is a naturally emerging practice in the era of ``vibe coding''~\citep{edwardsWillFutureSoftware2025}. For example, a materials scientist or chemist who does not know any BO algorithm can now give an agent (e.g., Codex) the task description, domain references, and an evaluation interface. An LLM then authors and runs the BO program through the surrounding coding harnesses~\citep{HarnessEngineeringLeveraging2026}. In this emerging practice, the task context may influence optimisation decisions through the resulting context--code--policy route, but this route is not yet well understood. Therefore, the aim of this paper is to understand whether and how this route can make rich task context usable for BO.

\begin{figure*}[t]
\centering
\includegraphics[width=\textwidth]{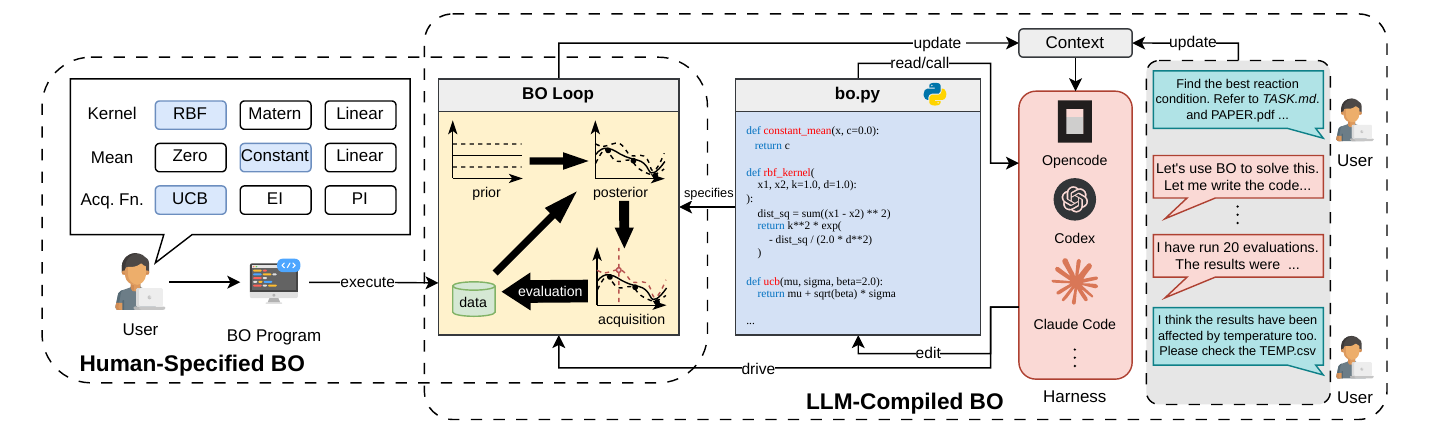}
\vspace{-1.5em}
\caption{From human-specified BO to LLM-compiled BO, where the domain practitioner delegates the optimisation task to a harness-based coding agent through interactive conversation.}
\vspace{-1em}
\label{fig:llm-compiled-bo}
\end{figure*}

To formally study LLM-compiled BO, we formulate it as a \emph{generalised-context decision-making} problem, where ``generalised'' contrasts with the classical covariate context~\citep{krauseContextualGaussianProcess2011}. At round $t$, the system receives generalised context $c_t$, which may include optimisation history, task requirements, domain literature, and conversation messages. A Bayesian policy $\pi(x_t|c_t)$ maps this context into model and data artefacts that induce a belief $p(f_t\mid c_t)$ about the current objective, after which an acquisition function selects the next design $x_t$. Under this view, the LLM compilation implicitly contributes to $p(f_t\mid c_t)$ by specifying the surrogate model and data.

To investigate how harness design affects this process, we study both general coding harnesses and \textbf{Har}nessing the LLM to compile \textbf{BO} (\textbf{HarBO}). HarBO is a specialised harness for LLMs to compile task context into artefacts of a standard GP-based BO. Through a validated multi-stage workflow, HarBO can accommodate diverse types of information in generalised context, such as the potential covariates influencing the objective as addressed by classical contextual BO \citep{krauseContextualGaussianProcess2011}, as well as domain knowledge and transferred experience. Theoretically, we analyse the expected regret under imperfect compilation and also show that additional context need not tighten the bound.

We instantiate HarBO in two harness realisations: an in-process harness that follows hard-coded contracts and an agentic harness comprising skills and tools~\citep{HarnessDesignLongrunning} that can be plugged into general-purpose coding agents. Across synthetic functions, hyperparameter optimisation, molecular docking, and reaction optimisation, HarBO can reliably encode generalised context into GP-based BO, leading to competitive performance with LLM-embedding-based and LLM-in-the-loop BO. In contrast, the general coding harness shows a clear capability boundary: it can produce effective BO programs in familiar real-world domains such as hyperparameter tuning, whereas it is less effective at integrating external information in custom context-rich synthetic settings.

Overall, we formally analyse LLM-compiled BO as an emerging framework, rather than proposing a new BO algorithm. Our contributions are threefold:
(1) We introduce the formulation of LLM-compiled BO as generalised-context decision making.
(2) We propose HarBO, a BO-specialised harness for context compilation.
(3) We provide theoretical results and conduct comprehensive evaluations to characterise its capabilities and limitations.

\section{Related work}

BO is a powerful tool for optimising expensive black-box functions~\citep{jonesEfficientGlobalOptimization1998,srinivasGaussianProcessOptimization2010}, with wide applications in scientific discovery~\citep{shieldsBayesianReactionOptimization2021, khanRealworldAutomatedAntibody2023, wuRaceBottomBayesian2024,caoBgolearnUnifiedBayesian2026} and hyperparameter tuning~\citep{bergstraAlgorithmsHyperParameterOptimization2011, snoekPracticalBayesianOptimization2012}. Contextual BO has been explored by \citet{krauseContextualGaussianProcess2011} and \citet{zhangContextualGaussianProcess2023}, where context serves as the covariate that explains the nonstationarity of the objective. In contrast, the \emph{generalised context} may also include epistemic knowledge that reduces the uncertainty of the objective. Meanwhile, aumenting BO by injecting external knowledge has attracted considerable interest, including surrogate-space warping~\citep{ramachandranIncorporatingExpertPrior2020}, physically grounded surrogates~\citep{ziatdinovPhysicsMakesDifference2022}, acquisition biasing~\citep{xieDomainKnowledgeInjection2023}, domain-specific representations~\citep{haseGryffinAlgorithmBayesian2021}, transfer learning~\citep{poloczekWarmStartingBayesian2016}, and reweighted posterior sampling~\citep{hvarfnerGeneralFrameworkUserGuided2023}. However, these methods demand specific knowledge representation, which is often non-trivial for domain practitioners.

Recently, LLM-assisted BO methods have been extensively studied to integrate BO with pretrained knowledge and semantic understanding. Earlier work uses LLMs for GP input representation~\citep{rankovicBoChemianLargeLanguage2023} or places them directly inside the BO loop as a surrogate, acquisition mechanism, or candidate proposer~\citep{liuLargeLanguageModels2024, yangLargeLanguageModels2024, yinADOLLMAnalogDesign2024, chenLLMEnhancedBayesianOptimization2024, cisseLanguageBasedBayesianOptimization2025}. However, empirical studies report inconsistent performance gains across domains~\citep{huangExploringTruePotential2024}, and criticism has been raised regarding the lack of principled uncertainty estimates and LLMs' limitations in numerical reasoning~\citep{guptaLLMsBayesianOptimization2025, kristiadiSoberLookLLMs2024}. Accordingly, alternative methods use the LLM as a high-level policy for choosing inner BO modules~\citep{suwandiAdaptiveKernelDesign2025, zhaoDASHDecoupledAdaptive2026a}, as a source of surrogate bias~\citep{yuanUnleashingLLMsBayesian2026}, or as a low-fidelity surrogate~\citep{chenLABOLLMAcceleratedBayesian2026}. These approaches potentially allow context to enter decisions, but only through a narrow interface. Moreover, the user still needs to understand the inner workings of these methods to prepare the context as a proper prompt. In contrast, LLM-compiled BO demands little understanding of BO and machine learning, allowing domain practitioners to provide context that is task-oriented rather than BO-oriented.

In addition, a similar idea, ``code as policies'', has been proposed previously by \citet{liangCodePoliciesLanguage2023} for robotics. To our knowledge, this is the first work to formally study LLM-compiled BO, not as a specific BO algorithm but as an emerging way in which BO is used through LLM harnesses~\citep{HarnessEngineeringLeveraging2026, HarnessDesignLongrunning}.

\section{LLM-compiled BO as contextual decision making}

We analyse LLM-compiled BO from the perspective of contextual decision making. Let $f_t\colon\mathcal{X}\to\mathbb{R}$ be a black-box objective at step $t$ ($f_t\equiv f$ for stationary objectives), where the design space $\mathcal{X}$ is fixed and supplied by the user. The optimiser is a policy $\pi(x_t\mid c_t)$ that receives generalised context $c_t$---including domain evidence, files, dialogue, environment reports, and its own observations---and selects $x_t\in\mathcal{X}$ before observing $y_t=f_t(x_t)+\epsilon_t$, where $\epsilon_t\sim\mathcal{N}(0,\sigma_t^2)$ is an additive noise term. An ideal policy should have sublinear cumulative regret: $R_T=\sum_{t=1}^T[\max_{x\in\mathcal{X}}f_t(x)-f_t(x_t)]=o(T)$. Such a setting is analogous to contextual bandits~\citep{liContextualBanditApproach2010,abbasiYadkoriImprovedAlgorithms2011}, whereas the context here is generalised evidence about the black-box $f_t$.

A Bayesian policy should implicitly model the target belief $p(f_t\mid c_t)$ and select the $x_t$ with the highest expected utility, where the context $c_t$ leads to the decision $x_t$ through two channels: the \emph{belief channel} and \emph{utility channel}. The belief-channel compiler $\mathcal{C}_{\mathrm{B}}$, which is the focus of this work, maps $c_t$ into a model artefact $\mathcal{M}_t$ and a data artefact $\mathcal{D}_t$. Bayesian inference with these artefacts induces a surrogate random function $\hat f_t\sim p(f_t\mid\mathcal{M}_t,\mathcal{D}_t)$. The utility-channel compiler $\mathcal{C}_{\mathrm{U}}$ optionally maps an instruction in $c_t$ into an acquisition function $a_t$ that specifies the utility of the next decision. The two channels form the following graphical model of $\pi(x_t\mid c_t)$:
\begin{equation}
\label{eq:mc-llm-compiled-bo}
c_t \xrightarrow{\mathcal{C}_{\mathrm{B}}}(\mathcal{M}_t,\mathcal{D}_t)\xrightarrow{\mathrm{Bayes}} \hat f_t\sim p(f_t\mid\mathcal{M}_t,\mathcal{D}_t);\qquad
c_t \xrightarrow{\mathcal{C}_{\mathrm{U}}}a_t;\qquad
(\hat f_t,a_t)\xrightarrow{\text{maximise acq.}}x_t.
\end{equation}
The compiled artefacts are \emph{faithful} if their induced belief matches the target conditional belief, $p(f_t\mid\mathcal{M}_t,\mathcal{D}_t)=p(f_t\mid c_t)$; equivalently, $(\mathcal{M}_t,\mathcal{D}_t)$ are sufficient statistics of $c_t$ for $f_t$. This view underlies the core motivation of this work: perfect faithfulness may be overly challenging, but a harness can nevertheless reduce compilation error relative to an unconstrained compiler. Appendix~\ref{app:sequential-compilation} gives the rigorous distribution-valued formulation. The utility channel is considered an optional extension and is excluded from the theoretical and experimental analysis.

\section{Harnessing LLM to compile BO}
\label{sec:harnessing}

\begin{figure}[t]
\centering
\includegraphics[width=\textwidth]{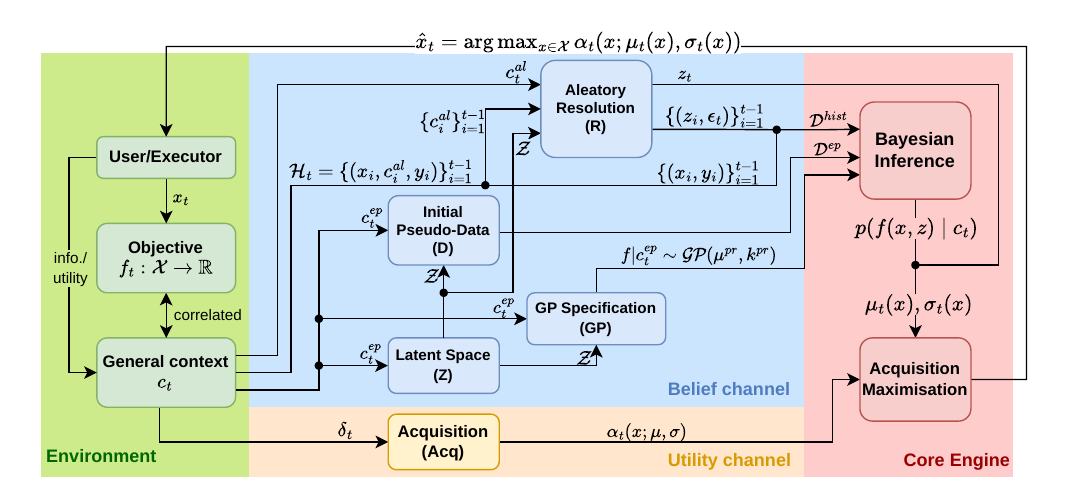}
\vspace{-1.5em}
\caption{\textbf{The staged workflow of HarBO.} The context is decomposed into epistemic $c^{ep}$, aleatory $c_t^{al}$, history $\mathcal{H}_t$, and optional instruction $\delta_t$. These components are compiled through the belief channel (stages Z, D, GP, and R) and the utility channel (stage Acq), and then a decision $x_t$ is selected by the core Bayesian engine.}
\label{fig:framework}
\end{figure}

A compiler $\mathcal{C}_{\mathrm{B}}$ driven by a general coding harness may be easily confused by diverse contextual signals and is prone to produce the most likely code (e.g., a canonical squared-exponential kernel and constant mean) regardless of the rich information in $c_t$. HarBO addresses this issue by providing a principled harness that explicitly decomposes the context into reusable, validated artefacts. As illustrated in Figure~\ref{fig:framework}, the core ideas are as follows: (1) the harness excludes pure numeric engineering unrelated to context, such as Cholesky decomposition, GP hyperparameter training, and inner acquisition optimisation, from the LLM's responsibility; and (2) the harness defines a unified, staged compilation workflow that decomposes the context into reusable artefacts of BO, where each stage has clear input/output specifications, task-agnostic guidance, and validation procedures.

Following the contextual-bandit formulation~\citep{liContextualBanditApproach2010}, we assume that the nonstationarity of $f_t$ is explained by a latent covariate state $z_t$, whose space is denoted as $\mathcal{Z}$. Thus, there exists $f\colon\mathcal X\times\mathcal Z\to\mathbb R$ such that $f_t(x)=f(x,z_t)$ for all $x\in\mathcal X$ and $t$. A core benefit is then that artefacts regarding $f$ are reusable. Similar to \citet{krauseContextualGaussianProcess2011}, we assume that $f$ is drawn from a prior GP. Then, the belief channel must infer the following unknown artefacts from the context: the latent space $\mathcal{Z}$, the GP prior mean and kernel $(\mu^{pr},k^{pr})$, the data $\mathcal{D}_t$, and the current latent covariate $z_t$. In particular, each entry in $\mathcal{D}_t$ is a tuple $(x_i,z_i,\sigma_i,y_i)$, where $\sigma_i\in\mathbb{R}_{+}\cup\{\mathrm{UNK}\}$ is the observation-noise scale also inferred from the context ($\mathrm{UNK}$ indicates an unknown noise scale).

\subsection{Staged compilation of context}
\label{sec:staged-compilation}

For the belief channel, we decompose the context into three distinct types of signals. First and foremost, the \emph{epistemic context} $c^{ep}$ is the relatively stable part of context that informs the latent function $f$, such as domain knowledge, literature, and prior experimental records. It usually constitutes the majority of the context and is expected to change infrequently (thus we drop the subscript $t$).
Three compilation stages, denoted as \textbf{Z} (latent space), \textbf{D} (initial pseudo-data), and \textbf{GP} (GP specification), compile it to the latent space $\mathcal{Z}$, epistemic pseudo-data $\mathcal{D}^{ep}$, and GP prior $(\mu^{\mathrm{pr}},k^{\mathrm{pr}})$, respectively:
\begin{equation}
\label{eq:epistemic-compilation}
c^{ep}\xrightarrow{\text{Z}}\mathcal{Z}
\qquad
(c^{ep},\mathcal{Z})\xrightarrow{\text{D}}\mathcal{D}^{ep}
\qquad
(c^{ep},\mathcal{Z},\mathcal{D}^{ep})\xrightarrow{\text{GP}}(\mu^{\mathrm{pr}},k^{\mathrm{pr}}).
\end{equation}
Specifically, the epistemic pseudo-data $\mathcal{D}^{ep}=\{(x_i^{ep},z_i^{ep},\sigma_i^{ep},y_i^{ep})\}_{i=1}^{n^{ep}}$ are not observations collected during the campaign; instead, they encode reliable point-level evidence such as prior experimental records. In contrast, the prior mean and kernel encode global structure and trends that are not captured by the pseudo-data. All these artefacts are reusable until the epistemic context changes, saving the cost of repeated LLM use.

Next, the context $c_t$ also includes two dynamic signals: (1) the \emph{aleatory context} $c_t^{al}$ describes the per-step, uncontrolled environmental state and its reliability; (2) the history $\mathcal{H}_t=\{(c_i^{al},x_i,y_i)\}_{i<t}$ is a list of tuples containing past aleatory contexts, designs, and observations. For every historical and current aleatory context, the aleatory resolution phase \textbf{R} compiles the context into a latent covariate and an observation-noise scale:
\begin{equation}
\label{eq:aleatory-compilation}
(\mathcal{Z},c_i^{al})\xrightarrow{\text{R}}(z_i,\sigma_i),\quad i=1,\ldots,t.
\end{equation}
Then $\mathcal{H}_t$ becomes the history data $\mathcal{D}^{hist}_t=\{(x_i,z_i,\sigma_i,y_i)\}_{i=1}^{t-1}$ accepted by the GP. The history is typically updated recursively, so past aleatory context is cached and not recompiled at every step. However, formulating the history as part of the generalised context allows the environment or user to intervene flexibly, such as by removing a corrupted entry or correcting a mislabelled observation.

Combining all the above, we have the model $\mathcal{M}_t=(\mu^{\mathrm{pr}},k^{\mathrm{pr}},\mathcal{Z},z_t)$ and data $\mathcal{D}_t=\mathcal{D}^{ep}\cup\{(x_i,z_i,\sigma_i,y_i)\}_{i=1}^{t-1}$ that exactly instantiate the belief channel in \eqref{eq:mc-llm-compiled-bo}. As for the utility channel, the context may include an \emph{instruction} signal $\delta_t$ that specifies the utility of the next decision, which an additional \textbf{Acq} stage compiles into the acquisition function $a_t(x)$. If no instruction is provided, a standard acquisition function such as the upper confidence bound (UCB)~\citep{srinivasGaussianProcessOptimization2010} is used by default. The in-depth analysis of the utility channel is left to future work.

The prompt for each stage includes the source code of input artefacts, the format and template of deliverables, and actionable guidance. Such guidance is designed to be neutral (not favouring any specific implementation or value choice) and task-agnostic. Details of the compiled artefacts and prompt design are given in Appendix~\ref{app:compiler} and Appendix~\ref{app:prompts}.

\subsection{Validated Bayesian engine}
\label{sec:bayesian-engine}

Given validated artefacts, the engine performs standard GP regression~\citep{rasmussenGaussianProcessesMachine2008}. Specifically, it constructs the marginal likelihood using the estimated noise scales, and uses a numerically stable Cholesky factorisation to compute the GP posterior. Hyperparameters in the prior mean and kernel are trained through empirical Bayes. Moreover, the engine also performs acquisition function maximisation w.r.t. $x\in\mathcal{X}$, with implementation determined by the design space. Appendix~\ref{app:bayesian-engine} gives the GP posterior and the details of the numerical inference procedure.

The engine also conducts comprehensive validation of the artefacts before performing inference and optimisation. The validation checks include, but are not limited to, space membership (e.g., $z_t \in \mathcal{Z}$), value range (e.g., $\sigma_t>0$), shape and type checks, and positive semidefiniteness of the resulting kernel matrix. Messages of validation errors and potential repair suggestions are returned to the compiler for bounded repair. Appendix~\ref{app:verifier} gives the complete validator battery.

\subsection{The in-process and agentic realisations}
\label{sec:realisations}

\paragraph{In-process HarBO.} As a minimal instantiation of HarBO, it aims at reproducible and controlled proof-of-concept experiments. The host controls stage ordering and artefact reuse, so that LLM calls strictly follow the fixed Z/D/GP/R workflow. For robustness, each stage uses a ReAct-style validation-and-retry loop~\citep{yaoReActSynergizingReasoning2023}: a failed artefact receives stage-specific feedback and is regenerated within a bounded repair budget. The LLM has no external tools and maintains no persistent campaign state. Unnecessary LLM usage and BO modules are excluded to isolate the effect of the staged compilation. Thus, we leave context routing to the environment and assume an explicit decomposition into epistemic, aleatory, and history streams; such a task is mostly semantic and can be performed by the user or a separate LLM. Appendix~\ref{app:inprocess} gives the concrete workflow.

\paragraph{Agentic HarBO.} This realisation plugs HarBO directly into the current coding-harness ecosystem, such as Codex~\citep{HarnessEngineeringLeveraging2026}, and prioritises flexibility in open-ended campaigns. The coding agent autonomously performs both semantic routing and compilation from open-ended context, including files, and maintains the epistemic, aleatory, and observation contracts as persistent files. The workflow is soft-enforced through a \emph{skill}, and a validated Bayesian engine is provided as an executable CLI tool. Consequently, execution depends on the surrounding agent-harness framework. Appendix~\ref{app:agentic} specifies the persistent artefacts and callable interface.

\section{Theory}
\label{sec:theory}

We first establish a basic sanity property of HarBO. Theorem~\ref{thm:faithful-compilation} shows that its staged workflow does not lose the ability to represent a GP belief, and Appendix~\ref{app:sequential-compilation} gives the rigorous statement. This result only serves to justify that the staged workflow of HarBO induces no expressivity bottleneck, and does not imply that a practical LLM can realise such a compiler.

\begin{theorem}[Expressivity of HarBO]
\label{thm:faithful-compilation}
If the true conditional belief $p(f_t\mid c_t)$ is representable by a GP, then there exists a HarBO compiler that exactly induces it through staged compilation.
\end{theorem}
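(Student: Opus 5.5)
The plan is constructive: given that the target belief $p(f_t\mid c_t)$ is a GP, say $\mathcal{GP}(m_t,\kappa_t)$ on $\mathcal{X}$, I will exhibit stage outputs for Z, D, GP and R such that the Bayesian engine reproduces exactly $(m_t,\kappa_t)$. The simplest choice pushes all the dependence on $c_t$ into the reusable prior, indexed through the latent covariate. Concretely, the compiler will route the whole context into $c^{ep}$, or equivalently let the latent space carry it. Stage Z outputs $\mathcal{Z}$ as a set of labels, for instance the set of possible contexts $c_t$ themselves, or the time indices $\{1,2,\ldots\}$ if the context sequence is fixed. Stage D outputs empty pseudo-data, $\mathcal{D}^{ep}=\emptyset$.

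Stage GP then defines $\mu^{\mathrm{pr}}(x,z)=m_z(x)$ and a joint kernel on $\mathcal{X}\times\mathcal{Z}$ given by $k^{\mathrm{pr}}((x,z),(x',z'))=\kappa_z(x,x')\,\mathbb{1}[z=z']$. This is a direct sum of positive semidefinite kernels on disjoint blocks, so it is positive semidefinite and passes the validator. Stage R resolves the current aleatory context to $z_t$ equal to the label of $c_t$, and assigns to every historical entry an effectively uninformative role. The cleanest way is to give historical points latent labels $z_i\neq z_t$, so that the block-diagonal kernel makes them independent of the slice at $z_t$. Alternatively, $\sigma_i$ can be set large. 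I prefer the first option because it is exact rather than a limit.

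Next I compute the posterior. With $\mathcal{D}_t=\mathcal{D}^{ep}\cup\mathcal{D}^{hist}_t$ and every datum lying in blocks other than $z_t$, the standard GP posterior formulas restricted to $\{(x,z_t)\}$ show that the cross-covariance term $k_{*}^\top(K+\Sigma)^{-1}(\cdot)$ vanishes. The posterior on the slice therefore equals the prior slice, namely $\mathcal{GP}(m_t,\kappa_t)$, which is $p(f_t\mid c_t)$. This shows the staged pipeline of \eqref{eq:epistemic-compilation}--\eqref{eq:aleatory-compilation} induces the target exactly.

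The main obstacle is conceptual rather than computational. I need to check that this construction respects the interfaces of Appendix~\ref{app:sequential-compilation}. In particular, the epistemic artefacts should be reusable and not recompiled each step, and the empirical-Bayes hyperparameter training in the engine must not alter the prior. For the latter, I will declare $\mu^{\mathrm{pr}},k^{\mathrm{pr}}$ with no free hyperparameters, so training is vacuous.

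If the formal statement requires $\mathcal{Z}$ to be fixed once from $c^{ep}$, I fall back to taking $\mathcal{Z}$ as the set of all admissible contexts, with $(m_c,\kappa_c)$ tabulated. This remains a legitimate, if impractical, compiler, which is all that an existence claim needs. A less degenerate variant that uses the history, writing $\kappa_t$ as a conditioned prior, is possible but not needed for existence.
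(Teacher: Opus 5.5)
Your construction is correct, but it takes a different route from the paper's. The rigorous version (Theorem~\ref{thm:app-faithful-compilation}) defines ``representable by a GP'' at the artefact level: there is a measurable, compiler-expressible map $g_t$ with $P_t^\star(\cdot\mid c)=\Pi_t(\cdot\mid g_t(c))$, where $\Pi_t$ already conditions on the data artefact. Existence is then immediate after lossless routing and factoring $g_t$ into Z/D/GP/R. The proof's real content is the pair of sufficiency statements $F_t\perp C_t\mid A_t$ and $I(F_t;A_t)=I(F_t;C_t)$, obtained from the tower property and the mutual-information chain rule using that $A_t$ is a function of $C_t$.

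The paper gives an explicit stage construction only for the distribution-valued extension to arbitrary targets. There it uses a singleton $\mathcal Z$, empty D, a GP stage that samples $h\sim P_t^\star(\cdot\mid c)$ and emits the degenerate $\mathrm{GP}(h,0)$, and an R stage returning the singleton state with $\mathrm{UNK}$ noise. Zero covariance makes the history uninformative, and the target is matched only after averaging over compiler randomness.

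You instead take the semantic reading: the target is some $\mathcal{GP}(m_t,\kappa_t)$ on $\mathcal X$. This forces you to confront a constraint that the paper's definition absorbs. The data artefact must contain the history, with only $(z_i,\sigma_i)$ under the compiler's control, so copying the target into the prior would double-count it. Your block-diagonal lift on $\mathcal X\times\mathcal Z$, with the history placed on labels $z_i\neq z_t$, removes this exactly. It gives a single deterministic, non-degenerate GP whose $z_t$-slice is unaffected by whatever noise model or jitter the engine applies to the data block. In effect you prove the direction the paper takes as a definition, in the spirit of its later remark lifting $f_t(x)=F(x,t)$ with $\mathcal Z=\{1,\ldots,T\}$.

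The paper's route buys universality (mixtures of degenerate GPs for non-GP targets) and the information identities. The identities also hold in your construction, since $A_t$ is a deterministic function of $C_t$ inducing the exact conditional law.

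Three details to tidy:
\begin{itemize}
\item R sees only $(\mathcal Z,c_i^{\mathrm{al}})$, so you must say how it obtains the label of $c_t$. In your fixed, tabulated-$\mathcal Z$ fallback this requires routing $c_t$ (or its label) into the aleatory stream, which lossless routing permits.
\item The validator demands strictly positive kernel diagonals, not merely positive semidefiniteness, so a target with zero variance somewhere would be rejected. This is harmless, because the existence claim does not assert validator acceptance.
\item Your concern about reusable epistemic artefacts is unnecessary. The theorem fixes a round $t$, and the paper's own construction routes the full $C_t$ into $C^{\mathrm{ep}}$ at that round.
\end{itemize}
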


We say the compiled belief $p(f_t\mid\mathcal M_t,\mathcal D_t)$ is \emph{calibrated} if it equals the true conditional belief $p(f_t\mid c_t)$. However, calibration is hardly achievable for practical LLM compilers. To quantify miscalibration, we define $\kappa_t:=\mathbb E\!\left[\mathrm{KL}\bigl(p(f_t\mid c_t)\,\|\,p(f_t\mid\mathcal M_t,\mathcal D_t)\bigr)\right]$ and $\bar\kappa_T:=T^{-1}\sum_{t=1}^T\kappa_t$, where $\mathrm{KL}$ denotes KL divergence. The following result extends the classic GP--UCB regret bound~\citep{srinivasGaussianProcessOptimization2010} to the case of miscalibrated beliefs.

\begin{theorem}[GP--UCB under compilation miscalibration]
\label{thm:miscalibration-regret}
If instantaneous regret is bounded by $\Delta_{\max}$ and the standard GP--UCB regularity conditions hold, the expected average regret satisfies
\[
\frac{\mathbb E R_T}{T}\lesssim
\sqrt{\frac{\beta_T\gamma_T^{\mathrm{cmp}}}{T}}
+\frac1T\sum_{t=1}^T\mathbb E\xi_t
+\Delta_{\max}\sqrt{\bar\kappa_T}.
\]
Here $\beta_T$ controls confidence width, $\gamma_T^{\mathrm{cmp}}$ is the compiled GP's maximum information gain, and $\xi_t$ is the acquisition-maximisation error.
\end{theorem}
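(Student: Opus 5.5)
The approach is a change of measure. We analyse GP--UCB as if the compiled belief $q_t:=p(f_t\mid\mathcal M_t,\mathcal D_t)$ were the true posterior, and then pay for the discrepancy between $q_t$ and the true conditional $p_t:=p(f_t\mid c_t)$ through Pinsker's inequality. Concretely, the instantaneous regret is $r_t=\max_x f_t(x)-f_t(x_t)$. Its conditional expectation given $c_t$ is taken under $p_t$, and $x_t$ is $c_t$-measurable, being produced by the compiler and engine from $c_t$ and possibly from independent internal randomness. We therefore write
\[
\mathbb E[r_t\mid c_t]=\mathbb E_{f_t\sim q_t}[r_t]+\bigl(\mathbb E_{p_t}[r_t]-\mathbb E_{q_t}[r_t]\bigr).
\]
Because $0\le r_t\le\Delta_{\max}$, the second term is at most $\Delta_{\max}\,\mathrm{TV}(p_t,q_t)\le\Delta_{\max}\sqrt{\tfrac12\mathrm{KL}(p_t\|q_t)}$. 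Taking expectations and applying Jensen's inequality twice, once for $\mathbb E\sqrt{\cdot}\le\sqrt{\mathbb E\,\cdot}$ per round and once for the concavity of the square root across rounds, the averaged error term becomes $\Delta_{\max}\sqrt{\bar\kappa_T}$, up to the constant absorbed by $\lesssim$.

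For the first term, we bound $\mathbb E_{q_t}[r_t]$ exactly as in the Bayesian GP--UCB analysis of \citet{srinivasGaussianProcessOptimization2010}, with $q_t$ as the posterior. Let $\hat\mu_t$ and $\hat\sigma_t$ be the compiled posterior mean and standard deviation, let $\mathrm{UCB}_t=\hat\mu_t+\beta_t^{1/2}\hat\sigma_t$, and let $x_t$ satisfy $\mathrm{UCB}_t(x_t)\ge\max_x\mathrm{UCB}_t(x)-\xi_t$. The standard regularity conditions (finite or discretised $\mathcal X$ and Lipschitz sample paths, together with a union bound) give, with $q_t$-probability at least $1-\delta/(\pi^2t^2/6)$, that $|f_t(x)-\hat\mu_t(x)|\le\beta_t^{1/2}\hat\sigma_t(x)$ for all $x$. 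On that event $r_t\le2\beta_t^{1/2}\hat\sigma_t(x_t)+\xi_t$. On the complement, $r_t\le\Delta_{\max}$, which contributes a summable $O(\Delta_{\max}/t^2)$ and hence $O(1/T)$ to the average, again absorbed. Summing over $t$ and using Cauchy--Schwarz with $\sum_t\hat\sigma_t^2(x_t)\le C\gamma_T^{\mathrm{cmp}}$ yields $\sum_t\mathbb E_{q_t}r_t\lesssim\sqrt{T\beta_T\gamma_T^{\mathrm{cmp}}}+\sum_t\mathbb E\xi_t$. Here $\gamma^{\mathrm{cmp}}_T$ is the maximum information gain of the compiled kernel with the compiled noise scales. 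Dividing by $T$ gives the first two terms of the bound.

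The main obstacle is the information-gain step. It requires the variances $\hat\sigma_t$ to come from a single GP updated sequentially on the queried points, so that $\sum_t\log(1+\sigma^{-2}\hat\sigma_t^2(x_t))$ telescopes into a mutual information. A compiler may, however, change $(\mu^{\mathrm{pr}},k^{\mathrm{pr}},z_t)$ between rounds. I would handle this by defining $\gamma_T^{\mathrm{cmp}}$ as a uniform bound over the compiled kernel family, namely the supremum of information gain over $T$ points for any kernel the compiler may output, with bounded noise $\sigma_i\ge\sigma_{\min}$. I would use the fact that adding the pseudo-data $\mathcal D^{ep}$ and the history only reduces posterior variance, so that $\hat\sigma_t(x_t)$ is dominated by the variance of the prior kernel conditioned on $x_1,\dots,x_{t-1}$ alone. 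The rigorous version of this monotonicity is deferred to Appendix~\ref{app:sequential-compilation}, and Theorem~\ref{thm:faithful-compilation} guarantees the $\kappa_t=0$ case recovers classic GP--UCB.
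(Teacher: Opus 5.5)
\paragraph{Verdict.} Your change-of-measure argument is correct and is essentially the paper's, but the information-gain step you flag is not closed by your proposed fix.

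\paragraph{Where your route matches the paper.} The paper applies total variation to the failure probability of the compiled confidence event, $P_t^\star(\mathcal E_t(a)^c\mid c)\le\delta_t+\mathrm{TV}_t$. You apply it to the bounded function $r_t$ directly. Both give the per-round penalty $\Delta_{\max}(\delta_t+\mathrm{TV}_t)$, followed by the same Pinsker and Jensen steps. What this actually needs is not that $x_t$ be $c_t$-measurable. It needs $F_t\perp A_t\mid C_t$, so that $\mathcal L(F_t\mid c_t,a_t)=p_t$ while $(c_t,a_t)$ fixes $x_t$. The paper assumes this explicitly, and your ``independent internal randomness'' amounts to it.

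\paragraph{The gap.} If the compiled kernel changes between rounds, a uniform bound on each family member's information gain does not control $\sum_t\hat\sigma_t^2(x_t)$. Each term is a posterior variance under a \emph{different} kernel, so $\sum_t\log(1+\hat\sigma_t^2(x_t)/\sigma^2)$ is not a log-determinant of any single Gram matrix. Monotonicity in the conditioning data only disposes of the pseudo-data, not the switching.

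A concrete counterexample: on a domain with at least $T$ points, the kernels $k_A(x,x')=\mathbf{1}[x,x'\in A]+\mathbf{1}[x,x'\notin A]$ have rank two. Hence each has information gain at most $\log(1+T/\sigma^2)$. Now let the compiler, at round $t$, choose $A_t=\{u_t\}$ for a fresh point $u_t$ and put a large prior mean there. UCB then selects $x_t=u_t$, which is uncorrelated with all earlier queries. So $\hat\sigma_t^2(x_t)=1$ and $\sum_t\hat\sigma_t^2(x_t)=T$.

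\paragraph{How the paper avoids this.} It does so by assumption, not by a lemma. It works within one epistemic epoch with hyperparameters held fixed. Every $m_{t-1},v_t$ comes from sequentially conditioning the same compiled GP, whose $k_0$ already absorbs $\mathcal D^{ep}$, at the pairs $(x_t,z_t)$. The Schur-complement telescoping of Lemma~\ref{lem:sequential-information} then applies. Only after this does it take $\gamma_T^{\mathrm{cmp}}$ uniform over compiler outputs, and recompilation starts a new epoch whose bound is added. You should adopt this assumption; the main-text statement folds it into ``standard regularity''.

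\paragraph{Smaller points.}
\begin{itemize}
\item The comparison $\sum_t v_t(x_t)\le C_{\mathrm{var}}\mathcal I_T$ needs $k_0(s,s)\le\kappa^2$ and an \emph{upper} bound $\nu_t\le\nu_{\max}$ on the noise, not a lower bound $\sigma_{\min}$. With very large noise, information gain is tiny while posterior variances stay near the prior variance.
\item Appendix~\ref{app:sequential-compilation} and Theorem~\ref{thm:faithful-compilation} are expressivity results. They supply neither the monotonicity claim nor the $\kappa_t=0$ reduction you cite them for.
\end{itemize}
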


Appendix~\ref{app:miscalibration-regret} gives the formal statement. This bound vanishes only if $\bar\kappa_T\to0$. We note that this condition may not be as strong as it seems. Even with a poor prior at $t=0$, $\bar\kappa_T$ can vanish as $T \to \infty$ if the prior allows the policy to collect future observations that are sufficiently informative to correct the miscalibration and the kernel is well specified, i.e., $f_t$ lies in the reproducing kernel Hilbert space (RKHS) of the kernel. Further, Appendix~\ref{app:stagewise-miscalibration} gives sufficient conditions for staging to reduce miscalibration, while Appendix~\ref{app:certified-misspecification} treats classical RKHS misspecification~\citep{bogunovicMisspecifiedGaussianProcess2021} as a special case whose worst-case bound has an unavoidable linear-in-$T$ term.

Even under calibration, the exploration term depends on $\gamma_T^{\mathrm{ctx}}$, the maximum information gain under the compiled GP representing $p(f_t\mid c_t)$. A sublinear bound requires $\beta_T\gamma_T^{\mathrm{ctx}}=o(T)$. Understanding how $\gamma_T^{\mathrm{ctx}}$ depends on the content of $c_t$ is therefore useful for a user who provides the generalised context. In particular, we ask whether adding a new visible variable to the context always reduces $\gamma_T^{\mathrm{ctx}}$ and thus tightens the regret bound. Theorem~\ref{thm:context-information-regret} gives a negative answer.

\begin{theorem}[Context information in the regret bound]
\label{thm:context-information-regret}
Let $C_{t,1}$ be the context already available and $C_{t,2}$ the newly visible variable, with realised values $c_{t,1}$ and $c_{t,2}$. Let $\gamma_T(c_{t,1})$ and $\gamma_T(c_{t,1},c_{t,2})$ denote the maximum information gains under the two beliefs $p(f_t\mid C_{t,1}=c_{t,1})$ and $p(f_t\mid C_{t,1}=c_{t,1},C_{t,2}=c_{t,2})$, respectively. Assuming calibration and the structural conditional-independence and regularity conditions, the information gain after adding $C_{t,2}$ satisfies
\[
\mathbb E_{C_{t,2}\mid C_{t,1}=c_{t,1}}\!\left[\gamma_T(c_{t,1},C_{t,2})\right]=\gamma_T(c_{t,1})-\Delta_T^{\mathrm{info}}+G_T^{\mathrm{adapt}}.
\]
Here $\Delta_T^{\mathrm{info}}$ is the reduction in information gain because the newly visible context removes uncertainty, whereas $G_T^{\mathrm{adapt}}$ is the increase in information gain because subsequent evaluations can adapt to that newly visible information.
\end{theorem}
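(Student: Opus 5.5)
This is essentially an exact accounting identity, so I will prove it by defining the two correction terms directly. The remaining work is to interpret them as information-theoretic quantities of the stated sign.

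\textbf{Step 1: the maximum information gain as a conditional mutual information.} Fix $c_{t,1}$. Under calibration, the belief $p(f_t\mid C_{t,1}=c_{t,1})$ is the GP induced by the compiler, as in Theorem~\ref{thm:faithful-compilation}. Its maximum information gain is
\[
\gamma_T(c_{t,1})=\max_{x_{1:T}} I(y_{1:T};f_t\mid C_{t,1}=c_{t,1}),
\]
with Gaussian noise. Likewise, $\gamma_T(c_{t,1},c_{t,2})$ is the same maximum computed under the refined belief.

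The structural conditional-independence condition will be taken to mean that $C_{t,2}$ affects the observations only through $f_t$, that is, $y\perp C_{t,2}\mid f_t,x$. Under this condition the chain rule gives, for any fixed non-adaptive design $x_{1:T}$,
\[
I(y_{1:T};f_t\mid c_{t,1})=I(y_{1:T};f_t\mid c_{t,1},C_{t,2})+I(y_{1:T};C_{t,2}\mid c_{t,1}),
\]
where the middle term is averaged over $C_{t,2}\mid c_{t,1}$. This follows from expanding $I(y;f_t,C_{t,2}\mid c_{t,1})$ in two ways and using $I(y;C_{t,2}\mid f_t,c_{t,1})=0$.

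\textbf{Step 2: define the two terms.} Let $x^\star_{1:T}$ be a maximiser for $\gamma_T(c_{t,1})$. Define
\[
\Delta_T^{\mathrm{info}}:=I(y_{1:T}(x^\star_{1:T});C_{t,2}\mid C_{t,1}=c_{t,1})\ge0,
\]
the information about $f_t$ that the new variable already reveals along the old optimal design. Then define
\[
G_T^{\mathrm{adapt}}:=\mathbb E_{C_{t,2}}\Bigl[\max_{x_{1:T}}I(y;f_t\mid c_{t,1},C_{t,2})-I(y(x^\star);f_t\mid c_{t,1},C_{t,2})\Bigr]\ge0.
\]
This is the gain from re-optimising the design after seeing $c_{t,2}$, and it is nonnegative by the definition of the maximum.

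Substituting Step 1 at $x^\star$ gives
\[
\mathbb E[\gamma_T(c_{t,1},C_{t,2})]=\gamma_T(c_{t,1})-\Delta_T^{\mathrm{info}}+G_T^{\mathrm{adapt}}
\]
exactly. Both terms have the advertised signs. Examples such as a variable revealing which region of $\mathcal X$ has high variance would show that $G_T^{\mathrm{adapt}}$ can dominate, which is the claimed negative answer.

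\textbf{Step 3: regularity and the main obstacle.} The regularity conditions serve two purposes. Finite measurable context spaces, or densities, make the mutual informations well defined. Compactness makes the maxima attained, and with noise variance bounded below, continuity of the Gaussian log-determinant in $x$ gives this.

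The main obstacle is that $p(f_t\mid c_{t,1})$ is a mixture over $C_{t,2}$ and hence not Gaussian. The paper defines $\gamma$ via the GP log-determinant formula, so one must check that $\gamma_T(c_{t,1})$ is the mutual information under the calibrated GP belief. If calibration is read as requiring each conditional to be a GP, the mixture must itself be a GP. My first attempt would be to assume, as part of the structural condition, that $C_{t,2}$ enters only through the mean, so that the mixture remains Gaussian when the mean shift is Gaussian. Failing that, I would state the identity with $\gamma$ defined as the true mutual information and note that it reduces to the log-determinant form for each GP conditional.
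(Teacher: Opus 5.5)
Your identity is correct. Your Step 1 is exactly the paper's fixed-design lemma. Write $g(A)=I(f_t;y_A\mid c_{t,1})$, $\bar g(A)=\mathbb E_{C_{t,2}\mid c_{t,1}}[I(f_t;y_A\mid c_{t,1},C_{t,2})]$ and $J(A)=I(y_A;C_{t,2}\mid c_{t,1})$. The chain rule together with $C_{t,2}\to f_t\to y_A$ then gives $g=\bar g+J$ for every fixed design.

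The genuine difference is where you split $\gamma_T(c_{t,1})-\mathbb E[\gamma_T(c_{t,1},C_{t,2})]$.

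\begin{itemize}
\item You anchor at the coarse maximiser $x^\star$. This gives $\Delta_T^{\mathrm{info}}=J(x^\star)$ and $G_T^{\mathrm{adapt}}=\mathbb E[\max_A I(f_t;y_A\mid c_{t,1},C_{t,2})]-\bar g(x^\star)$.
\item The paper anchors at $\max_A\bar g(A)$. It sets $\Delta_T^{\mathrm{info}}=\gamma_T(c_{t,1})-\max_A\bar g(A)$, which is non-negative because $\bar g\le g$ pointwise, and which is only sandwiched between $\min_A J(A)$ and $\max_A J(A)$. It sets $G_T^{\mathrm{adapt}}$ to the same expected maximum minus $\max_A\bar g(A)$, which is non-negative because an expected maximum dominates the maximum of the expectation.
\end{itemize}

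Each of your two terms exceeds the paper's by $\max_A\bar g(A)-\bar g(x^\star)\ge0$. So $\Delta_T^{\mathrm{info}}-G_T^{\mathrm{adapt}}$, and hence the tightening criterion, is unchanged.

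What your split buys is a single explicit mutual information for $\Delta_T^{\mathrm{info}}$. What it loses is canonicity, since ties in $x^\star$ change the terms. More substantively, it loses the adaptivity reading of $G_T^{\mathrm{adapt}}$. Suppose the refined covariance does not depend on the realised $c_{t,2}$, as with Gaussian pseudo-data at fixed locations. Then the paper's $G_T^{\mathrm{adapt}}$ is zero. Yours is positive whenever the pseudo-data merely move the information-maximising design away from $x^\star$, which is a deterministic re-optimisation, not adaptation to newly visible information. The paper's $G_T^{\mathrm{adapt}}$ is exactly the Jensen gap between $\mathbb E\max$ and $\max\mathbb E$, and it vanishes precisely when the two commute.

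On Step 3, the paper takes your fallback route. It defines these quantities as true conditional mutual informations and simply assumes that both the coarse and the refined conditionals are calibrated GPs.

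You should drop your first option, letting $C_{t,2}$ enter only through the mean. It makes the refined covariance independent of $c_{t,2}$, so $\gamma_T(c_{t,1},c_{t,2})=\max_A\bar g(A)\le\gamma_T(c_{t,1})$ for every realisation. The negative answer could then never arise, and your own high-variance-region example violates that assumption.

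Gaussianity of both conditionals is nonetheless compatible with a covariance that varies in $c_{t,2}$. For example, let $C_{t,2}$ randomly select which region receives a Gaussian pseudo-observation.
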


Consequently, for equal acquisition-error and failure terms, the expected exploration upper bound tightens whenever $\Delta_T^{\mathrm{info}}>G_T^{\mathrm{adapt}}$. This conveys a core insight that useful context should make objective outcomes more predictable, rather than merely redirecting the optimiser towards a different but still uncertain region. Appendix~\ref{app:context-information-regret} gives the precise conditions, definitions, and regret comparison for nested context-generated $\sigma$-algebras.

\section{Experiments}

We conduct comprehensive experiments to study the effectiveness of LLM-compiled BO in integrating generalised context. First, for controlled experiments, we use five variants of a synthetic multi-peak objective whose peak locations are hidden but whose generative structure is described in context. \textbf{(A)} provides structural statistics about the peak locations and heights; \textbf{(B)} adds noisy pilot observations; \textbf{(C)} reports a heterogeneous observation-noise scale at each step; \textbf{(D)} reports a drifting height centre that causes non-stationary peak heights; and \textbf{(E)} combines all the signals above.

Next, the real-world environments comprise \textbf{(H)} single-task XGBoost HPO~\citep{eggenspergerHPOBenchCollectionReproducible2022} with dataset metadata and qualitative guidance; \textbf{(HM)} a multi-task variant of H with dynamically changing datasets; \textbf{(Dock)} KRAS G12D docking with receptor and SMILES task information and qualitative molecular priors~\citep{trottAutoDockVina2010}; and \textbf{(O-Suzuki)} categorical Suzuki reaction optimisation~\citep{haseOlympusBenchmarkingFramework2021}, with chemical knowledge, factor encoding, and a mapping from category codes to reagent identities.

These environments span diverse settings, including structural priors, point evidence, observation reliability, changing covariates, and non-Euclidean design spaces. Context is task-oriented and avoids GP-specific guidance. We use \texttt{DeepSeek-V4-Flash-0731} for the LLM. For each setting, we show the mean and standard deviation over five fixed policy seeds for $T=50$ iterations. Appendices~\ref{app:environments} and \ref{app:hyperparams} give detailed environment descriptions and experimental configurations, respectively.

\subsection{Performance comparison}
\label{sec:comparison}

We compare HarBO with diverse baselines: \textbf{Vanilla GP-UCB}, a hardcoded Mat\'ern-$5/2$ GP-UCB~\citep{srinivasGaussianProcessOptimization2010}; two LLM-embedding-based methods, including \textbf{Embedding-CGP-UCB}, a variant of CGP-UCB~\citep{krauseContextualGaussianProcess2011} that places a context embedding in a joint GP, and \textbf{Embedding-NNAGP-UCB}, an LLM-embedding extension of a neural contextual GP~\citep{zhangContextualGaussianProcess2023}; and \textbf{Random} search. We also evaluate four LLM-in-the-loop BO methods, whose prompts share the same context as HarBO: \textbf{CAKE} uses an LLM for adaptive kernel evolution~\citep{suwandiAdaptiveKernelDesign2025}, \textbf{LGBO} uses an LLM to design point/region preferences~\citep{yuanUnleashingLLMsBayesian2026}, \textbf{LLAMBO} uses an LLM to generate candidates and surrogate scores~\citep{liuLargeLanguageModels2024}, and \textbf{LABO} uses an LLM as a low-fidelity evaluator~\citep{chenLABOLLMAcceleratedBayesian2026}. We exclude these loop-based LLM baselines from Dock because their implementations do not support the SMILES design space.

\begin{figure}[t]
\centering
\includegraphics[width=\textwidth]{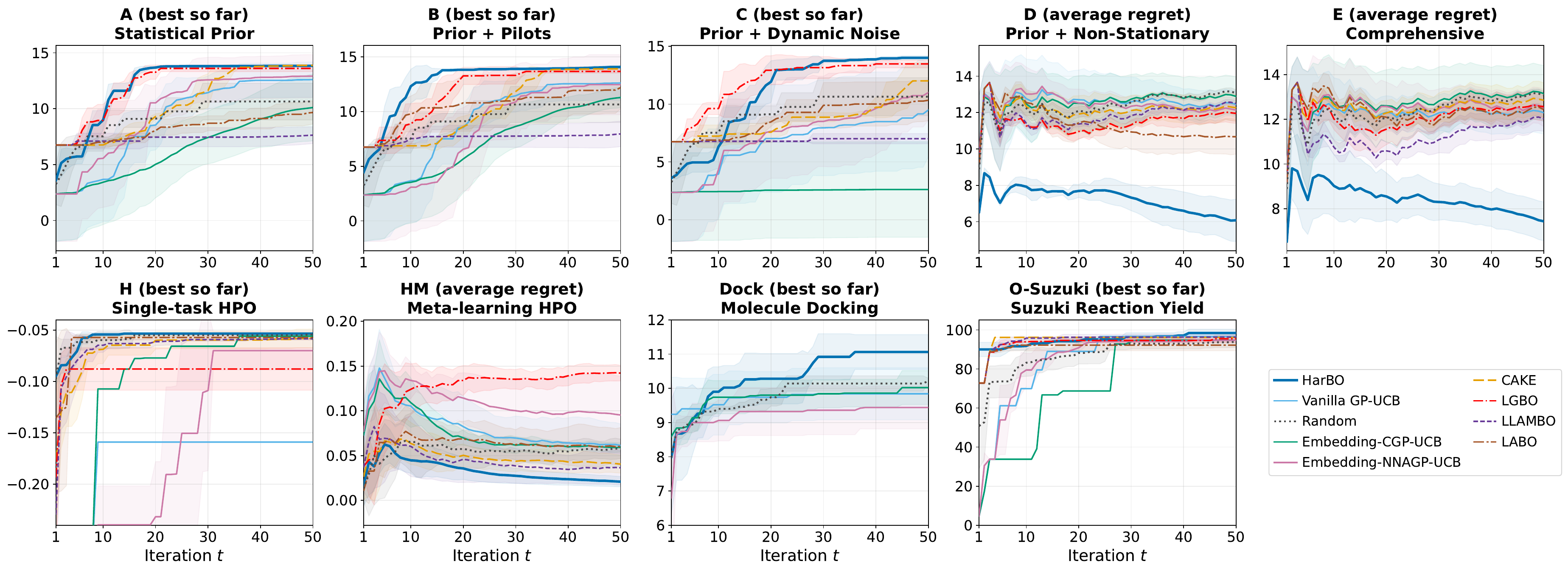}
\vspace{-1.5em}
\caption{In-process comparison against baselines.}
\label{fig:inprocess-baselines}
\end{figure}

Results are shown in Figure~\ref{fig:inprocess-baselines}, and the detailed table of final metrics is in Appendix~\ref{app:supplementary}. Stationary environments plot denoised best-so-far $y$; D, E, and HM plot cumulative average regret $R_t/t$. Across the nine environments, HarBO is the only method that remains consistently competitive as the type of context and design space changes, with particularly clear trajectory-level advantages on B, D, E, HM, and Dock. Some baselines match or exceed it in specific periods of individual settings, but their gains do not transfer reliably across scenarios.

D, E, and HM reveal this distinction most clearly. Most baselines assume stationarity, treating changing context as unexplained variation. Several methods still perform well in HM because some good designs may be shared across HPO tasks, whereas D and E require the optimiser to resolve the latent state before conditioning its belief. Both embedding baselines perform poorly on D and E despite receiving changing context through their GP inputs, suggesting that semantic embeddings may not reliably transmit numerical information to the posterior.

LLM-in-the-loop methods can be strong when a scenario matches the role assigned to the LLM---for example, LGBO on A and C, and CAKE on O-Suzuki---but remain uneven across settings, as LGBO on H's discrete grid illustrates. Their specialised interfaces admit only selected forms of context. HarBO instead compiles diverse context into standard GP-UCB or CGP-UCB programs~\citep{srinivasGaussianProcessOptimization2010,krauseContextualGaussianProcess2011} while remaining competitive across environments.

\subsection{Context ablations}
\label{sec:context-interference}

\begin{figure}[t]
\centering
\includegraphics[width=\textwidth]{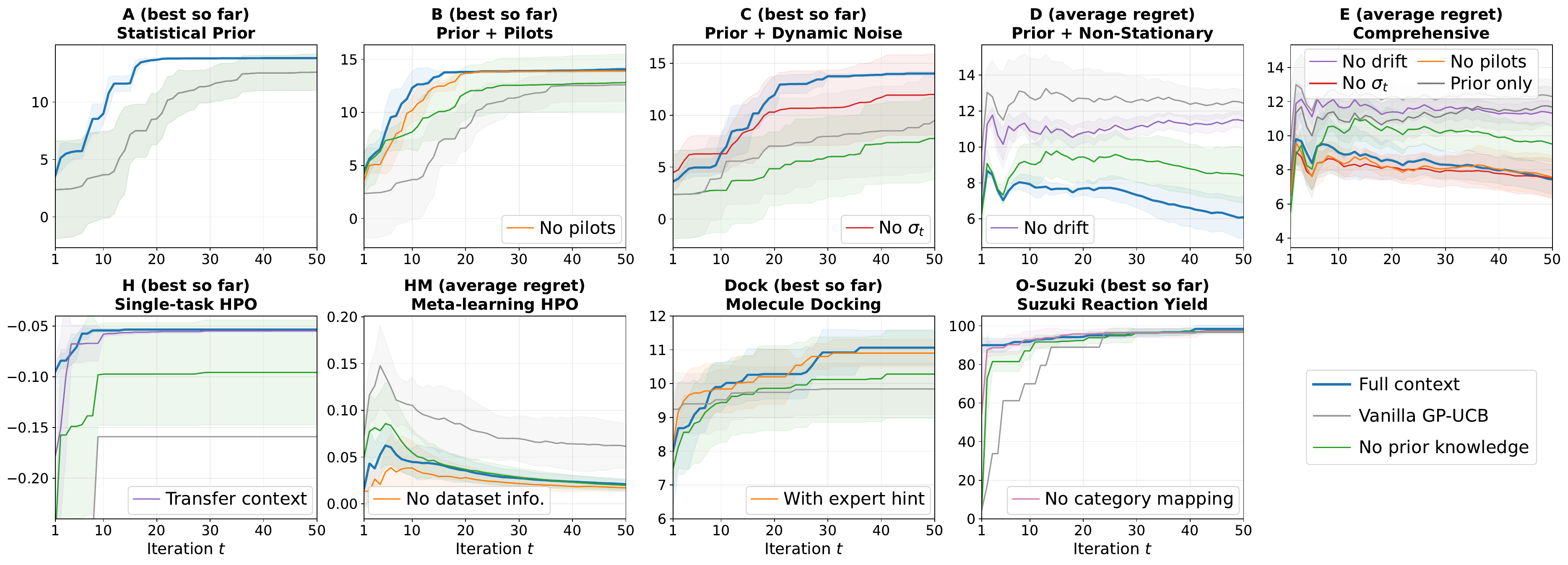}
\vspace{-1.5em}
\caption{Context ablations in the nine in-process environments.}
\label{fig:inprocess-ablations}
\end{figure}

To study the contribution of different context signals, we remove each signal from the context for In-Process HarBO. The prior knowledge is removed for all environments. For B, C, and D, the pilot observations, dynamic noise, and drift information are removed, respectively, and all these signals are removed in E. For H, we add a \emph{transfer-learning} variant, where the prior knowledge is replaced with a history of real observations selected from other HPO tasks. For HM, we remove the dynamic dataset information. For Dock, we add an expert hint about GP design. For O-Suzuki, we remove the mapping from category indices to their reagent identities in the epistemic context.

As shown in Figure~\ref{fig:inprocess-ablations}, most removals worsen optimisation, showing that HarBO can effectively utilise diverse context signals in the compiled BO program. The transfer-learning variant on H shows that HarBO can summarise experience from other tasks as a GP prior or pseudo-data. Adding the expert hint on Dock does not improve the trajectory, suggesting that GP-specific guidance is unnecessary for HarBO in this setting. Two ablations deviate from the overall pattern. On E, the no-pilot and no-$\sigma_t$ variants are slightly better during early iterations, although this advantage disappears in the final metric; one possible explanation is that the combined signals in E increase compilation complexity and miscalibration risk. The improvement of the no-dataset-information variant on HM is qualitatively consistent with Theorem~\ref{thm:context-information-regret}: when HPO tasks share a similar design--performance structure, the adaptation cost may outweigh the information benefit, although neither quantity is directly measurable in this experiment.

\subsection{Agentic case studies}
\label{sec:agentic-case-study}

\begin{figure}[t]
\centering
\includegraphics[width=0.8\textwidth]{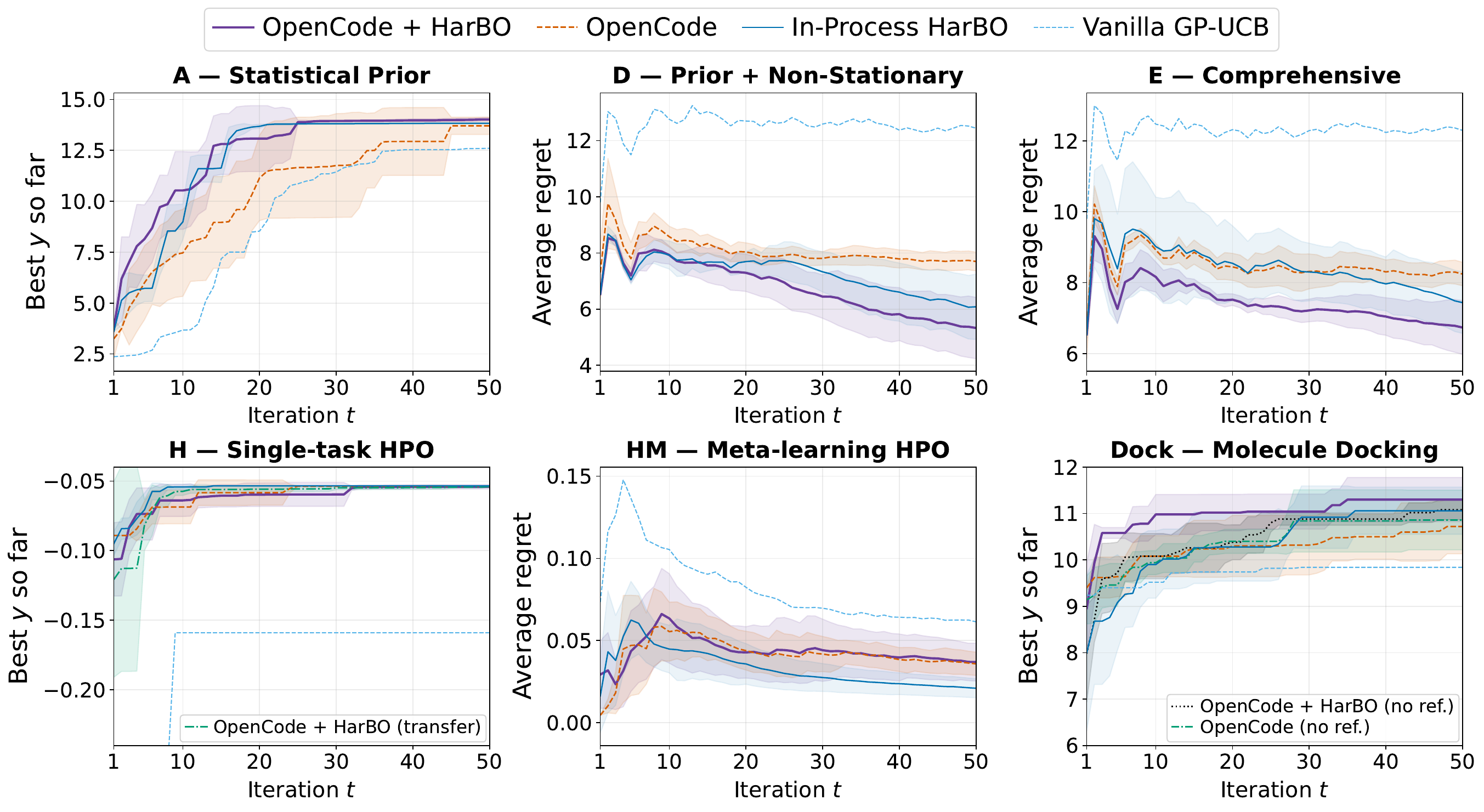}
\vspace{-0.5em}
\caption{Agentic campaigns compared with in-process HarBO and Vanilla GP-UCB.}
\label{fig:agentic}
\end{figure}

To study whether general coding harnesses (with and without HarBO) can reliably compile BO with generalised context, we run agentic campaigns on A, E, D, H, HM, and Dock. We choose OpenCode (v1.18.16)~\citep{opencode2025} as the harness framework for its open-source reproducibility. Similar to Section~\ref{sec:context-interference}, we also include a transfer-learning variant for H; however, the previous-task history is now provided through a Markdown log file. For Dock, we additionally provide a reference PDF of the review article by \citet{kumarOverviewKRASG12D2026}, which the agent must extract and maintain as epistemic context.

The results are shown in Figure~\ref{fig:agentic}, where the agentic campaigns are compared with in-process HarBO and Vanilla GP-UCB references. These campaigns also test semantic routing from open-ended inputs, which the in-process benchmark leaves to its environment. In the simplest environment A, agentic HarBO shows similar performance to in-process HarBO. In contrast, a clear performance gain is observed in E and D, suggesting that persistent state and code editing can benefit compilation in complex contexts. The reference paper for Dock also yields a trajectory-level improvement for agentic HarBO, suggesting that the agent can extract useful epistemic information from the literature. Without HarBO, the general coding harness clearly outperforms Vanilla GP-UCB and shows strong performance in real-world benchmarks including H, HM, and Dock. However, it falls short in the custom synthetic environments (A, D, E) without HarBO. In such cases, the general coding harness tends to produce generic BO programs with a constant mean and kernel, regardless of the context.

\subsection{Supplementary results}

Here we provide a brief summary of supplementary results available in Appendix~\ref{app:supplementary}.
\textbf{(1) LLM cost:} Among the evaluated LLM-based methods, HarBO is the only one that can practically afford reasoning, as most compiled artefacts are reusable; however, enabling reasoning for other baselines multiplies wall time and token use with limited performance returns (Appendix~\ref{app:thinking-cost} and Appendix~\ref{app:walltime}).
\textbf{(2) Ablations of LLM back-end:} Coding ability and reasoning effort both matter. \texttt{DeepSeek-V4-Flash-0731} with high reasoning effort is the most reliable configuration; lowering reasoning effort or switching to \texttt{GPT-4o-mini} degrades performance and produces more frequent validation errors and terminal failures (Appendix~\ref{app:model-ablations}).
\textbf{(3) Effectiveness of validation:} Verification and bounded retries raise the counterfactual first-attempt compilation success rate from 80.0\% to 100\% for high reasoning, and from 53.3\% to 97.8\% for low reasoning (Appendix~\ref{app:validator-errors}). \textbf{(4) Agentic demo:} Appendix~\ref{app:dock-agentic-demo} provides an end-to-end scientist-copilot session using OpenCode and HarBO, demonstrating literature-to-code compilation and utility-channel instruction following.
\textbf{(5) Verbatim compiled artefacts:} Appendix~\ref{app:case-studies} reports representative compiled artefacts.

\section{Conclusion}

This work studies whether LLM-compiled BO can reliably incorporate generalised context. To address this question, we formulate BO with generalised context and introduce HarBO, a BO-specific harness that compiles context through a staged workflow. Under compilation miscalibration, we derive an expected-regret bound that is sublinear only when average KL miscalibration vanishes; we also show that adding context need not tighten the exploration bound. Experiments demonstrate HarBO's ability to integrate diverse context signals consistently in both in-process and agentic settings. They also expose the limitations of general-purpose coding agents, which perform well on familiar real-world tasks but struggle with novel, context-rich environments. Overall, our results suggest that LLM-compiled BO can reliably incorporate generalised context when supported by a harness designed specifically for BO.

In this work, we have focused on standard sequential GP--BO, which is intended to set up a foundational framework for LLM-compiled BO. Nevertheless, extending HarBO to more advanced BO variants is a natural direction for future work. We also expect to develop analysis and principled frameworks for the utility channel. A limitation of HarBO is that, although validation catches fatal compilation errors, faithful compilation relies mainly on prompting and soft constraints from the staged workflow. Complex epistemic context can therefore overwhelm the LLM and increase miscalibration risk, as observed in our experiments. Future work could address this limitation through stricter mathematical validation of compiled beliefs, including RKHS-based checks of assumptions and residuals in the classical misspecified setting. More broadly, HarBO provides insights for designing scientific agents with principled optimisation capabilities; jointly formulating a coding harness and the algorithm it authors may also reveal meaningful research questions beyond BO.

\bibliography{harbo}
\bibliographystyle{plainnat}

\appendix
\numberwithin{equation}{section}
\renewcommand{\theequation}{\thesection\arabic{equation}}

\section{Theory: proofs and technical qualifications}
\label{app:theory}

This appendix makes explicit the probability spaces, filtrations, and uniformity assumptions behind Section~\ref{sec:theory}. It treats expressivity, regret under compilation miscalibration, context information, stagewise control, and classical RKHS misspecification in the order used by the main text.

\subsection{Expressivity and information preservation}
\label{app:sequential-compilation}

This subsection formalises the expressivity result in Section~\ref{sec:theory} without assuming that an unspecified final kernel already equals the target belief. Fix a round $t$. Let $C_t$, $F_t$, and $A_t=(\mathcal M_t,\mathcal D_t)$ denote the random visible context, target function, and compiled belief artefacts, taking values in standard Borel spaces $\mathsf C_t$, $\mathsf F_t$, and $\mathsf A_t$. Let $P_t^\star(df\mid c):=\mathcal L(F_t\mid C_t=c)$ be a regular conditional law and let $\Pi_t(df\mid a)$ be the measurable Bayesian map that returns the GP belief induced by artefacts $a$. The main-text phrase ``representable by a GP'' means that there is a measurable, compiler-expressible artefact map $g_t:\mathsf C_t\to\mathsf A_t$ such that $P_t^\star(\cdot\mid c)=\Pi_t(\cdot\mid g_t(c))$. For a broader ideal extension, a target kernel is distribution-valued HarBO-representable when there is a measurable compiler kernel $Q_t:\mathsf C_t\leadsto\mathsf A_t$, supported on compiler-expressible artefacts, such that
\begin{equation}
P_t^\star(E\mid c)=\int_{\mathsf A_t}\Pi_t(E\mid a)\,Q_t(da\mid c)
\quad\text{for every measurable }E\subseteq\mathsf F_t.
\label{eq:app-distribution-valued-representability}
\end{equation}

\begin{theorem}[Expressivity and lossless staged compilation: rigorous version]
\label{thm:app-faithful-compilation}
If the target has a deterministic single-GP representation $P_t^\star(\cdot\mid c)=\Pi_t(\cdot\mid g_t(c))$ for a measurable, compiler-expressible artefact map $g_t$, then it admits a losslessly routed Z/D/GP/R compiler and $A_t=g_t(C_t)$ satisfies
\begin{equation}
F_t\perp C_t\mid A_t,\qquad I(F_t;C_t\mid A_t)=0,\qquad I(F_t;A_t)=I(F_t;C_t),
\label{eq:app-information-preservation}
\end{equation}
where the mutual-information equality is asserted when both sides are finite.
In the ideal distribution-valued extension, assume there is a measurable artefact embedding $e_t:\mathsf F_t\to\mathsf A_t$ such that $\Pi_t(\cdot\mid e_t(h))=\delta_h$; in the ideal GP language, $e_t(h)$ uses the degenerate component $\operatorname{GP}(h,0)$. Then every measurable target kernel $P_t^\star:\mathsf C_t\leadsto\mathsf F_t$ is distribution-valued HarBO-representable through a losslessly routed Z/D/GP/R compiler.
\end{theorem}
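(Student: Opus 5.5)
The plan has two parts: construct the staged compiler from $g_t$, then deduce the information identities from the fact that $A_t$ is a deterministic function of $C_t$. For the first part, I write $g_t(c)=(\mathcal M(c),\mathcal D(c))$ with $\mathcal M(c)=(\mu^{\mathrm{pr}}(c),k^{\mathrm{pr}}(c),\mathcal Z(c),z_t(c))$. A losslessly routed compiler lets every stage read the full context. So I route all of $c$ into the epistemic stream, and into the aleatory stream whatever is needed to produce $(z_i,\sigma_i)$. Then:
\begin{itemize}
\item stage Z outputs $\mathcal Z(c)$;
\item stage D outputs the pseudo-data part of $\mathcal D(c)$;
\item stage GP outputs $(\mu^{\mathrm{pr}}(c),k^{\mathrm{pr}}(c))$;
\item stage R outputs the per-point pairs $(z_i,\sigma_i)$ and $z_t(c)$.
\end{itemize}
Each stage map is a coordinate projection of $g_t$ composed with the routing, so it is measurable. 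Compiler-expressibility of $g_t$ makes every stage output a valid artefact. The composite compiler therefore equals $g_t$, and $\Pi_t(\cdot\mid A_t)=P_t^\star(\cdot\mid C_t)$ almost surely.

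For the conditional independence, I use that $A_t$ is $\sigma(C_t)$-measurable. Fix a measurable $E$. The tower property gives $\mathbb P(F_t\in E\mid A_t)=\mathbb E[P_t^\star(E\mid C_t)\mid A_t]=\mathbb E[\Pi_t(E\mid A_t)\mid A_t]=\Pi_t(E\mid A_t)$. Since $\sigma(A_t)\subseteq\sigma(C_t)$, this yields $\mathbb P(F_t\in E\mid C_t,A_t)=\mathbb P(F_t\in E\mid C_t)=\mathbb P(F_t\in E\mid A_t)$, which is exactly $F_t\perp C_t\mid A_t$. Hence $I(F_t;C_t\mid A_t)=0$. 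The chain rule then gives $I(F_t;C_t)=I(F_t;C_t,A_t)=I(F_t;A_t)+I(F_t;C_t\mid A_t)=I(F_t;A_t)$. Here $I(F_t;C_t,A_t)=I(F_t;C_t)$ because $A_t$ is a function of $C_t$, and the chain rule on standard Borel spaces is the Kolmogorov--Dobrushin version, valid when the terms are finite.

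For the distribution-valued extension, I take $Q_t(da\mid c):=P_t^\star(\cdot\mid c)\circ e_t^{-1}$, the pushforward of the target law through the embedding. Then $\int\Pi_t(E\mid a)\,Q_t(da\mid c)=\int\Pi_t(E\mid e_t(h))\,P_t^\star(dh\mid c)=\int\delta_h(E)\,P_t^\star(dh\mid c)=P_t^\star(E\mid c)$. For measurability, $c\mapsto Q_t(B\mid c)=P_t^\star(e_t^{-1}(B)\mid c)$ is measurable, and $a\mapsto\Pi_t(E\mid a)$ is measurable by assumption, so the change of variables is legitimate. Realising $Q_t$ as a staged compiler follows the same routing: the GP stage emits the degenerate $\operatorname{GP}(h,0)$ with $h$ sampled from $P_t^\star(\cdot\mid c)$, implemented via a randomisation variable, and the other stages emit trivial or constant artefacts.

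I expect the main obstacle to be bookkeeping rather than an inequality. The step needing care is showing that the factorisation into the Z/D/GP/R stages is itself measurable and respects the stage input signatures. Each later stage must be allowed to depend on $c$ and the earlier outputs, which lossless routing guarantees. For the randomised compiler, I also need a measurable randomisation, obtained by representing $Q_t$ through a measurable function of $c$ and an independent uniform variable (standard Borel noise-outsourcing).
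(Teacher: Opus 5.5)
Your proposal is correct and follows essentially the same route as the paper. You factor $g_t$ through losslessly routed Z/D/GP/R stages, obtain $F_t\perp C_t\mid A_t$ from the tower property, and derive $I(F_t;A_t)=I(F_t;C_t)$ from the chain rule using that $A_t$ is a function of $C_t$. You then push $P_t^\star$ through $e_t$ and realise the result with trivial Z/D/R stages and a GP stage that emits $\operatorname{GP}(h,0)$. The one cosmetic difference is that you implement the randomised GP stage by noise outsourcing, whereas the paper writes the stages directly as Markov kernels whose iterated product pushes forward to $Q_t$.
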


\begin{proof}[Proof of Theorem~\ref{thm:app-faithful-compilation}]
For the deterministic single-GP claim, route the complete belief-relevant context losslessly and factor the compiler-expressible map $g_t$ into its successive Z/D/GP/R artefacts. Set $A_t=g_t(C_t)$. For every bounded measurable $\varphi$, exact representation and the tower property give $\mathbb E[\varphi(F_t)\mid C_t,A_t]=\int\varphi(f)\Pi_t(df\mid A_t)=\mathbb E[\varphi(F_t)\mid A_t]$, which is $F_t\perp C_t\mid A_t$ and hence $I(F_t;C_t\mid A_t)=0$. Because $A_t$ is a deterministic function of $C_t$, $I(F_t;A_t\mid C_t)=0$. Applying the mutual-information chain rule to $I(F_t;C_t,A_t)$ in the two orders yields $I(F_t;A_t)=I(F_t;C_t)$.

For the distribution-valued extension, standard Borelness guarantees the regular conditional kernel $P_t^\star$ and measurability of its compositions. Push this kernel through the artefact embedding:
\begin{equation}
Q_t(da\mid c):=\int_{\mathsf F_t}\delta_{e_t(h)}(da)\,P_t^\star(dh\mid c).
\label{eq:app-compiler-kernel}
\end{equation}
For any measurable $E\subseteq\mathsf F_t$, the defining property of $e_t$ gives
\begin{equation}
\int_{\mathsf A_t}\Pi_t(E\mid a)Q_t(da\mid c)
=\int_{\mathsf F_t}\delta_h(E)P_t^\star(dh\mid c)
=P_t^\star(E\mid c),
\label{eq:app-kernel-recovery}
\end{equation}
which proves universal distribution-valued representability in the ideal artefact language.

This kernel has an explicit HarBO stage factorisation. Route the complete belief-relevant context losslessly to $C^{\mathrm{ep}}=C_t$. Let Z return a singleton latent space, let D return no pseudo-data, let the GP stage sample $h\sim P_t^\star(\cdot\mid C^{\mathrm{ep}})$ and emit mean $h$ with zero covariance, and let R return the singleton state with the $\mathrm{UNK}$ noise marker. In kernel notation, these stages are
\[
\begin{aligned}
K_Z&=\delta_{\{\ast\}},\qquad K_D=\delta_{\varnothing},\\
K_{\mathrm{GP}}(dm\mid c)&=\int\delta_{(h,0)}(dm)P_t^\star(dh\mid c),\qquad K_R=\delta_{(\ast,\mathrm{UNK})}.
\end{aligned}
\]
Their iterated product pushes forward to~\eqref{eq:app-compiler-kernel}, so the Z/D/GP/R factorisation adds no restriction beyond the expressivity of the ideal artefact language. More generally, standard-Borel disintegration factorises any measurable joint artefact kernel into successive conditional kernels once the losslessly routed context and preceding artefacts are included among the corresponding stage inputs. The theorem does not assert that a current compiler can realise these kernels.
\end{proof}

The singleton construction covers a stationary target. For any finite-horizon non-stationary process, take $\mathcal Z=\{1,\ldots,T\}$, resolve $z_t=t$, and define the stable lifted function $F(x,t)=f_t(x)$; applying the distribution-valued construction to the joint law of $F$ shows that the representation $f_t(x)=F(x,z_t)$ is expressively lossless. For operational HarBO, the corresponding joint conditional belief must instead admit a single-GP representation. Neither statement implies that a useful low-dimensional latent space is easy to compile.

The universal construction is deliberately ideal: it permits an arbitrary compiler-expressible mean $h$, zero covariance, and a distribution over compiled GP components. Equation~\eqref{eq:app-distribution-valued-representability} matches the target only after averaging over compiler randomness; it does not assert that every realised artefact induces $P_t^\star(\cdot\mid c)$. Operational HarBO validates and deploys one non-degenerate GP, so its exact target family is the deterministic single-GP family above; outside that family, validation guarantees structural executability but neither exact compilation nor calibration. The information identity concerns target-relevant information, $I(F_t;A_t)=I(F_t;C_t)$, rather than $I(C_t;A_t)$, which may additionally count irrelevant context.

\subsection{GP--UCB under compilation miscalibration}
\label{app:contextual-gpucb-bound}
\label{app:miscalibration-regret}

This subsection first gives the calibrated GP--UCB base case and then transfers its expected-regret bound to a potentially miscalibrated compiled belief. Fix an epistemic epoch and its compiled GP, with covariance hyperparameters held fixed for the bound. Let $m_0$ and $k_0$ denote the effective initial mean and covariance obtained after conditioning the compiled prior $(\mu^{\mathrm{pr}},k^{\mathrm{pr}})$ on the fixed epistemic pseudo-data $\mathcal D^{ep}$. At round $t$, the generalised context $c_t$ is an arbitrary random element: the proof makes no assumption about its representation or components and uses it only through $p(f_t\mid c_t)$. Calibration means
\[
p(f_t\mid\mathcal M_t,\mathcal D_t)=p(f_t\mid c_t).
\]
For the contextual-GP representation, $z_t$ and $\nu_t=\sigma_t^2$ are fixed before $x_t$ is selected. Any $\mathrm{UNK}$ marker must therefore first be resolved by the engine to a positive numerical variance. Set $s_t(x)=(x,z_t)$ and
\[
 v_t(x)=k_{t-1}(s_t(x),s_t(x)).
\]
Because the calibrated belief is a GP, its candidate-wise marginal satisfies
\begin{equation}
 F(s_t(x))\mid(\mathcal M_t,\mathcal D_t)
 \sim\mathcal N\!\left(m_{t-1}(s_t(x)),v_t(x)\right).
\label{eq:app-calibrated-gp-marginal}
\end{equation}

For any deterministic evaluation sequence $\mathbf x_{1:T}=(x_1,\ldots,x_T)\in\mathcal X^T$, condition on the compiled covariance and any admissible resolved state and noise trajectory, and set
\[
 [K_T]_{ij}=k_0(s_i(x_i),s_j(x_j)),\qquad
 V_T=\operatorname{diag}(\nu_1,\ldots,\nu_T),
\]
\[
 \mathcal I_T(\mathbf x_{1:T})=\frac12\log\det\!\left(
 I+V_T^{-1/2}K_TV_T^{-1/2}\right),
 \qquad
 \gamma_T^{\mathrm{ctx}}=\sup_{\mathbf x_{1:T}\in\mathcal X^T}\mathcal I_T(\mathbf x_{1:T}).
\]
The deterministic sequence $\mathbf x_{1:T}$ is only an index in this supremum; it is neither assumed to be part of $c_t$ nor identified with the observed history. For the compiled GP, $\mathcal I_T(\mathbf x_{1:T})$ is the mutual information between $F$ and noisy evaluations at these design--state pairs. Thus $\gamma_T^{\mathrm{ctx}}$ is determined by the compiled covariance, resolved states, and noise scales, irrespective of the form of $c_t$.

\begin{theorem}[GP--UCB under generalised context: rigorous version]
\label{thm:app-contextual-gpucb}
Let $\mathcal X$ be finite. Conditional on the compiled model, assume the calibration equality above, \eqref{eq:app-calibrated-gp-marginal}, independent Gaussian observation noise with resolved variances $0<\nu_t\leq\nu_{\max}$, and $k_0(s,s)\leq\kappa^2$. Suppose $x_t$ maximises
$m_{t-1}(s_t(x))+\sqrt{\beta_t v_t(x)}$ to additive error $\xi_t$, where
\[
 \beta_t=2\log\!\left(\frac{\pi^2t^2|\mathcal X|}{3\delta}\right),
 \qquad
 C_{\mathrm{var}}=
 \frac{2\kappa^2}{\log(1+\kappa^2/\nu_{\max})}.
\]
Then, with probability at least $1-\delta$, simultaneously for all $t\leq T$,
\[
\begin{aligned}
r_t&:=f_t(x_t^\star)-f_t(x_t)\leq2\sqrt{\beta_t v_t(x_t)}+\xi_t,\\
R_T&\leq2\sqrt{C_{\mathrm{var}}T\beta_T\gamma_T^{\mathrm{ctx}}}+\sum_{t=1}^T\xi_t.
\end{aligned}
\]
\end{theorem}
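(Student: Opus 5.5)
The plan is to follow the Bayesian GP--UCB argument of \citet{srinivasGaussianProcessOptimization2010} for finite $\mathcal X$, with calibration as the bridge: the compiled posterior is the true conditional law. Concretely, conditional on the compiled model and the history up to round $t-1$ (including the resolved $z_t$ and $\nu_t$, which are fixed before $x_t$ is chosen), \eqref{eq:app-calibrated-gp-marginal} gives $f_t(x)=F(s_t(x))\sim\mathcal N(m_{t-1}(s_t(x)),v_t(x))$ for every $x\in\mathcal X$.

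\textbf{Step 1: confidence event.} The Gaussian tail bound $\Pr(|Z|>\sqrt{\beta})\le e^{-\beta/2}$ for standard normal $Z$, together with a union bound over $x\in\mathcal X$, gives for each $t$
\[
\Pr\bigl(\exists x:\ |f_t(x)-m_{t-1}(s_t(x))|>\sqrt{\beta_t v_t(x)}\bigr)\le|\mathcal X|e^{-\beta_t/2}=\frac{3\delta}{\pi^2t^2}.
\]
Taking the tower property over histories and summing $\sum_t 3\delta/(\pi^2 t^2)=\delta/2\le\delta$ yields a single event $E$ of probability at least $1-\delta$ on which all intervals hold simultaneously. The only delicate point is that $x_t$ is history-dependent while $f_t$ is random. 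This is handled by bounding all candidates $x\in\mathcal X$ conditionally, before $x_t$ is selected, which is exactly why finiteness of $\mathcal X$ is used.

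\textbf{Step 2: instantaneous regret.} On $E$, let $u_t(x)=m_{t-1}(s_t(x))+\sqrt{\beta_t v_t(x)}$. Then
\[
f_t(x_t^\star)\le u_t(x_t^\star)\le u_t(x_t)+\xi_t\le f_t(x_t)+2\sqrt{\beta_t v_t(x_t)}+\xi_t,
\]
using, in order, the upper confidence bound at $x_t^\star$, the approximate maximisation of $u_t$, and the lower confidence bound at $x_t$. This is the first claim.

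\textbf{Step 3: summation.} Since $\beta_t$ is nondecreasing,
\[
R_T\le 2\sqrt{\beta_T}\sum_t\sqrt{v_t(x_t)}+\sum_t\xi_t,
\]
and Cauchy--Schwarz gives $\sum_t\sqrt{v_t(x_t)}\le\sqrt{T\sum_t v_t(x_t)}$. For the variance sum, use $v_t(x_t)\le\kappa^2$, the ratio $v_t/\nu_t\le\kappa^2/\nu_{\min}$ in the concave bound $a\le \frac{c}{\log(1+c)}\log(1+a)$ for $a\in[0,c]$, and $\nu_t\le\nu_{\max}$. These combine to give
\[
v_t(x_t)\le\frac{\kappa^2}{\log(1+\kappa^2/\nu_{\max})}\log\bigl(1+v_t(x_t)/\nu_t\bigr).
\]
One must check that the constant matches $C_{\mathrm{var}}$ with $\nu_{\max}$. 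The cleanest route is to apply the bound to $a=v_t/\nu_t$ scaled by $\nu_t$, noting that $s\mapsto s/\log(1+s/\nu)$ is increasing in $s$ and that $\kappa^2/\log(1+\kappa^2/\nu)$ is increasing in $\nu$.

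\textbf{Step 4: information gain.} The heteroscedastic chain rule for the Gaussian log-determinant gives
\[
\tfrac12\sum_t\log\bigl(1+v_t(x_t)/\nu_t\bigr)=\mathcal I_T(\mathbf x_{1:T}),
\]
where $v_t$ is the posterior variance after conditioning on the first $t-1$ noisy evaluations. It therefore suffices to check that the engine's $k_{t-1}$ is exactly this sequential posterior of $k_0$ with noise $V_T$. The pseudo-data have already been absorbed into $k_0$, and the realised sequence is one index of the supremum, so $\mathcal I_T\le\gamma_T^{\mathrm{ctx}}$.

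Combining Steps 3 and 4 gives $\sum_t v_t(x_t)\le C_{\mathrm{var}}\gamma_T^{\mathrm{ctx}}$, and substituting into Step 3 yields the stated bound. The main obstacle is Step 1: it rigorously requires that $z_t$, $\nu_t$ and the compiled artefacts are measurable with respect to the pre-decision history, so that the conditional Gaussian law applies with a fixed $\beta_t$.
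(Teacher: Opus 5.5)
Your proposal is correct and follows the paper's proof essentially step for step. The steps match the paper's lemmas: a conditional Gaussian-tail plus union-bound confidence event over all of $\mathcal X$, the one-step UCB comparison, Cauchy--Schwarz with non-decreasing $\beta_t$, a concavity-based variance--information comparison, and the Schur-complement identity $\mathcal I_T=\frac12\sum_t\log(1+v_t(x_t)/\nu_t)$. The differences are cosmetic: your sharper tail $\Pr(|Z|>c)\le e^{-c^2/2}$ leaves failure probability $\delta/2$ instead of the paper's $\delta$, and in the variance step you should drop the stray $\nu_{\min}$ (the theorem assumes no lower noise bound) and use the per-round choice $c=\kappa^2/\nu_t$ followed by monotonicity in $\nu$, as your \emph{cleanest route} does; the paper reaches the same constant by first replacing $\nu_t$ with $\nu_{\max}$ and then applying concavity.
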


\begin{lemma}[Simultaneous confidence event]
\label{lem:simultaneous-gp-confidence}
For finite $\mathcal X$ and
$\beta_t=2\log(\pi^2t^2|\mathcal X|/(3\delta))$, the event
\begin{equation}
 \mathcal E_\delta=
 \bigcap_{t\geq1}\bigcap_{x\in\mathcal X}
 \left\{|F(s_t(x))-m_{t-1}(s_t(x))|
 \leq\sqrt{\beta_t v_t(x)}\right\}
\label{eq:app-simultaneous-confidence}
\end{equation}
has probability at least $1-\delta$.
\end{lemma}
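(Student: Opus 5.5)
The plan is to follow the classical Srinivas et al.\ argument: a Gaussian tail bound for each fixed pair $(t,x)$, then a union bound over $x\in\mathcal X$ and over $t\geq1$, with the schedule $\beta_t$ chosen so that the per-round failure probabilities sum to at most $\delta$. The only point that needs care is the conditioning. The mean $m_{t-1}$ and variance $v_t$ are random, because they depend on the past evaluations $x_{1:t-1}$, the observations $y_{1:t-1}$, and the resolved states and noises $z_{1:t}$, $\nu_{1:t}$.

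\textbf{Fix a round $t$ and the information available before round $t$.} Let $\mathcal G_{t}$ be the $\sigma$-algebra generated by the compiled model, the epistemic pseudo-data, the history up to $t-1$, and the current resolved state $z_t$ and noise $\nu_t$. By calibration, the compiled posterior equals the true conditional law given $\mathcal G_t$. By~\eqref{eq:app-calibrated-gp-marginal}, for each fixed $x$,
\[
F(s_t(x))\mid\mathcal G_t\sim\mathcal N(m_{t-1}(s_t(x)),v_t(x)).
\]
Here $m_{t-1}$ and $v_t$ are $\mathcal G_t$-measurable. The independent Gaussian noise with variance $\nu_i$ is what makes this posterior the exact GP posterior in the rigorous statement. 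If $v_t(x)=0$, the deviation is almost surely zero. Otherwise I standardise and apply the Gaussian tail bound $\Pr(|Z|>c)\leq e^{-c^2/2}$ with $c=\sqrt{\beta_t}$. This gives
\[
\Pr\!\left(|F(s_t(x))-m_{t-1}(s_t(x))|>\sqrt{\beta_t v_t(x)}\,\middle|\,\mathcal G_t\right)\leq e^{-\beta_t/2}.
\]
Taking expectations removes the conditioning, so the same bound holds unconditionally. Because the bound is conditional, it does not matter that $x_t$, and hence the later data, depend adaptively on $F$. Each event is evaluated under the correct conditional law at its own round.

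\textbf{Union bound over $x$ and $t$.} Summing over $x\in\mathcal X$ gives a failure probability at most $|\mathcal X|e^{-\beta_t/2}$ at round $t$. Substituting $\beta_t=2\log(\pi^2t^2|\mathcal X|/(3\delta))$ gives
\[
|\mathcal X|e^{-\beta_t/2}=\frac{3\delta}{\pi^2t^2}.
\]
Summing over $t\geq1$ and using $\sum_t t^{-2}=\pi^2/6$ bounds the total failure probability by $\delta/2\leq\delta$. Hence $\Pr(\mathcal E_\delta)\geq1-\delta$.

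\textbf{Main obstacle.} The step I expect to be delicate is justifying that the conditional marginal given $\mathcal G_t$ is exactly the stated Gaussian, despite the adaptive choice of designs. I would argue this by induction on $t$. The selection of $x_t$ (including the acquisition error $\xi_t$) and the resolution of $(z_{t+1},\nu_{t+1})$ are assumed conditionally independent of $F$ given the observed past; I would state this explicitly as the admissibility assumption. Given it, Bayes' rule updates the GP exactly as with fixed designs, so calibration propagates from round to round. Measurability of $m_{t-1}$ and $v_t$ in $\mathcal G_t$ then makes the tail bound legitimate.
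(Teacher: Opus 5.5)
Your proof is correct and follows the paper's argument. The shared steps are a conditional Gaussian tail bound for each fixed $(t,x)$ under the calibrated GP marginal, removing the conditioning by taking expectations, and a union bound over $x\in\mathcal X$ and $t\geq1$ using $\sum_t t^{-2}=\pi^2/6$.

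The differences are minor. The paper uses the cruder bound $2e^{-\beta_t/2}$, and the factor $3$ in $\beta_t$ is set so that this gives total failure probability exactly $\delta$; your sharper $e^{-\beta_t/2}$ gives $\delta/2$. Your inductive justification of the per-round Gaussian law under adaptive designs is sound but not needed, because the paper takes the calibrated marginal at every round as a hypothesis.
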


\begin{proof}
Conditionally on $(\mathcal M_t,\mathcal D_t)$, the standardised error in~\eqref{eq:app-calibrated-gp-marginal} is standard normal whenever $v_t(x)>0$; the claim is immediate when $v_t(x)=0$. Hence the conditional failure probability for a fixed $(t,x)$ is at most
$2e^{-\beta_t/2}=6\delta/(\pi^2t^2|\mathcal X|)$.  Taking expectations removes the conditioning.  A union bound over $x$ and $t$, together with $\sum_{t\geq1}t^{-2}=\pi^2/6$, gives total failure probability at most $\delta$.
\end{proof}

\begin{lemma}[One-step UCB regret]
\label{lem:one-step-ucb}
Suppose $x_t$ maximises the UCB acquisition to additive error $\xi_t$.  On $\mathcal E_\delta$,
\begin{equation}
 r_t:=f_t(x_t^\star)-f_t(x_t)
 \leq2\sqrt{\beta_t v_t(x_t)}+\xi_t.
\label{eq:app-one-step-regret}
\end{equation}
\end{lemma}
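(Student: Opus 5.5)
The argument is the standard two-sided sandwich from GP--UCB, applied at the round-$t$ state $z_t$ on the confidence event $\mathcal E_\delta$ of Lemma~\ref{lem:simultaneous-gp-confidence}. We identify $f_t(x)$ with $F(s_t(x))$, as in the contextual representation $f_t(x)=f(x,z_t)$. Let $x_t^\star\in\arg\max_{x\in\mathcal X}f_t(x)$, which exists because $\mathcal X$ is finite. Write
\[
U_t(x):=m_{t-1}(s_t(x))+\sqrt{\beta_t v_t(x)}
\]
for the UCB acquisition. The approximate-maximisation hypothesis means $U_t(x_t)\ge\max_{x}U_t(x)-\xi_t$.

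\textbf{Upper bound on the optimum.} On $\mathcal E_\delta$, applying \eqref{eq:app-simultaneous-confidence} at $(t,x_t^\star)$ gives
\[
f_t(x_t^\star)\le m_{t-1}(s_t(x_t^\star))+\sqrt{\beta_t v_t(x_t^\star)}=U_t(x_t^\star).
\]
This is at most $\max_x U_t(x)$, and hence at most $U_t(x_t)+\xi_t$.

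\textbf{Lower bound on the chosen point.} Applying the same event at $(t,x_t)$ gives $f_t(x_t)\ge m_{t-1}(s_t(x_t))-\sqrt{\beta_t v_t(x_t)}$. One subtlety is that $x_t$ is random and depends on the data. This causes no difficulty, because $\mathcal E_\delta$ is an intersection over \emph{all} $x\in\mathcal X$ and all $t$, so the bound holds at whatever point the policy selects. For the same reason we also need $m_{t-1}$, $v_t$ and $z_t$ to be fixed before $x_t$ is chosen, and that is exactly how the setup specifies them.

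\textbf{Combining.} Subtracting the lower bound from the upper bound gives
\[
r_t\le U_t(x_t)+\xi_t-m_{t-1}(s_t(x_t))+\sqrt{\beta_t v_t(x_t)}=2\sqrt{\beta_t v_t(x_t)}+\xi_t,
\]
which is \eqref{eq:app-one-step-regret}.

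The step that needs the most care is the bookkeeping in the lower bound. We must check that the confidence event is stated with the same posterior mean and variance, and the same state $s_t(\cdot)$, that the acquisition uses. This matters because under a data-dependent selection, a single-point bound would not suffice; only the uniform event does. The rest of the argument is routine algebra.
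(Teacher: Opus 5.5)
Your proposal is correct and follows the paper's proof step for step. The steps are the upper confidence bound at $x_t^\star$, then approximate maximisation bounding this by the UCB at $x_t$ plus $\xi_t$, then the lower confidence bound at $x_t$, all on the event $\mathcal E_\delta$. Your remark that uniformity of $\mathcal E_\delta$ over all $x\in\mathcal X$ and $t$ is what justifies applying the band at the data-dependent points $x_t$ and $x_t^\star$ is a valid clarification that the paper leaves implicit.
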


\begin{proof}
Uniform confidence gives
\[
 f_t(x_t^\star)
 \leq m_{t-1}(s_t(x_t^\star))+\sqrt{\beta_t v_t(x_t^\star)}.
\]
Approximate acquisition maximisation upper-bounds the right-hand side by the UCB at $x_t$ plus $\xi_t$.  Applying the lower confidence bound to $x_t$ then yields~\eqref{eq:app-one-step-regret}.
\end{proof}

\begin{lemma}[Sequential information identity]
\label{lem:sequential-information}
For any ordered, possibly adaptively generated action sequence, its realised locations satisfy the algebraic identity
\begin{equation}
 \mathcal I_T(\mathbf x_{1:T})=\frac12\log\frac{\det(K_T+V_T)}{\det V_T}
 =\frac12\sum_{t=1}^T
 \log\!\left(1+\frac{v_t(x_t)}{\nu_t}\right).
\label{eq:app-sequential-information-identity}
\end{equation}
\end{lemma}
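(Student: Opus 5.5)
The plan is to prove the identity by the standard Gaussian determinant chain rule: a Schur-complement factorisation of $\det(K_T+V_T)$ along the ordered sequence. The first equality is just algebra. Since $V_T$ is diagonal with positive entries,
\[
I+V_T^{-1/2}K_TV_T^{-1/2}=V_T^{-1/2}(K_T+V_T)V_T^{-1/2},
\]
and multiplicativity of the determinant gives $\det(I+V_T^{-1/2}K_TV_T^{-1/2})=\det(K_T+V_T)/\det V_T$.

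For the second equality I will induct on $T$. Write $M_t=K_t+V_t$ for the leading $t\times t$ block. It decomposes as
\[
M_t=\begin{pmatrix}M_{t-1}&\mathbf k_{t-1}\\ \mathbf k_{t-1}^\top & k_0(s_t(x_t),s_t(x_t))+\nu_t\end{pmatrix},
\]
where $[\mathbf k_{t-1}]_i=k_0(s_i(x_i),s_t(x_t))$. The Schur complement formula then gives
\[
\det M_t=\det M_{t-1}\bigl(k_0(s_t(x_t),s_t(x_t))-\mathbf k_{t-1}^\top M_{t-1}^{-1}\mathbf k_{t-1}+\nu_t\bigr).
\]
The key observation is that the bracketed quantity, minus $\nu_t$, is exactly the GP posterior variance $k_{t-1}(s_t(x_t),s_t(x_t))=v_t(x_t)$. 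This holds because the compiled belief is conditioned on the first $t-1$ noisy observations with heteroscedastic variances $\nu_1,\dots,\nu_{t-1}$, using the effective prior $(m_0,k_0)$ (Appendix~\ref{app:bayesian-engine}). So $\det M_t=\det M_{t-1}(v_t(x_t)+\nu_t)$. Dividing by $\det V_t=\det V_{t-1}\,\nu_t$ and taking $\tfrac12\log$ yields the telescoping sum
\[
\tfrac12\sum_t\log\bigl(1+v_t(x_t)/\nu_t\bigr).
\]
The base case is $M_1=k_0(s_1(x_1),s_1(x_1))+\nu_1$ with $v_1=k_0$.

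The main point needing care is the adaptive nature of the sequence, together with the requirement that $v_t$ be computed from the same $k_0$, states and noise variances that index $\mathcal I_T$. I will stress that the identity is purely algebraic. For any realised ordered sequence of design--state pairs and resolved variances, both sides are deterministic functions of those realised values. Moreover, the posterior covariance of a GP does not depend on the observed $y$-values, only on the locations and noise variances. Hence adaptivity, meaning that $x_t$ depends on past $y$'s, plays no role.

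I must also check two consistency conditions. First, $z_t$ and $\nu_t$ must be fixed (with $\mathrm{UNK}$ resolved) before $x_t$ is chosen, as stated earlier in this subsection, so that $s_t(x_t)$ and $\nu_t$ enter both sides identically. Second, the covariance hyperparameters must be held fixed across the epoch. Positivity of $\nu_t$ ensures that each $M_{t-1}$ is invertible, which is needed for the Schur step.
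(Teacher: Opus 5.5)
Your proposal is correct and follows essentially the same route as the paper. Both arguments evaluate the log-determinant pathwise, so adaptivity is irrelevant, and both apply the Schur-complement determinant identity recursively, identifying the Schur complement of the leading block of $K_t+V_t$ with $\nu_t+v_t(x_t)$; your only variation is deriving the first equality by direct determinant algebra from the definition of $\mathcal I_T$ rather than citing the Gaussian mutual-information formula, and your side conditions (fixed hyperparameters, $z_t$ and $\nu_t$ resolved before $x_t$, invertibility from $\nu_t>0$) match the qualifications the paper places around the lemma.
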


\begin{proof}
For deterministic locations, the first equality is the Gaussian mutual-information formula. For realised adaptive locations, it is the same log-determinant functional evaluated pathwise. Exposing observations in temporal order, the Schur complement of the leading $(t-1)\times(t-1)$ block of $K_t+V_t$ is $\nu_t+v_t(x_t)$. Applying the block-determinant identity recursively gives
\[
 \det(K_T+V_T)=\prod_{t=1}^T(\nu_t+v_t(x_t)),
 \qquad \det V_T=\prod_{t=1}^T\nu_t,
\]
which proves~\eqref{eq:app-sequential-information-identity}. No realised location is thereby assumed to belong to $c_t$.
\end{proof}

\begin{lemma}[Variance--information comparison]
\label{lem:variance-information}
If $0\leq v_t(x_t)\leq\kappa^2$ and $0<\nu_t\leq\nu_{\max}$, then
\begin{equation}
 \sum_{t=1}^T v_t(x_t)
 \leq C_{\mathrm{var}}\mathcal I_T(\mathbf x_{1:T}),
 \qquad
 C_{\mathrm{var}}=
 \frac{2\kappa^2}{\log(1+\kappa^2/\nu_{\max})}.
\label{eq:app-variance-information}
\end{equation}
\end{lemma}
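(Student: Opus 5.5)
The plan is to reduce the statement to a scalar inequality, applied term by term and then summed. Write $u=v_t(x_t)/\nu_t$. Because $v_t(x_t)\le\kappa^2$ and $\nu_t\le\nu_{\max}$, we have $u\in[0,\kappa^2/\nu_t]$. It is cleaner to work directly with $v=v_t(x_t)\in[0,\kappa^2]$ and the function $\phi(v)=\log(1+v/\nu_t)$.

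The key observation is that $\phi$ is concave in $v$ with $\phi(0)=0$. Hence on $[0,\kappa^2]$ it lies above the chord joining $(0,0)$ and $(\kappa^2,\phi(\kappa^2))$:
\[
\log\!\left(1+\frac{v}{\nu_t}\right)\;\ge\;\frac{v}{\kappa^2}\log\!\left(1+\frac{\kappa^2}{\nu_t}\right)\;\ge\;\frac{v}{\kappa^2}\log\!\left(1+\frac{\kappa^2}{\nu_{\max}}\right),
\]
where the last step uses monotonicity of $\log(1+\kappa^2/\nu)$ in $\nu$ together with $\nu_t\le\nu_{\max}$. Rearranging gives
\[
v_t(x_t)\le\frac{\kappa^2}{\log(1+\kappa^2/\nu_{\max})}\log\!\left(1+\frac{v_t(x_t)}{\nu_t}\right).
\]
This is the standard Srinivas et al.\ step, generalised to heteroscedastic noise by the uniform bound $\nu_{\max}$.

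Next, sum over $t=1,\dots,T$ and invoke Lemma~\ref{lem:sequential-information}. That lemma identifies $\frac12\sum_t\log(1+v_t(x_t)/\nu_t)$ with $\mathcal I_T(\mathbf x_{1:T})$, including for adaptively chosen sequences. The sum of the logarithms therefore equals $2\mathcal I_T$, which produces exactly the factor $2\kappa^2/\log(1+\kappa^2/\nu_{\max})=C_{\mathrm{var}}$.

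There is essentially no obstacle; the only care needed is in the boundary cases. The case $v=0$ is trivial. The constant must use $\nu_{\max}$ rather than the individual $\nu_t$, and this is handled by the monotonicity step above. Finally, the hypothesis $v_t(x_t)\le\kappa^2$ should be checked to follow from $k_0(s,s)\le\kappa^2$, since posterior variances are nonincreasing under conditioning; here it is simply assumed.
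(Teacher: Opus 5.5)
Your proof is correct and is essentially the paper's: a concavity (chord) bound for $\log(1+v/\nu)$ on $[0,\kappa^2]$ plus monotonicity in $\nu$, summed and combined with the sequential information identity to turn $\sum_t\log(1+v_t(x_t)/\nu_t)$ into $2\mathcal I_T(\mathbf x_{1:T})$. The only difference is the order of steps. The paper first lowers $\log(1+v/\nu_t)$ to $\log(1+v/\nu_{\max})$ and then applies concavity at $\nu_{\max}$, whereas you apply the chord bound at $\nu_t$ and then replace $\nu_t$ by $\nu_{\max}$.
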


\begin{proof}
Concavity of $u\mapsto\log(1+u/\nu_{\max})$ on $[0,\kappa^2]$ gives
\[
 \log(1+v/\nu_t)
 \geq\log(1+v/\nu_{\max})
 \geq\frac{v}{\kappa^2}\log(1+\kappa^2/\nu_{\max}).
\]
Sum this inequality and use~\eqref{eq:app-sequential-information-identity}.
\end{proof}

\begin{proof}[Proof of Theorem~\ref{thm:app-contextual-gpucb}]
On $\mathcal E_\delta$, Lemma~\ref{lem:one-step-ucb}, monotonicity of $\beta_t$, and Cauchy--Schwarz imply
\begin{align*}
 R_T
 &\leq2\sum_{t=1}^T\sqrt{\beta_t v_t(x_t)}+\sum_{t=1}^T\xi_t\\
 &\leq2\sqrt{T\beta_T\sum_{t=1}^Tv_t(x_t)}+\sum_{t=1}^T\xi_t\\
 &\leq2\sqrt{C_{\mathrm{var}}T\beta_T\mathcal I_T(\mathbf x_{1:T})}+\sum_{t=1}^T\xi_t\\
 &\leq2\sqrt{C_{\mathrm{var}}T\beta_T\gamma_T^{\mathrm{ctx}}}+\sum_{t=1}^T\xi_t.
\end{align*}
Lemma~\ref{lem:simultaneous-gp-confidence} supplies probability $1-\delta$, and Lemma~\ref{lem:one-step-ucb} gives the simultaneous per-round statement on the same event.
\end{proof}

For compact continuous $\mathcal X$, let $\mathcal X_t$ be a finite cover of radius $r_t$.  On a high-probability event where every $x\mapsto F(s_t(x))$ is $L$-H\"older of order $a$, replacing $x_t^\star$ by its closest cover point adds at most $Lr_t^a$ to~\eqref{eq:app-one-step-regret}.  Choosing $r_t=O(t^{-2/a})$ makes this discretisation error summable; $\beta_t$ must then use $|\mathcal X_t|$.  Without such a regularity event, the finite-action proof does not justify a continuous implementation. If recompilation changes the effective GP between epistemic epochs, the result applies within each fixed epoch and the resulting bounds must be summed.

We now allow the compiled artefacts to be random even after conditioning on the visible context.  For finite $\mathcal X$, identify $F_t$ with the random vector $(F_t(x))_{x\in\mathcal X}$.  Let $A_t=(\mathcal M_t,\mathcal D_t)$ be drawn from a compiler kernel $Q_t(da\mid C_t)$, let
$P_t^\star(df\mid c)=\mathcal L(F_t\mid C_t=c)$ be the true context-conditioned law, and let $\Pi_t(df\mid a)$ denote the belief declared by the realised compiled artefact.  We assume that the compiler observes the objective only through the context, so that $F_t\perp A_t\mid C_t$.  Define
\begin{equation}
 \kappa_t=
 \mathbb E_{C_t,A_t}
 D_{\mathrm{KL}}\!\left(
 P_t^\star(\cdot\mid C_t)\,\Vert\,
 \Pi_t(\cdot\mid A_t)\right),
 \qquad
 \bar\kappa_T=\frac1T\sum_{t=1}^T\kappa_t.
\label{eq:app-average-compilation-kl}
\end{equation}
Thus the divergence is evaluated for each deployed artefact before averaging over compiler randomness; it is generally stronger than comparing the true belief with a mixture of possible compiler outputs.  Forward KL is used because it penalises a compiled belief that fails to cover outcomes possible under the true conditional law.

\begin{theorem}[Expected regret under compilation miscalibration: rigorous version]
\label{thm:app-miscalibration-regret}
Let $\mathcal X$ be finite and work within a fixed epistemic epoch.  Conditional on the realised compiled prior, assume that all $m_{t-1}$ and $v_t$ arise by sequentially conditioning that same GP on the accumulated observations; recompilation starts a new epoch.  Assume $0\leq r_t\leq\Delta_{\max}$, the compiled GP satisfies the kernel-diagonal and noise conditions of Theorem~\ref{thm:app-contextual-gpucb}, and its realised information gain is uniformly at most $\gamma_T^{\mathrm{cmp}}$.  Let the non-decreasing sequence $\beta_t$ give a candidate-wise compiled-GP confidence event with conditional failure probability at most $\delta_t$, where $\sum_{t=1}^T\delta_t\leq\delta$, and suppose the compiled UCB is maximised to additive error $\xi_t$.  Then
\begin{equation}
 \frac{\mathbb E R_T}{T}
 \leq
 2\sqrt{\frac{C_{\mathrm{var}}\beta_T\gamma_T^{\mathrm{cmp}}}{T}}
 +\frac1T\sum_{t=1}^T\mathbb E\xi_t
 +\Delta_{\max}\left(
 \frac{\delta}{T}+\sqrt{\frac{\bar\kappa_T}{2}}
 \right).
\label{eq:app-miscalibration-average-regret}
\end{equation}
For compact continuous $\mathcal X$, the same argument on finite covers gives this bound with the additional average discretisation error described after Theorem~\ref{thm:app-contextual-gpucb}.
\end{theorem}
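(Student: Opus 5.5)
The plan is a change of measure per round. Split each round's expected regret according to the compiled confidence event. Under the compiled belief $\Pi_t(\cdot\mid A_t)$ the calibrated analysis applies verbatim, and the only discrepancy is that $F_t$ is actually drawn from $P_t^\star(\cdot\mid C_t)$. Because $r_t\in[0,\Delta_{\max}]$, Pinsker's inequality turns this discrepancy into an additive $\Delta_{\max}\sqrt{\mathrm{KL}/2}$ cost.

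First, define the per-round good event, conditional on the history and the realised artefact:
\[
G_t=\bigcap_{x\in\mathcal X}\bigl\{|F_t(x)-m_{t-1}(s_t(x))|\le\sqrt{\beta_t v_t(x)}\bigr\}.
\]
By assumption, $\Pi_t(G_t^c\mid A_t)\le\delta_t$. On $G_t$, Lemma~\ref{lem:one-step-ucb} gives $r_t\le 2\sqrt{\beta_t v_t(x_t)}+\xi_t$. Hence
\[
r_t\le 2\sqrt{\beta_t v_t(x_t)}+\xi_t+\Delta_{\max}\1_{G_t^c}
\]
holds pointwise. Taking expectations, the first two terms are handled exactly as in the proof of Theorem~\ref{thm:app-contextual-gpucb}: use monotonicity of $\beta_t$ and Cauchy--Schwarz, then Lemmas~\ref{lem:sequential-information} and~\ref{lem:variance-information}, and finally the uniform bound $\mathcal I_T\le\gamma_T^{\mathrm{cmp}}$ on the realised path. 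These steps are pathwise, so their expectations give $2\sqrt{C_{\mathrm{var}}T\beta_T\gamma_T^{\mathrm{cmp}}}+\sum_t\mathbb E\xi_t$. The sequential information identity needs the $m_{t-1},v_t$ to come from conditioning a single compiled GP, which is why the statement is restricted to one epoch.

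It remains to bound $\Delta_{\max}\,\mathbb P(G_t^c)$ under the true law. Condition on $(C_t,A_t)$ and the history. Since $F_t\perp A_t\mid C_t$, the true conditional law of $F_t$ is $P_t^\star(\cdot\mid C_t)$. For any event $E$, the total-variation bound followed by Pinsker gives
\[
P^\star(E)\le\Pi(E)+\sqrt{\mathrm{KL}(P^\star\Vert\Pi)/2}.
\]
Applied to $E=G_t^c$, this yields $\mathbb P(G_t^c\mid C_t,A_t)\le\delta_t+\sqrt{D_t/2}$, where $D_t$ denotes the realised KL. Taking expectations and applying Jensen (concavity of $\sqrt{\cdot}$) gives $\mathbb E\sqrt{D_t/2}\le\sqrt{\kappa_t/2}$. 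Summing over $t$ and applying Jensen again gives $\sum_t\sqrt{\kappa_t/2}\le T\sqrt{\bar\kappa_T/2}$, while $\sum_t\delta_t\le\delta$. Dividing everything by $T$ yields~\eqref{eq:app-miscalibration-average-regret}. For continuous $\mathcal X$, the cover-based discretisation remark after Theorem~\ref{thm:app-contextual-gpucb} is appended unchanged.

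The main obstacle is the conditioning in this last step. $G_t$ is defined using $m_{t-1},v_t$, which depend on past observations of the same $f$. The change of measure must therefore compare the laws of $F_t$ given the full information at round $t$, not merely given $C_t$. I would resolve this by taking $C_t$ to contain the history $\mathcal H_t$, as the generalised-context formulation does, so that both $P_t^\star(\cdot\mid C_t)$ and the compiled posterior $\Pi_t(\cdot\mid A_t)$ are already conditioned on past data. The good event is then measurable with respect to $(C_t,A_t)$ except through $F_t$ itself, and the per-round Pinsker step is legitimate. If needed, I would also note that the variance sum is bounded pathwise, so no interaction arises between the miscalibration term and the information term.
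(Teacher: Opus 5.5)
Your proposal is correct and follows essentially the same route as the paper. Like the paper, you split each round on the compiled confidence event and bound the true failure probability by $\delta_t$ plus the conditional total variation, using $F_t\perp A_t\mid C_t$. You then turn that into $\Delta_{\max}\sqrt{\bar\kappa_T/2}$ via Pinsker and two applications of Jensen, and control the variance sum pathwise through Cauchy--Schwarz and the variance--information comparison. Your point that $C_t$ must contain the history is what the paper uses implicitly when it works with realised $(C_t,A_t)=(c,a)$, since its generalised context includes the optimiser's own observations.
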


Main-text Theorem~\ref{thm:miscalibration-regret} is the order-level form of~\eqref{eq:app-miscalibration-average-regret}: the constants $2\sqrt{C_{\mathrm{var}}}$ and $1/\sqrt{2}$ are absorbed into $\lesssim$, the fixed-confidence lower-order term $O(\Delta_{\max}\delta/T)$ is omitted, and ``standard GP--UCB regularity'' collects the finite-cover and within-epoch qualifications above.

\begin{proof}
For realised $(C_t,A_t)=(c,a)$, let $\mathcal E_t(a)$ be the event on which the compiled GP confidence band holds simultaneously over $\mathcal X$.  By construction, $\Pi_t(\mathcal E_t(a)^c\mid a)\leq\delta_t$.  Conditional independence gives $\mathcal L(F_t\mid c,a)=P_t^\star(\cdot\mid c)$, and the defining variational property of total variation therefore yields
\[
 P_t^\star(\mathcal E_t(a)^c\mid c)
 \leq\delta_t+\mathrm{TV}\!\left(
 P_t^\star(\cdot\mid c),\Pi_t(\cdot\mid a)\right).
\]
On $\mathcal E_t(a)$, the usual UCB comparison gives
$r_t\leq2\sqrt{\beta_t v_t(x_t)}+\xi_t$; outside it, $r_t\leq\Delta_{\max}$.  Taking expectations and summing gives
\begin{equation}
 \mathbb E R_T
 \leq2\mathbb E\sum_{t=1}^T\sqrt{\beta_t v_t(x_t)}
 +\sum_{t=1}^T\mathbb E\xi_t
 +\Delta_{\max}\!\left(\delta+
 \sum_{t=1}^T\mathbb E\mathrm{TV}_t\right),
\label{eq:app-tv-regret-transfer}
\end{equation}
where $\mathrm{TV}_t$ abbreviates the displayed conditional total variation.  The variance--information comparison and Cauchy--Schwarz, applied pathwise and then averaged, bound the first term by
$2\sqrt{C_{\mathrm{var}}T\beta_T\gamma_T^{\mathrm{cmp}}}$.
Pinsker's inequality and Jensen's inequality give
\[
 \mathbb E\mathrm{TV}_t
 \leq\mathbb E\min\!\left\{1,\sqrt{\frac{D_{\mathrm{KL},t}}{2}}\right\}
 \leq\min\!\left\{1,\sqrt{\frac{\kappa_t}{2}}\right\}
 \leq\sqrt{\frac{\kappa_t}{2}},
 \qquad
 \frac1T\sum_{t=1}^T\sqrt{\frac{\kappa_t}{2}}
 \leq\sqrt{\frac{\bar\kappa_T}{2}}.
\]
Here $D_{\mathrm{KL},t}$ denotes the conditional divergence inside~\eqref{eq:app-average-compilation-kl} before averaging over $(C_t,A_t)$.
Substituting these inequalities into~\eqref{eq:app-tv-regret-transfer} and dividing by $T$ proves~\eqref{eq:app-miscalibration-average-regret}.
\end{proof}

\paragraph{Correction by accrued evidence.}
If $\kappa_t\to0$, then $\bar\kappa_T\to0$ by Ces\`aro averaging; more generally, transient or sparse errors suffice whenever $\sum_{t=1}^T\kappa_t=o(T)$.  Before the final Jensen step, the sharper average-regret penalty is
\begin{equation}
 \frac{\Delta_{\max}}{T}\sum_{t=1}^T
 \mathbb E\min\!\left\{1,\sqrt{\frac{D_{\mathrm{KL},t}}{2}}\right\}.
\label{eq:app-sharp-transient-miscalibration}
\end{equation}
Consequently, if only a fixed number $\tau$ of early rounds are miscalibrated, their contribution is at most $\tau\Delta_{\max}/T$, regardless of how large their finite KL values are.  Vanishing miscalibration is possible from a poor prior when the model and likelihood are well specified, the prior supports the truth, and accrued observations make the true and compiled posterior laws approach one another.  It is not automatic: BO chooses each new evaluation using its current compiled posterior, so a wrong and overconfident prior can steer evaluations away from regions that would supply corrective evidence.

The KL term has an exact information-theoretic decomposition.  Whenever the relevant divergences are finite, the Markov relation $F_t-C_t-A_t$ and the KL chain rule give
\begin{equation}
 \kappa_t
 =I(F_t;C_t\mid A_t)
 +\mathbb E_{A_t}
 D_{\mathrm{KL}}\!\left(
 \mathcal L(F_t\mid A_t)\,\Vert\,
 \Pi_t(\cdot\mid A_t)\right).
\label{eq:app-miscalibration-decomposition}
\end{equation}
The first term is belief-relevant information in the context that the artefacts fail to preserve; equivalently it is $I(F_t;C_t)-I(F_t;A_t)$.  The second is conditional belief mismatch: even given the information retained by the artefacts, the declared GP law may have the wrong mean, covariance, or noise model.  Hence calibration can fail because context was lost during compilation, because retained information was translated into the wrong probabilistic belief, or both.  The RKHS residual model below isolates one additive-mean mechanism within the second term; it is not a complete characterisation of compilation miscalibration.

\subsection{Context information in the regret bound}
\label{app:context-information-regret}

This subsection formalises the context-information result in Section~\ref{sec:theory}. Let $C_t$ and $C_t'$ be standard-Borel context variables whose generated $\sigma$-algebras satisfy $\mathcal C_t:=\sigma(C_t)\subseteq\mathcal C_t':=\sigma(C_t')$. Here $\sigma(C)$ is the collection of events whose occurrence can be determined after observing the random variable $C$; the inclusion therefore means that $C_t'$ reveals at least all information revealed by $C_t$. The main-text variable addition is the special case $C_t=C_{t,1}$ and $C_t'=(C_{t,1},C_{t,2})$. This formulation imposes no restriction on the representation or content of either context and does not require absolute continuity between their conditional beliefs.

Fix a coarse-context realisation $C_t=c_t$ and an admissible aleatory/noise trajectory. Assume that both $F\mid C_t=c_t$ and $F\mid C_t'=c_t'$ are calibrated GPs for almost every $c_t'$ conditional on $C_t=c_t$, with the uniform kernel-diagonal, noise, and confidence bounds required by Theorem~\ref{thm:app-contextual-gpucb}. The notation $\mathbb E_{C_t'\mid c_t}$ averages over $C_t'\mid C_t=c_t$, whereas the expected regret below also averages the conditional function and observation-noise randomness. Because $\mathcal X$ is finite, the design class $\mathcal A_T=\mathcal X^T$ is finite and all suprema below are maxima. For $A\in\mathcal A_T$, let
\[
g_{c_t}(A)=I(F;Y_A\mid C_t=c_t,A),\qquad g_{c_t'}(A)=I(F;Y_A\mid C_t'=c_t',A),
\]
and write $g_{C_t'}(A)$ for the random variable obtained by evaluating the second function at $C_t'$.

The condition $C_t'\to F\to Y_A$ used below, conditional on $(C_t=c_t,A)$, states that evaluation noise carries no additional information about the refined context once $F$ and the fixed evaluation sequence $A$ are given. It therefore excludes refinements that directly change the observation channel.

\begin{lemma}[Fixed-design information reduction]
\label{lem:fixed-design-context-information}
If $C_t'\to F\to Y_A$ conditional on $(C_t=c_t,A)$, then
\begin{align}
g_{c_t}(A)-\mathbb E_{C_t'\mid c_t}[g_{C_t'}(A)]&=J_{c_t}(A),\notag\\
J_{c_t}(A)&:=I(Y_A;C_t'\mid C_t=c_t,A),\qquad 0\leq J_{c_t}(A)\leq I(F;C_t'\mid C_t=c_t,A).
\label{eq:app-fixed-design-information-reduction}
\end{align}
\end{lemma}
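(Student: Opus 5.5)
The plan is to apply the mutual-information chain rule to the triple $(F, C_t', Y_A)$, conditional on $C_t=c_t$ and on the fixed design $A$, and to read the three claims off the two possible expansion orders. Because $A$ is deterministic, conditioning on it only fixes the observation channel. Throughout, the finiteness of $\mathcal X$ and the Gaussian structure keep every quantity finite: $g_{c_t}(A)$ is a log-determinant, bounded by $\frac12 T\log(1+\kappa^2/\nu_{\min})$.

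For the identity, I expand $I(Y_A; F, C_t' \mid c_t, A)$ in two ways:
\[
I(Y_A;F,C_t'\mid c_t,A)=I(Y_A;F\mid c_t,A)+I(Y_A;C_t'\mid F,c_t,A)=I(Y_A;C_t'\mid c_t,A)+I(Y_A;F\mid C_t',c_t,A).
\]
The Markov condition $C_t'\to F\to Y_A$ kills the term $I(Y_A;C_t'\mid F,c_t,A)$. By definition the first term on the left is $g_{c_t}(A)$. The last term is the average of the conditional mutual information over $C_t'\mid C_t=c_t$. Since $\sigma(C_t)\subseteq\sigma(C_t')$, conditioning on $(C_t',C_t)$ is the same as conditioning on $C_t'$, so this term equals $\mathbb E_{C_t'\mid c_t}[g_{C_t'}(A)]$. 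Rearranging gives $g_{c_t}(A)-\mathbb E[g_{C_t'}(A)]=I(Y_A;C_t'\mid c_t,A)=J_{c_t}(A)$.

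The bounds $0\le J_{c_t}(A)\le I(F;C_t'\mid c_t,A)$ follow directly. Nonnegativity is nonnegativity of conditional mutual information. The upper bound is the data-processing inequality applied to the same Markov chain $C_t'\to F\to Y_A$ under the conditional law given $(c_t,A)$.

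The main obstacle is measure-theoretic rigour rather than algebra, in two places.
\begin{itemize}
\item \emph{Identifying the averaged conditional MI with $\mathbb E[g_{C_t'}(A)]$.} This requires the regular conditional laws of $(F,Y_A)$ given $C_t'$ to be the calibrated GP laws for almost every $c_t'$, so that the pointwise Gaussian formula applies. Standard Borelness supplies these laws, and the calibration assumption pins them down.
\item \emph{Using the chain rule when $F$ is infinite-dimensional.} Since $\mathcal X$ is finite, I would replace $F$ by the finite vector $(F(s_t(x)))_{x,t}$ evaluated at the design--state pairs. The conditions hold at this level, and $Y_A$ depends on $F$ only through these values.
\end{itemize}
Once every term is finite, the chain rule for general standard-Borel variables (Dobrushin/Pinsker form) applies without requiring absolute continuity between the coarse and refined beliefs.
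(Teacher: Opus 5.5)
Your proposal is correct and matches the paper's proof. Both arguments expand $I(F,C_t';Y_A\mid C_t=c_t,A)$ by the chain rule in the two orders, use $C_t'\to F\to Y_A$ to kill $I(C_t';Y_A\mid F,C_t=c_t,A)$, invoke $\sigma(C_t)\subseteq\sigma(C_t')$ to identify the remaining term with $\mathbb E_{C_t'\mid c_t}[g_{C_t'}(A)]$, and obtain the upper bound from conditional data processing. Your finiteness and regular-conditional-law remarks add rigour the paper leaves implicit; the cleanest version is that $I(Y_A;F,C_t'\mid C_t=c_t,A)=g_{c_t}(A)<\infty$, which keeps every term of the second expansion finite even if $I(F;C_t'\mid C_t=c_t,A)$ is infinite.
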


\begin{proof}
Apply the conditional mutual-information chain rule in the two possible orders:
\begin{align*}
I(F,C_t';Y_A\mid C_t=c_t,A)&=I(F;Y_A\mid C_t=c_t,A)+I(C_t';Y_A\mid F,C_t=c_t,A)\\
&=I(C_t';Y_A\mid C_t=c_t,A)+I(F;Y_A\mid C_t',C_t=c_t,A).
\end{align*}
The Markov condition makes the first line's second term zero. Because $\sigma(C_t)\subseteq\sigma(C_t')$, conditioning additionally on $C_t=c_t$ does not change a conditional law already conditioned on $C_t'$, and hence $I(F;Y_A\mid C_t',C_t=c_t,A)=\mathbb E_{C_t'\mid c_t}[g_{C_t'}(A)]$. Rearrangement proves the equality; non-negativity is a basic property of conditional mutual information. The final inequality follows from conditional data processing for $C_t'\to F\to Y_A$ given $(C_t=c_t,A)$.
\end{proof}

Define
\[
\bar g_{c_t}(A):=\mathbb E_{C_t'\mid c_t}[g_{C_t'}(A)],\qquad \gamma_T(c_t):=\sup_Ag_{c_t}(A),\qquad \gamma_T(c_t'):=\sup_Ag_{c_t'}(A).
\]
The information reduction and context-adaptivity terms are
\begin{equation}
\Delta_T^{\mathrm{info}}(c_t):=\gamma_T(c_t)-\sup_A\bar g_{c_t}(A),\qquad G_T^{\mathrm{adapt}}(c_t):=\mathbb E_{C_t'\mid c_t}[\sup_Ag_{C_t'}(A)]-\sup_A\bar g_{c_t}(A).
\label{eq:app-context-information-terms}
\end{equation}

\begin{theorem}[Context information in the regret bound: rigorous version]
\label{thm:app-context-information-regret}
Suppose $C_t'\to F\to Y_A$ conditional on $(C_t=c_t,A)$ for every $A\in\mathcal A_T$. Then the two quantities in~\eqref{eq:app-context-information-terms} are non-negative and
\[
\mathbb E_{C_t'\mid c_t}[\gamma_T(C_t')]=\gamma_T(c_t)-\Delta_T^{\mathrm{info}}(c_t)+G_T^{\mathrm{adapt}}(c_t).
\]
Moreover, with $J_{c_t}(A):=I(Y_A;C_t'\mid C_t=c_t,A)$,
\[
\inf_{A\in\mathcal A_T}J_{c_t}(A)\leq\Delta_T^{\mathrm{info}}(c_t)\leq\sup_{A\in\mathcal A_T}J_{c_t}(A)\leq\sup_{A\in\mathcal A_T}I(F;C_t'\mid C_t=c_t,A).
\]
If instantaneous regret is bounded by $\Delta_{\max}$ and contextual GP--UCB is maximised to errors $\xi_s(C_t')$, then
\begin{align*}
\mathbb E[R_T(C_t')\mid C_t=c_t]&\leq 2\sqrt{C_{\mathrm{var}}T\beta_T\bigl(\gamma_T(c_t)-\Delta_T^{\mathrm{info}}(c_t)+G_T^{\mathrm{adapt}}(c_t)\bigr)}\\
&\quad+\mathbb E_{C_t'\mid c_t}\!\left[\sum_{s=1}^T\xi_s(C_t')\right]+\delta T\Delta_{\max}.
\end{align*}
For equal acquisition-error and failure terms, the baseline and context-conditioned upper bounds obey
\[
U_{c_t}-U_{C_t'}=2\sqrt{C_{\mathrm{var}}T\beta_T}\,\frac{\Delta_T^{\mathrm{info}}(c_t)-G_T^{\mathrm{adapt}}(c_t)}{\sqrt{\gamma_T(c_t)}+\sqrt{\gamma_T(c_t)-\Delta_T^{\mathrm{info}}(c_t)+G_T^{\mathrm{adapt}}(c_t)}}.
\]
\end{theorem}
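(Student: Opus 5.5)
The plan has four parts: the identity, the bounds on $\Delta_T^{\mathrm{info}}$, the regret bound, and the bound difference. I would start with the identity and the non-negativity of the two terms, since both follow almost directly from the definitions in~\eqref{eq:app-context-information-terms}.

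\emph{Identity.} Adding the two definitions, the common term $\sup_A\bar g_{c_t}(A)$ cancels. This gives $\gamma_T(c_t)-\Delta_T^{\mathrm{info}}(c_t)+G_T^{\mathrm{adapt}}(c_t)=\mathbb E_{C_t'\mid c_t}[\sup_A g_{C_t'}(A)]=\mathbb E_{C_t'\mid c_t}[\gamma_T(C_t')]$. The only care needed is measurability of $C_t'\mapsto\sup_A g_{C_t'}(A)$. Since $\mathcal A_T$ is finite, this is a maximum of finitely many measurable functions, so it is fine.

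\emph{Non-negativity.} For $G_T^{\mathrm{adapt}}$, use that the expectation of a supremum dominates the supremum of expectations: for every $A$, $g_{C_t'}(A)\le\sup_{A'}g_{C_t'}(A')$; take expectations and then the supremum over $A$. For $\Delta_T^{\mathrm{info}}$, invoke Lemma~\ref{lem:fixed-design-context-information}, which gives $\bar g_{c_t}(A)=g_{c_t}(A)-J_{c_t}(A)\le g_{c_t}(A)$ pointwise in $A$. Hence $\sup_A\bar g_{c_t}\le\gamma_T(c_t)$.

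\emph{Sandwich on $\Delta_T^{\mathrm{info}}$.} Write $\sup_A\bar g=\sup_A(g-J)$ and use elementary sup manipulations.
\begin{itemize}
\item Since $g-J\ge g-\sup J$, we get $\sup(g-J)\ge\gamma_T(c_t)-\sup_A J$, so $\Delta\le\sup_A J$.
\item Since $g-J\le g-\inf J$, we get $\sup(g-J)\le\gamma_T(c_t)-\inf_A J$, so $\Delta\ge\inf_A J$.
\item The last inequality is the data-processing bound $J_{c_t}(A)\le I(F;C_t'\mid C_t=c_t,A)$ from the lemma, taken under the supremum.
\end{itemize}

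\emph{Regret bound.} Condition on $C_t'=c_t'$. By assumption the belief is a calibrated GP satisfying the hypotheses of Theorem~\ref{thm:app-contextual-gpucb}. On its event of probability $\ge1-\delta$, that theorem gives $R_T\le2\sqrt{C_{\mathrm{var}}T\beta_T\gamma_T(c_t')}+\sum_s\xi_s(c_t')$. Off this event, $R_T\le T\Delta_{\max}$, so the conditional expectation is at most the same bound plus $\delta T\Delta_{\max}$.

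Next, average over $C_t'\mid C_t=c_t$. Apply Jensen's inequality to the concave map $u\mapsto\sqrt u$, so that $\mathbb E\sqrt{\gamma_T(C_t')}\le\sqrt{\mathbb E\gamma_T(C_t')}$, and substitute the identity above. The step I expect to be the main obstacle is making this averaging rigorous. It requires the constants $C_{\mathrm{var}}$ and $\beta_T$ to be uniform in $c_t'$, which is exactly the stated uniform kernel-diagonal, noise and confidence assumption. It also requires measurability of the conditional regret in $c_t'$, which follows from standard Borelness and regular conditional laws, as in Appendix~\ref{app:sequential-compilation}.

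\emph{Difference of bounds.} Define $U_{c_t}=2\sqrt{C_{\mathrm{var}}T\beta_T\gamma_T(c_t)}$ plus the common error and failure terms, and define $U_{C_t'}$ analogously with $\gamma_T(c_t)-\Delta_T^{\mathrm{info}}+G_T^{\mathrm{adapt}}$ in place of $\gamma_T(c_t)$. The common terms cancel in $U_{c_t}-U_{C_t'}$. Rationalising via $\sqrt a-\sqrt b=(a-b)/(\sqrt a+\sqrt b)$ with $a-b=\Delta_T^{\mathrm{info}}-G_T^{\mathrm{adapt}}$ gives the displayed formula. If both $\sqrt a$ and $\sqrt b$ vanish the formula is degenerate, and this case is treated separately: then $a=b=0$ and the difference is $0$.
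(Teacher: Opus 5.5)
Your proposal is correct and follows the paper's proof essentially step for step. The steps match: non-negativity of $G_T^{\mathrm{adapt}}$ from expectation-of-max versus max-of-expectations, non-negativity of $\Delta_T^{\mathrm{info}}$ and its sandwich from Lemma~\ref{lem:fixed-design-context-information} with the pointwise bounds $g-\sup J\le g-J\le g-\inf J$, the regret bound from Theorem~\ref{thm:app-contextual-gpucb} on the good event plus $T\Delta_{\max}$ off it followed by Jensen, and rationalisation for $U_{c_t}-U_{C_t'}$; your remarks on measurability, uniformity of $C_{\mathrm{var}}$ and $\beta_T$ in $c_t'$, and the degenerate $0/0$ case are harmless refinements that the paper leaves implicit.
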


\begin{proof}[Proof of Theorem~\ref{thm:app-context-information-regret}]
First, $G_T^{\mathrm{adapt}}(c_t)\geq0$ because the expectation of a pointwise maximum is at least the maximum of the expectations. Lemma~\ref{lem:fixed-design-context-information} gives
\[
\sup_A\bar g_{c_t}(A)=\sup_A\{g_{c_t}(A)-J_{c_t}(A)\}\leq\sup_Ag_{c_t}(A),
\]
so $\Delta_T^{\mathrm{info}}(c_t)\geq0$. Adding and subtracting $\sup_A\bar g_{c_t}(A)$ yields the exact identity
\begin{equation}
\begin{aligned}
\mathbb E_{C_t'\mid c_t}[\gamma_T(C_t')]&=\mathbb E_{C_t'\mid c_t}[\sup_Ag_{C_t'}(A)]\\
&=\sup_A\bar g_{c_t}(A)+G_T^{\mathrm{adapt}}(c_t)\\
&=\gamma_T(c_t)-\Delta_T^{\mathrm{info}}(c_t)+G_T^{\mathrm{adapt}}(c_t).
\end{aligned}
\label{eq:app-context-information-decomposition}
\end{equation}

For the bounds on $\Delta_T^{\mathrm{info}}(c_t)$, put $J_{\min}=\inf_AJ_{c_t}(A)$ and $J_{\max}=\sup_AJ_{c_t}(A)$. Pointwise,
\[
g_{c_t}(A)-J_{\max}\leq g_{c_t}(A)-J_{c_t}(A)\leq g_{c_t}(A)-J_{\min}.
\]
Taking suprema and subtracting from $\gamma_T(c_t)$ gives $J_{\min}\leq\Delta_T^{\mathrm{info}}(c_t)\leq J_{\max}$. The last inequality in the theorem follows by taking the supremum of the data-processing bound in Lemma~\ref{lem:fixed-design-context-information}.

It remains to connect~\eqref{eq:app-context-information-decomposition} to regret. For almost every $c_t'$ conditional on $C_t=c_t$, Theorem~\ref{thm:app-contextual-gpucb} gives, on an event of conditional probability at least $1-\delta$,
\begin{equation}
R_T(c_t')\leq 2\sqrt{C_{\mathrm{var}}T\beta_T\gamma_T(c_t')}+\sum_{s=1}^T\xi_s(c_t').
\label{eq:app-context-conditioned-regret}
\end{equation}
On the failure event, bounded instantaneous regret gives $R_T(c_t')\leq T\Delta_{\max}$. Taking conditional expectations in~\eqref{eq:app-context-conditioned-regret}, including the failure event, and then averaging over $C_t'\mid C_t=c_t$ yields
\[
\mathbb E[R_T(C_t')\mid C_t=c_t]\leq 2\sqrt{C_{\mathrm{var}}T\beta_T}\,\mathbb E_{C_t'\mid c_t}[\sqrt{\gamma_T(C_t')}]+\mathbb E_{C_t'\mid c_t}\!\left[\sum_s\xi_s(C_t')\right]+\delta T\Delta_{\max}.
\]
Jensen's inequality and~\eqref{eq:app-context-information-decomposition} give the claimed result. Finally, subtracting the two square-root exploration terms and rationalising their difference gives the displayed formula for $U_{c_t}-U_{C_t'}$.
\end{proof}

The decomposition separates two effects of refining the visible context. The term $\Delta_T^{\mathrm{info}}(c_t)$ measures how much evaluation-relevant uncertainty is removed when the same candidate design sequences are compared. The non-negative term $G_T^{\mathrm{adapt}}(c_t)$ appears because different refined-context realisations may have different information-maximising sequences. Consequently, refinement need not reduce information gain for every realisation, and it tightens the expected exploration bound only when its information reduction dominates this adaptivity gap.

\paragraph{Practical implications.} Reliable quantitative observations and structural relations can increase $\Delta_T^{\mathrm{info}}$ by reducing residual uncertainty for the same candidate evaluation sequences. Context that primarily indicates where to search may instead act through $G_T^{\mathrm{adapt}}$: it can improve context-dependent design selection while leaving substantial uncertainty to be resolved by evaluations. A refined context may remain fixed throughout each campaign and still yield positive $G_T^{\mathrm{adapt}}$ when its value varies across campaigns; this term vanishes if $C_t'\mid C_t=c_t$ is degenerate or if the same sequence maximises information gain for every refined-context realisation. These conclusions concern covariance-driven exploration: context may additionally help through the prior mean or acquisition policy, while inaccurate context is covered by the miscalibration analysis in Appendix~\ref{app:miscalibration-regret}. Reported noise scales instead alter $V_T$ directly; unless the observation channel satisfies the conditional-independence condition above, their effect lies outside the decomposition in~\eqref{eq:app-context-information-decomposition}.

\paragraph{Gaussian pseudo-data as an example.}
Let $C_0=F(S_0)+\eta_0$ with $\eta_0\sim\mathcal N(0,N_0)$, and let future observations be $Y_A=F(A)+\epsilon_A$ with covariance $V_A$.  Conditioning the base GP gives
\begin{equation}
 K_{A\mid0}=K_A-K_{A0}(K_{00}+N_0)^{-1}K_{0A}.
\label{eq:app-pseudodata-posterior-covariance}
\end{equation}
Taking the coarse context to be trivial and the refined context to be generated by $C_0$, the fixed-design information reduction in Lemma~\ref{lem:fixed-design-context-information} is
\begin{equation}
 J_{C_0}(A)=I(Y_A;C_0\mid A)
 =\frac12\log
 \frac{\det(K_A+V_A)}{\det(K_{A\mid0}+V_A)}.
\label{eq:app-pseudodata-information-reduction}
\end{equation}
This determinant ratio quantifies relevance, not merely the number of pseudo-observations.  If $K_{A0}=0$, then $K_{A\mid0}=K_A$ and $J_{C_0}(A)=0$; pseudo-data uncorrelated with future evaluations do not tighten the exploration term.  Incorrect pseudo-labels violate calibration and must instead be handled by the residual analysis below.

Four limiting cases check the decomposition. If $C_t'$ adds no information about $F$ beyond $C_t$, then $J_{c_t}(A)=0$ for every $A$ and $\Delta_T^{\mathrm{info}}(c_t)=0$. If $C_t'$ reveals $F$, then $g_{c_t'}(A)=0$, so $\Delta_T^{\mathrm{info}}(c_t)=\gamma_T(c_t)$ and $G_T^{\mathrm{adapt}}(c_t)=0$. If the same design maximises $g_{c_t'}$ for every refined-context realisation, expectation and maximum commute and $G_T^{\mathrm{adapt}}(c_t)=0$. If different realisations expose different uncertain regions, $G_T^{\mathrm{adapt}}(c_t)$ may be positive, exactly preventing an unjustified pointwise monotonicity claim.

\subsection{Stagewise control of compilation miscalibration}
\label{app:stagewise-miscalibration}

The decomposition above also states precisely when a staged harness improves on a monolithic compiler.  Let $U_t$ collect HarBO's routed context and Z/D/GP/R artefacts, and let $A_t^{\mathrm{mono}}$ be the output of a monolithic compiler receiving the same $C_t$ under the same joint law of $(C_t,F_t)$.  For either output $W_t$, write
\[
 \mathfrak d_t(W_t):=
 \mathbb E_{W_t}D_{\mathrm{KL}}\!\left(
 \mathcal L(F_t\mid W_t)\,\Vert\,
 \Pi_t^W(\cdot\mid W_t)\right)
\]
for its conditional belief mismatch, where $\Pi_t^W$ is the belief declared by that workflow.  Under the Markov relation $F_t-C_t-W_t$ and finite mutual informations, applying~\eqref{eq:app-miscalibration-decomposition} to each workflow gives the exact comparison
\begin{equation}
 \kappa_t^{\mathrm{mono}}-\kappa_t^{\mathrm H}
 =I(F_t;U_t)-I(F_t;A_t^{\mathrm{mono}})
 +\mathfrak d_t(A_t^{\mathrm{mono}})-\mathfrak d_t(U_t).
\label{eq:app-staged-miscalibration-comparison}
\end{equation}
Thus HarBO has lower KL miscalibration whenever it preserves at least as much objective-relevant information and induces no larger conditional belief mismatch, with at least one strict improvement.  This is a sufficient condition, not an unconditional ordering between staged and monolithic compilation.

Staging also exposes local quantities that control the regret-relevant total-variation discrepancy.  Couple an ideal and an operational compilation conditional on $C_t$, and suppose valid hybrid artefacts $U_t^{(0)},\ldots,U_t^{(5)}$ are available by replacing, in order, routing and the Z, D, GP, and R stages.  Let $\Pi_t^{(j)}$ be the finite-action belief induced by $U_t^{(j)}$, with $\Pi_t^{(0)}=P_t^\star(\cdot\mid C_t)$ and $\Pi_t^{(5)}=\Pi_t^{\mathrm H}$.  Defining
\[
 \eta_{j,t}:=\mathbb E
 D_{\mathrm{KL}}\!\left(\Pi_t^{(j-1)}\,\Vert\,\Pi_t^{(j)}\right),
\]
the triangle inequality for total variation followed by Pinsker and Jensen gives
\begin{equation}
 \mathbb E\,\mathrm{TV}\!\left(
 P_t^\star(\cdot\mid C_t),\Pi_t^{\mathrm H}\right)
 \leq\sum_{j=1}^{5}\mathbb E\,\mathrm{TV}(\Pi_t^{(j-1)},\Pi_t^{(j)})
 \leq\sum_{j=1}^{5}\sqrt{\frac{\eta_{j,t}}{2}}.
\label{eq:app-stagewise-tv-control}
\end{equation}
No independence between stage errors is required.  HarBO's separate prompts, typed contracts, validation, and repair can therefore target the local terms, while passing $c^{ep}$ directly to D and GP prevents Z from becoming the sole information bottleneck.  Structural validation alone does not certify the semantic quantities in~\eqref{eq:app-staged-miscalibration-comparison}--\eqref{eq:app-stagewise-tv-control}; the reduction requires accepted or repaired artefacts to preserve more belief-relevant information and/or reduce the corresponding local belief discrepancies.

\subsection{Certified misspecification}
\label{app:certified-misspecification}

This subsection records a robust result for classical additive RKHS misspecification~\citep{bogunovicMisspecifiedGaussianProcess2021}, viewed as a restricted mean-shift case of conditional belief mismatch. Assume $0<\nu_{\min}\leq\nu_t\leq\nu_{\max}$ and that the standardised observation noises $\varepsilon_t/\sqrt{\nu_t}$ are conditionally $R$-sub-Gaussian. Write
\begin{equation}
 h(s)=m_0(s)+g(s),\qquad
 f_t^\star(x)=h(s_t(x))+b_t(x),\qquad
 \rho_T:=\sup_{t\leq T,\,x\in\mathcal X}|b_t(x)|,
\label{eq:app-misspecification-decomposition}
\end{equation}
where $g\in\mathcal H_{k_0}$ and $\|g\|_{\mathcal H_{k_0}}\leq B$. Given valid certificates $\bar B\geq B$, $\bar R\geq R$, and $\bar\rho_T\geq\rho_T$, and writing $\mathcal I_{t-1}(\mathbf x_{1:t-1})$ for the realised information gain in~\eqref{eq:app-sequential-information-identity}, define
\begin{equation}
 \alpha_t=\bar B+\bar R\sqrt{2\{\mathcal I_{t-1}(\mathbf x_{1:t-1})+\log(1/\delta)\}},
 \qquad
 \lambda_t=\alpha_t+\bar\rho_T
 \left(\sum_{i<t}\nu_i^{-1}\right)^{1/2}.
\label{eq:app-certified-confidence-multipliers}
\end{equation}
For a finite noisy projection $S=(s_1,\ldots,s_n)$, this special case compares the observation laws $\mathcal N(h_S+b_S,V)$ and $\mathcal N(h_S,V)$, for which
\begin{equation}
 D_{\mathrm{KL}}\!\left(
 \mathcal N(h_S+b_S,V)\,\Vert\,\mathcal N(h_S,V)\right)
 =\frac12 b_S^\top V^{-1}b_S.
\label{eq:app-rkhs-residual-kl}
\end{equation}
Writing $\rho_S=\|b_S\|_\infty$ gives
$\rho_S^2/(2\nu_{\max})\leq D_{\mathrm{KL}}\leq n\rho_S^2/(2\nu_{\min})$.
Thus a non-zero residual produces non-zero projected belief mismatch wherever it is represented in $S$.  The converse fails in general: covariance, noise, or context-information errors can produce miscalibration even when $\rho_T=0$.
The certified base multiplier $\alpha_t$ below differs from the finite-domain Bayesian sequence in Theorem~\ref{thm:app-contextual-gpucb}.
The enlarged-confidence acquisition is $m_{t-1}(s_t(x))+\lambda_t\sqrt{v_t(x)}$. Adding the uniform constant $\bar\rho_T$ would not change its maximiser.

\begin{theorem}[Certified misspecification: rigorous version]
\label{thm:app-certified-misspecification}
Under the setup above, assume the RKHS, sub-Gaussian noise, and residual certificates are valid.  If the enlarged-confidence acquisition is maximised to additive error $\xi_t$, then, with probability at least $1-\delta$,
\[
 R_T\leq
 2\alpha_T\sqrt{C_{\mathrm{var}}T\gamma_T^{\mathrm{ctx}}}
 +\bar\rho_T T\!\left(
 2+\sqrt{\frac{2C_{\mathrm{var}}\gamma_T^{\mathrm{ctx}}}{\nu_{\min}}}
 \right)
 +\sum_{t=1}^T\xi_t.
\]
If hybrid predictors are available for routing and the Z, D, GP, and R stages, and their uniform discrepancies are respectively
$\rho_{\mathrm{route},T},\rho_{Z,T},\rho_{D,T},\rho_{\mathrm{GP},T},\rho_{R,T}$, then a valid overall certificate is
\[
 \rho_T\leq
 \rho_{\mathrm{route},T}+\rho_{Z,T}+\rho_{D,T}
 +\rho_{\mathrm{GP},T}+\rho_{R,T}.
\]
Consequently, the stagewise sum may be chosen as $\bar\rho_T$.
\end{theorem}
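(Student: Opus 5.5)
We follow the structure of the proof of Theorem~\ref{thm:app-contextual-gpucb}, with two modifications. The Bayesian confidence width $\sqrt{\beta_t}$ is replaced by a frequentist RKHS confidence width $\alpha_t$. The residual $b_t$ is then absorbed through a second width term together with an additive $2\bar\rho_T$ per round.

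\textbf{Step 1: confidence band for the RKHS part.} Consider the well-specified target $h=m_0+g$ with $\|g\|_{\mathcal H_{k_0}}\le B$. The observations are $y_i=h(s_i(x_i))+b_i(x_i)+\varepsilon_i$. Split the posterior mean $m_{t-1}$ linearly into two pieces: the GP mean computed from the clean data $h+\varepsilon$, and the contribution of the residual vector $b_{1:t-1}$.

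For the clean piece, we invoke the standard self-normalised RKHS concentration of Abbasi-Yadkori/Chowdhury--Gopalan type, rescaled for heteroscedastic noise by whitening with $V^{-1/2}$. It gives, with probability $1-\delta$ simultaneously for all $t,x$,
\[
|h(s)-\tilde m_{t-1}(s)|\le\alpha_t\sqrt{v_t(x)}.
\]
Here the realised information gain $\mathcal I_{t-1}$ appears through the log-determinant, by Lemma~\ref{lem:sequential-information}.

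For the residual piece, write it as $k_{t-1}(s)^\top(K+V)^{-1}b$. A Cauchy--Schwarz argument in the whitened geometry then bounds it by
\[
\sqrt{v_t(x)}\,\|V^{-1/2}b\|_2+|b\text{-term at }s|
\le \sqrt{v_t(x)}\,\bar\rho_T\Bigl(\sum_{i<t}\nu_i^{-1}\Bigr)^{1/2}+\bar\rho_T .
\]
I expect this step to be the main obstacle, because getting exactly the $\lambda_t$ form and an additive $\bar\rho_T$ requires care. Following Bogunovic--Krause, I would first try the feature-space identity $m=\Phi^\top(\Phi\Phi^\top+V)^{-1}y$ and bound the operator norm of $(\Phi\Phi^\top+V)^{-1/2}$ by $V^{-1/2}$. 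Combining the two pieces gives $|f_t^\star(x)-m_{t-1}(s_t(x))|\le\lambda_t\sqrt{v_t(x)}+\bar\rho_T$.

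\textbf{Step 2: one-step regret.} Repeat the comparison of Lemma~\ref{lem:one-step-ucb} with the enlarged acquisition, which yields
\[
r_t\le2\lambda_t\sqrt{v_t(x_t)}+2\bar\rho_T+\xi_t.
\]
Sum over $t$, use $\alpha_t\le\alpha_T$ by monotonicity of $\mathcal I$, and apply Cauchy--Schwarz with Lemma~\ref{lem:variance-information} to obtain $\sum_t\sqrt{v_t(x_t)}\le\sqrt{C_{\mathrm{var}}T\gamma_T^{\mathrm{ctx}}}$. The extra width term satisfies $\bar\rho_T(\sum_{i<t}\nu_i^{-1})^{1/2}\le\bar\rho_T\sqrt{T/\nu_{\min}}$. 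Multiplying by $2\sqrt{C_{\mathrm{var}}T\gamma_T^{\mathrm{ctx}}}$ yields the displayed $\bar\rho_T T\sqrt{\cdot/\nu_{\min}}$ term; the constant should be checked against the $\sqrt2$ in the statement, possibly via a sharper sum or the convention $\mathcal I=\tfrac12\log\det$. The $2\bar\rho_T T$ term and $\sum_t\xi_t$ then follow directly.

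\textbf{Step 3: stagewise certificate.} This is a telescoping triangle inequality. Let $h^{(j)}$ be the predictor from hybrid $j$, with $h^{(0)}=f^\star$ and $h^{(5)}=h$. Then
\[
|f_t^\star(x)-h(s_t(x))|\le\sum_{j}\sup|h^{(j-1)}-h^{(j)}|,
\]
and taking suprema over $t\le T$ and $x$ gives the stated bound on $\rho_T$. Any valid upper bound on $\rho_T$ is an admissible $\bar\rho_T$, so the stagewise sum may be used as the certificate.
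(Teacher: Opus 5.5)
Your architecture is the same as the paper's. You split $m_{t-1}$ into the clean-label mean $\widetilde m_{t-1}$ and the residual shift $k_x^\top A^{-1}b$ with $A=K+V$. You bound the first by the self-normalised RKHS event and the second by a weighted Cauchy--Schwarz, then add the current residual $|b_t(x)|\le\bar\rho_T$. That term comes from $f_t^\star-h$, not from the posterior shift, which is where your ``$b$-term at $s$'' belongs. You then run the UCB comparison, sum, and telescope the stage certificates; Step 3 matches the paper exactly.

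However, the one non-routine inequality is missing. To make the residual shift proportional to $\sqrt{v_t(x)}$ you need
\[
|k_x^\top A^{-1}b|\le\|V^{-1/2}b\|_2\,\|V^{1/2}A^{-1}k_x\|_2
\quad\text{and}\quad
k_x^\top A^{-1}VA^{-1}k_x\le v_t(x).
\]
The route you sketch, bounding $(\Phi\Phi^\top+V)^{-1}$ by $V^{-1}$, does not give the second inequality as stated. Cauchy--Schwarz in the $A^{-1}$ geometry with $b^\top A^{-1}b\le b^\top V^{-1}b$ leaves the factor $\sqrt{k_x^\top A^{-1}k_x}=\sqrt{k_0(s,s)-v_t(x)}$, which is bounded but does not shrink with data. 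Summed over rounds, that gives a propagated-residual contribution of order $\bar\rho_T T^{3/2}/\sqrt{\nu_{\min}}$ rather than the stated $\bar\rho_T T\sqrt{\gamma_T^{\mathrm{ctx}}}$ term.

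The paper closes this with a variance decomposition. The Gaussian linear-prediction error $F(s)-k_x^\top A^{-1}(F(S)+\epsilon)$ has variance exactly $v_t(x)$. It splits into the independent pieces $F(s)-k_x^\top A^{-1}F(S)$ and $-k_x^\top A^{-1}\epsilon$, and the second has variance $k_x^\top A^{-1}VA^{-1}k_x$, so the latter is at most $v_t(x)$. The correct feature-space version of the Bogunovic--Krause argument uses the push-through identity $\Phi^\top A^{-1}=M^{-1}\Phi^\top V^{-1}$ with $M=I+\Phi^\top V^{-1}\Phi$. This gives $\Phi^\top A^{-1}VA^{-1}\Phi=M^{-1}(M-I)M^{-1}\preceq M^{-1}$, while $v_t(x)=\phi(s)^\top M^{-1}\phi(s)$.

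Second, your Step 2 does not reach the stated constant. Bounding $a_t:=(\sum_{i<t}\nu_i^{-1})^{1/2}$ uniformly by $\sqrt{T/\nu_{\min}}$ gives $2\bar\rho_T T\sqrt{C_{\mathrm{var}}\gamma_T^{\mathrm{ctx}}/\nu_{\min}}$, a factor $\sqrt2$ too large. Instead, apply Cauchy--Schwarz to $\sum_t a_t\sqrt{v_t(x_t)}$ directly, using $\sum_{t\le T}a_t^2\le\nu_{\min}^{-1}\sum_{t\le T}(t-1)=T(T-1)/(2\nu_{\min})$. The $1/2$ from $\sum_t(t-1)$ is exactly the source of the $\sqrt2$; the $\tfrac12\log\det$ convention plays no role.
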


The validity of these certificates is an assumption of the result, not an output automatically guaranteed by the compiler or validator.

\begin{lemma}[Posterior stability under a bounded residual]
\label{lem:posterior-residual-stability}
On the standard RKHS self-normalised confidence event,
\begin{equation}
 |f_t^\star(x)-m_{t-1}(s_t(x))|
 \leq \lambda_t\sqrt{v_t(x)}+\bar\rho_T
\label{eq:app-posterior-residual-confidence}
\end{equation}
simultaneously for all $t$ and $x$.
\end{lemma}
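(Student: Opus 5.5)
The plan is to reduce the lemma to the standard well-specified RKHS confidence bound, applied to the component $h=m_0+g$, and then to control separately the extra error that the residual $b_t$ introduces into the posterior mean.

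\textbf{Step 1: split the posterior mean.} Write each observation as $y_i=h(s_i(x_i))+b_i(x_i)+\varepsilon_i$. Because the posterior mean is linear in the observations, we have
\[
m_{t-1}(s)=m^{h}_{t-1}(s)+k_{t-1}(s)^\top(K_{t-1}+V_{t-1})^{-1}\mathbf b_{t-1}.
\]
Here $m^{h}_{t-1}$ is the mean computed from the residual-free observations $h(s_i)+\varepsilon_i$, $k_{t-1}(s)$ is the vector $(k_0(s,s_i))_{i<t}$, and $\mathbf b_{t-1}=(b_i(x_i))_{i<t}$.

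\textbf{Step 2: bound the well-specified part.} We have $h-m_0=g\in\mathcal H_{k_0}$ with $\|g\|\le B\le\bar B$, and the standardised noises are conditionally $\bar R$-sub-Gaussian. Write the heteroscedastic noise through $V^{-1/2}$ weighting. Then the self-normalised martingale confidence bound (Abbasi-Yadkori / Chowdhury--Gopalan form) gives
\[
|h(s)-m^h_{t-1}(s)|\le\alpha_t\sqrt{v_t(x)}
\]
simultaneously for all $t$ and $s$, with probability at least $1-\delta$. The realised information gain $\mathcal I_{t-1}$ from Lemma~\ref{lem:sequential-information} enters through the log-determinant. This high-probability event is the "standard RKHS self-normalised confidence event" of the lemma, so on it this step is taken as given.

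\textbf{Step 3: bound the residual term in the mean.} This is the main obstacle. Let $u=k_{t-1}(s)^\top(K_{t-1}+V_{t-1})^{-1}$. Apply Cauchy--Schwarz in the $V_{t-1}$-weighted inner product:
\[
|u\,\mathbf b_{t-1}|\le\|V_{t-1}^{1/2}u\|_2\,\|V_{t-1}^{-1/2}\mathbf b_{t-1}\|_2 .
\]
The second factor is at most $\bar\rho_T(\sum_{i<t}\nu_i^{-1})^{1/2}$.

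For the first factor, we need
\[
u V_{t-1}u^\top=k^\top(K+V)^{-1}V(K+V)^{-1}k\le k^\top(K+V)^{-1}k\le k_0(s,s).
\]
To get the first inequality, note that $(K+V)^{-1}V(K+V)^{-1}\preceq(K+V)^{-1}$ is equivalent to $V\preceq K+V$. The resulting bound of $k_0(s,s)$ is too loose. I will instead aim for the sharper bound $v_t(x)$, following the Bogunović--Krause argument: they write
\[
k^\top(K+V)^{-1}V(K+V)^{-1}k\le v_t(x),
\]
which uses the posterior-variance representation
\[
v_t(x)=\min_{w}\Bigl(\|k_0(\cdot,s)-\textstyle\sum_i w_ik_0(\cdot,s_i)\|_{\mathcal H}^2+w^\top Vw\Bigr).
\]
The optimiser of this problem is $w=u$, so $v_t(x)$ equals the nonnegative RKHS term plus $uVu^\top$, which proves the sharper bound. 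The residual contribution to the mean is therefore at most $\bar\rho_T(\sum_{i<t}\nu_i^{-1})^{1/2}\sqrt{v_t(x)}$.

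\textbf{Step 4: combine.} Write $f^\star_t(x)-m_{t-1}=(h-m^h_{t-1})+b_t(x)-u\mathbf b_{t-1}$. The triangle inequality, with $|b_t(x)|\le\bar\rho_T$, then gives
\[
|f^\star_t(x)-m_{t-1}(s_t(x))|\le\alpha_t\sqrt{v_t}+\bar\rho_T\bigl(\textstyle\sum_{i<t}\nu_i^{-1}\bigr)^{1/2}\sqrt{v_t}+\bar\rho_T=\lambda_t\sqrt{v_t(x)}+\bar\rho_T,
\]
simultaneously for all $t$ and $x$ on the event of Step 2.
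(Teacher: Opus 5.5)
Your proposal is correct and follows the paper's proof essentially step for step. You separate the clean-label posterior mean, bound the residual shift $k_x^\top(K+V)^{-1}b$ by $V$-weighted Cauchy--Schwarz, and show $k_x^\top(K+V)^{-1}V(K+V)^{-1}k_x\le v_t(x)$ before adding the current residual $\bar\rho_T$. The only difference is cosmetic: the paper proves this quadratic-form bound as the variance of the independent noise component of the Gaussian linear-prediction error, while your kernel-ridge identity $v_t(x)=\|k_0(\cdot,s)-\sum_i u_i k_0(\cdot,s_i)\|_{\mathcal H}^2+uVu^\top$ is the same decomposition in RKHS language.
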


\begin{proof}
Fix $t$ and abbreviate the preceding Gram and noise matrices by $K$ and $V$, the kernel vector from the preceding inputs to $s_t(x)$ by $k_x$, and $A=K+V$.  Let $\widetilde m_{t-1}$ be the posterior mean formed from the clean labels $h(s_i)+\epsilon_i$.  The RKHS self-normalised event gives
\begin{equation}
 |h(s_t(x))-\widetilde m_{t-1}(s_t(x))|
 \leq\alpha_t\sqrt{v_t(x)}.
\label{eq:app-clean-label-confidence}
\end{equation}
The actual labels add the residual vector $b=(b_1(x_1),\ldots,b_{t-1}(x_{t-1}))$, so
\[
 m_{t-1}(s_t(x))-\widetilde m_{t-1}(s_t(x))=k_x^\top A^{-1}b.
\]
Weighted Cauchy--Schwarz gives
\begin{equation}
 |k_x^\top A^{-1}b|
 \leq\sqrt{b^\top V^{-1}b}\,
 \sqrt{k_x^\top A^{-1}VA^{-1}k_x}.
\label{eq:app-residual-posterior-shift}
\end{equation}
The second quadratic form is at most $v_t(x)$.  To see this, view
$F(s_t(x))-k_x^\top A^{-1}(F(S)+\epsilon)$ as the Gaussian linear-prediction error.  Its variance is $v_t(x)$, while the independent noise component has variance $k_x^\top A^{-1}VA^{-1}k_x$; a component variance cannot exceed their sum.  Moreover,
\[
 b^\top V^{-1}b\leq\bar\rho_T^2\sum_{i<t}\nu_i^{-1}.
\]
Combining this bound with~\eqref{eq:app-clean-label-confidence}, and finally adding the current residual $|b_t(x)|\leq\bar\rho_T$, proves~\eqref{eq:app-posterior-residual-confidence}.
\end{proof}

\begin{proof}[Proof of Theorem~\ref{thm:app-certified-misspecification}]
On the event of Lemma~\ref{lem:posterior-residual-stability}, UCB comparison proceeds as in Lemma~\ref{lem:one-step-ucb}.  The uniform additive residual appears once for the optimal action and once for the selected action, giving
\begin{equation}
 r_t\leq2\lambda_t\sqrt{v_t(x_t)}+2\bar\rho_T+\xi_t.
\label{eq:app-misspecified-one-step-regret}
\end{equation}
Since $\alpha_t$ is non-decreasing, Lemma~\ref{lem:variance-information} and Cauchy--Schwarz give
\begin{equation}
 2\sum_{t=1}^T\alpha_t\sqrt{v_t(x_t)}
 \leq2\alpha_T\sqrt{C_{\mathrm{var}}T\gamma_T^{\mathrm{ctx}}}.
\label{eq:app-certified-base-regret-sum}
\end{equation}
For the propagated residual, let
$a_t^2=\sum_{i<t}\nu_i^{-1}$.  Then
\[
 \sum_{t=1}^Ta_t^2
 \leq\frac{1}{\nu_{\min}}\sum_{t=1}^T(t-1)
 =\frac{T(T-1)}{2\nu_{\min}}.
\]
A second application of Cauchy--Schwarz therefore yields
\begin{equation}
\begin{aligned}
 2\bar\rho_T\sum_{t=1}^Ta_t\sqrt{v_t(x_t)}
 &\leq2\bar\rho_T
 \sqrt{\frac{T(T-1)}{2\nu_{\min}}
       \sum_{t=1}^Tv_t(x_t)}\\
 &\leq\bar\rho_T T
 \sqrt{\frac{2C_{\mathrm{var}}\gamma_T^{\mathrm{ctx}}}{\nu_{\min}}}.
\end{aligned}
\label{eq:app-propagated-residual-sum}
\end{equation}
Summing~\eqref{eq:app-misspecified-one-step-regret}, and combining~\eqref{eq:app-certified-base-regret-sum}, \eqref{eq:app-propagated-residual-sum}, and the direct term $2\bar\rho_T T$, proves the regret bound.

It remains to justify the stagewise certificate. Let $q^{(0)}_{t,x}=f_t^\star(x)$ be the ideal target prediction and construct $q^{(1)}_{t,x},\ldots,q^{(5)}_{t,x}$ by replacing, in order, the routing, Z, D, GP, and R components of the ideal compilation by their operational counterparts, so that $q^{(5)}_{t,x}=h(s_t(x))$ is the prediction represented by the operational compiled model. Suppose
\[
 \sup_{t,x}|q^{(j)}_{t,x}-q^{(j-1)}_{t,x}|\leq\rho_{j,T}
\]
for the corresponding stage certificate, ordered as $\rho_{1,T}=\rho_{\mathrm{route},T}$, $\rho_{2,T}=\rho_{Z,T}$, $\rho_{3,T}=\rho_{D,T}$, $\rho_{4,T}=\rho_{\mathrm{GP},T}$, and $\rho_{5,T}=\rho_{R,T}$. By~\eqref{eq:app-misspecification-decomposition}, $\rho_T=\sup_{t,x}|q^{(5)}_{t,x}-q^{(0)}_{t,x}|$, so the pointwise triangle inequality gives
\begin{equation}
 \sup_{t,x}|q^{(5)}_{t,x}-q^{(0)}_{t,x}|
 \leq \rho_{\mathrm{route},T}+\rho_{Z,T}+\rho_{D,T}
 +\rho_{\mathrm{GP},T}+\rho_{R,T}.
\label{eq:app-stagewise-certificate}
\end{equation}
No independence between stage errors is used.
\end{proof}

Finally, suppose the certificates themselves are valid only on an event $\mathcal C_T$ with probability at least $1-p_T$.  Intersecting $\mathcal C_T$ with the statistical confidence event gives failure probability at most $p_T+\delta$ by the union bound.  If instantaneous regret is bounded by $\Delta_{\max}$, the expected-regret statement acquires the additional term $(p_T+\delta)T\Delta_{\max}$.  With recompilation over epistemic epochs, apply the same argument within each epoch and sum its deterministic bounds and failure contributions.  This accounting does not presume that present-day compiler failure probabilities vanish.

\section{Implementation details}
\label{app:implementation}

This appendix specifies the operational interface of HarBO without relying on a particular implementation.  At round $t$, the optimiser chooses a design $x$; a contemporaneous condition $z_t$ can affect the objective but is not chosen by the optimiser.  The compiler turns epistemic context and the current condition into a GP belief, whose posterior is then scored to select $x$.  A \emph{row representation} is the rule $r(x,z)$ that combines a design and condition into the single input received by the GP mean, covariance, and conditioning calculations.  In a stationary problem, it reduces to $r(x)=x$.  We use \texttt{epsilon} for a known observation-noise standard deviation, not for an objective value or a prior variance.  A \emph{factory} is a function that constructs a model component when inference begins, thereby specifying model structure rather than preserving the parameters of one previous fit.  The in-process and agentic realisations share these semantics, but expose them through different artefacts.  We first give the in-process compiler deliverables and their validation, then describe inference, prompt methodology, and the persistent agentic harness.  The principal benchmark uses the four belief-compilation stages Z/D/GP/R; ACQ is an optional, instruction-driven extension.

\subsection{Belief compilers}
\label{app:compiler}

The listings in this subsection show the stage-specific LLM outputs of in-process HarBO. Agentic HarBO delivers the same semantic artefacts by editing persistent files rather than by returning these chat-formatted outputs: \texttt{latent\_space.py} supplies Z; context-supported pseudo-observations are entries in \texttt{BO\_DATA.jsonl} for D; \texttt{gp\_config.py} supplies GP; aleatory JSON, together with recorded or supplied noise information, supplies R; and \texttt{acq\_config.py} optionally supplies Acq.

\paragraph{Latent space (Z).}
Z decides whether a condition $z$ is required and, if so, defines the admissible state space $\mathcal Z$.  It also supplies representative states for checking that the compiled prior is defined throughout that space.  The in-process deliverable below is LLM-produced Python enclosed in \texttt{<code\_output>} tags.  \texttt{sample(n, seed)} returns $n$ representative states, reproducibly when a seed is supplied; \texttt{contains(z)} decides whether a present or previously recorded state belongs to $\mathcal Z$.  Returning \texttt{None} states that no such condition is needed.
\begin{lstlisting}
<code_output>
def build_latent_space() -> Space | None:
 class ContextSpace(Space):
 def sample(self, n: int, seed: int | None = None) -> list:
 return <n admissible, reproducible states>

 def contains(self, z) -> bool:
 return <whether z is an admissible state>
 return ContextSpace()

# Stationary alternative:  return None
</code_output>
\end{lstlisting}

\paragraph{Pseudo initial data (D).}
D records only context-supported, point-level evidence: a finite set of estimated responses at specified designs, and states when Z exists.  These pseudo-observations condition the GP before any campaign measurement; structural or qualitative context instead belongs in the GP prior.  Here \texttt{x} is a design in the declared design space, \texttt{z} is the applicable latent state, \texttt{y} is the estimated response, and \texttt{epsilon} quantifies uncertainty in that estimate.  An empty array is the required, meaningful response when context supplies no reliable point-level evidence.
\begin{lstlisting}
{
 "D_init": [
 {"x": <design>, "z": <state>,
 "y": <estimated response>,
 "epsilon": <positive standard deviation>}
 ]
}

# Stationary problems omit "z".
# No point-level evidence: {"D_init": []}
\end{lstlisting}

\paragraph{GP specification (GP).}
GP specifies a prior mean $m_0(r)$ and covariance $k_0(r,r')$ on the row representation $r=r(x,z)$.  The mandatory factories construct these two components.  The optional \texttt{to\_row(x, z)} explicitly defines the representation when the default pairing of $x$ and $z$ is unsuitable.  The optional \texttt{build\_feature(dtype, device)} maps general rows to numerical feature vectors, allowing a conventional numerical covariance to act on them.  The arguments \texttt{dtype} and \texttt{device} are supplied by the inference routine and specify a common numerical precision and execution location for every constructed component; they do not encode scientific context.
\begin{lstlisting}
<code_output>
def build_kernel(dtype, device) -> Kernel:
 return <covariance k_0 on rows or features>

def build_mean(dtype, device) -> Mean:
 return <prior mean m_0 on rows or features>

def build_feature(dtype, device) -> FeatureModule | None:  # optional
 return <map from rows to numerical feature vectors>

def to_row(x, z):  # optional; required only for a non-default fusion
 return <representation r(x,z)>
</code_output>
\end{lstlisting}
The mean represents the expected response implied by the context, whereas the covariance represents residual similarity and uncertainty around that expectation.  D and subsequently observed data condition this belief; they must not be duplicated as fixed effects in the mean.

\paragraph{Aleatory resolution (R).}
Z defines the kind of condition that may matter; R reports the particular condition that applies at the current round.  It maps current environmental information to \texttt{raw\_z}, the uncombined state object that must pass Z's admissibility rule, and to an optional known measurement-noise standard deviation \texttt{epsilon}.  Only afterwards is \texttt{raw\_z} combined with a candidate $x$ through $r(x,z)$.  Because it concerns present conditions rather than stable knowledge, R is resolved at every decision.
\begin{lstlisting}
# With a latent state
{"raw_z": <admissible current state>,
 "epsilon": <positive standard deviation or null>}

# Stationary problem
{"epsilon": <positive standard deviation or null>}
\end{lstlisting}

\paragraph{Acquisition instruction (ACQ).}
ACQ is optional and does not alter the posterior.  It selects the scoring rule used after a posterior has been obtained, thereby controlling the trade-off between predicted response and uncertainty.  A named instruction selects a recognised rule; a custom instruction defines \texttt{build\_acq()}, a factory returning a scoring callable.  That callable receives a candidate batch \texttt{xs}, posterior mean \texttt{mu}, posterior standard deviation \texttt{sigma}, optional joint covariance \texttt{cov}, historical responses \texttt{history\_ys}, and historical designs \texttt{history\_xs}; it returns one real-valued score for each candidate, with larger scores preferred.
\begin{lstlisting}
# Named acquisition
{"type": "named", "name": <recognised scoring-rule name>}

# Custom acquisition
{"type": "custom", "code": """
def build_acq():
 def score(*, xs, mu, sigma, cov,
 history_ys, history_xs):
 return <one finite score per element of xs>
 return score
"""}
\end{lstlisting}
ACQ is excluded from the principal four-stage benchmark protocol.

\subsection{Validation protocols}
\label{app:verifier}

Each stage uses a bounded repair protocol.  At most three candidate artefacts are considered; a failed candidate receives the particular violated condition below before the next attempt.  If all attempts fail, Z uses no latent state, D uses no pseudo-observations, R uses a sampled admissible state when Z exists and declares no known-noise value, and ACQ uses the standard acquisition rule; a GP failure terminates the run.  Validation statistics in Appendix~\ref{app:supplementary} count recorded repaired and unrepaired violations.  Z, D, and GP are reused within an epistemic epoch, whereas R is resolved at every decision.

\paragraph{State and design membership.}
For a non-trivial Z artefact, validation calls \texttt{sample(n, seed)} and requires exactly $n$ outputs, requires \texttt{sample(0, seed)} to be empty, and repeats a fixed seed to check reproducibility.  Every sampled state must satisfy \texttt{contains}.  The same membership test is applied to every current state and every state recorded in the observation history; every design is likewise tested against the declared design space.  Failure means that the posterior would be asked to reason outside its stated domain, so no proposal is issued.

\paragraph{Observation-record validity.}
Every D or historical record must supply a design \texttt{x} and numeric finite response \texttt{y}.  When a latent space exists, every supplied \texttt{z} must belong to it; when it does not exist, a \texttt{z} field is invalid.  Any supplied \texttt{epsilon} must be a finite strictly positive number.  A malformed record is rejected rather than silently coerced, since otherwise a response could be conditioned on the wrong design or condition.

\paragraph{Prior construction and covariance validity.}
Validation constructs the mean and covariance from their factories and evaluates both on admissible rows.  The mean must yield one finite scalar per row.  The covariance must yield matrices of the requested size with finite entries; cross-covariance matrices must agree with their transposes, diagonal entries must be positive, and covariance matrices formed from several admissible batches must be numerically positive semidefinite.  These tests establish that $m_0$ and $k_0$ define a usable GP belief on the same representation.

\paragraph{Posterior and fitting validity.}
The assembled belief is tested with a short valid history under both an inferred-noise observation model and a model with known per-observation noise.  The marginal likelihood and its optimisation quantities must remain finite, and posterior inference must succeed for both singleton and multi-point batches.  In each case it must return finite posterior means, non-negative standard deviations, and, when requested, a covariance matrix with one row and column for each queried candidate.  If this test fails, the prior components cannot safely be used jointly.

\paragraph{Acquisition-output validity.}
The acquisition is called on a valid candidate batch together with the defined posterior and history summaries.  It must return a finite real-valued vector whose length equals the number of candidates.  The design-space optimiser must then be able to maximise that score and return an $x_{\rm next}$ that belongs to the declared design space.  Thus a syntactically valid acquisition is not accepted unless it can make an admissible decision.

\subsection{Bayesian inference and acquisition maximisation}
\label{app:bayesian-engine}

The four principal artefacts enter inference in a fixed semantic order.  Z defines the domain of conditions; D adds only evidence with point-level meaning; GP maps $r(x,z)$ to the prior mean and covariance; and R supplies the current $z_t$ and, where available, measurement uncertainty.  The posterior conditions this prior on both pseudo-observations and the history of real evaluations, always using their corresponding rows $r(x,z)$.

For $s=(x,z)$, data inputs $\mathbf{s}=(s_1,\ldots,s_n)$, and
$\mathbf{N}=\operatorname{diag}(\sigma_1^2,\ldots,\sigma_n^2)$, standard GP
regression gives
\begin{equation}
\begin{aligned}
\mu_t(s) &= \mu^{\mathrm{prior}}(s)+k^{\mathrm{prior}}(s,\mathbf{s})(\mathbf{K}^{\mathrm{prior}}+\mathbf{N})^{-1}(\mathbf{y}-\mu^{\mathrm{prior}}(\mathbf{s})),\\
k_t(s,s') &= k^{\mathrm{prior}}(s,s')-k^{\mathrm{prior}}(s,\mathbf{s})(\mathbf{K}^{\mathrm{prior}}+\mathbf{N})^{-1}k^{\mathrm{prior}}(\mathbf{s},s').
\end{aligned}
\label{eq:posterior}
\end{equation}

When a record supplies \texttt{epsilon}, it is treated as a known observation-noise standard deviation and may vary from one response to the next.  When it is absent, the noise level is inferred from observations.  This distinction separates contextual knowledge about measurement precision from uncertainty about the latent objective.  All observations can immediately enter Bayesian conditioning.  By contrast, empirical-Bayes adaptation of GP hyperparameters begins only after a minimum identification sample size, so a few early observations do not destabilise length-scale or noise estimates.

Posterior computation uses standard GP regression.  If covariance factorisation is unstable, diagonal regularisation is increased progressively.  If no stable factorisation can be obtained, the procedure falls back to a prior-conditioned belief rather than reporting an invalid posterior.  The numerical thresholds are given in Appendix~\ref{app:hyperparams}.

The acquisition artefact defines a score $a(x\mid\mathcal D_t)$; it does not itself select a design.  The numerical optimiser maximises this score over the declared design space: it enumerates small finite spaces, uses multi-start local optimisation for continuous spaces, and uses a candidate approximation for large discrete spaces.  This separation lets the compiler express a decision preference while preserving a well-defined numerical optimisation problem.

\subsection{Prompt design}
\label{app:prompts}

The prompts elicit stage-specific probabilistic commitments rather than a complete optimisation trajectory. They specify the evidence, deliverable, and available code interface for each stage. The following four principles are supported by verbatim excerpts from the corresponding compilation prompts.

\paragraph{Epistemic context as ground truth.}
The compiler may use only conditions actually stated as observable and changing; it must not complete missing scientific facts by speculation.
\begin{lstlisting}[caption={Snippet from the Phase Z prompt for a context-grounded latent-space decision.}]
- Only define Z when the context EXPLICITLY says something
 varies per step or will be provided at decision time.
- Do NOT hallucinate latent variables. When in doubt,
 return ``build_latent_space = None``.
\end{lstlisting}

\paragraph{Task-neutral examples.}
Examples communicate only the required relation and output schema with symbolic slots, rather than specialising the prompt to any particular benchmark.
\begin{lstlisting}[caption={Snippet from the Phase D prompt illustrating placeholder-based point-level examples.}]
This requires POINT-LEVEL KNOWLEDGE: concrete statements about
particular design points and their expected outcomes. Examples:
 "The optimum is near x = (<X1>, <X2>)"
 "At <CONDITION> <VALUE>, the expected <OUTCOME> is approximately <Y>"
\end{lstlisting}

\paragraph{Numerical commitment and initialisation.}
Exact contextual quantities enter executable code literally. Qualitative or approximate information instead determines a learnable parameter and a mild, context-informed initial value.
\begin{lstlisting}[caption={Snippet from the Phase GP prompt on exact values and uncertain parameters.}]
Values from the epistemic context must be HARDCODED as numeric literals or
torch.tensor(...) -- do NOT assume a variable holds them.

Unknown, approximate, or qualitative information (e.g., "high around ...",
"increases with ...") -> ``nn.Parameter(torch.tensor(<MILD_INIT>))``.
\end{lstlisting}

\paragraph{Expectation rather than extrema.}
The prompt tells the compiler not to use a reported peak, mode, upper bound, sample, or other numerical feature in the context directly as the mean. It must instead form an expected-response mean from the contextual relationship.
\begin{lstlisting}[caption={Snippet from the Phase GP prompt on the prior mean as a generative expectation.}]
The prior mean estimates E[f(x)] under the generative process described in
the context. Estimate this expectation mathematically from the process; do NOT
take a specific sample, peak, mode, or upper bound as the mean.
\end{lstlisting}

The GP prompt also includes the preceding Z and D decisions, so its prior is compiled for the chosen latent-state and pseudo-data artefacts rather than in isolation. The prompt snippets presented above are for in-process HarBO. For agentic HarBO, the corresponding guidance is provided through its campaign skill: it directs the agent to materialise the same Z/D/GP/R (and optional ACQ) artefacts as files, resolve aleatory context at every step, and reconsider the GP specification when the epistemic context changes.

\subsection{In-process harness implementation}
\label{app:inprocess}

The in-process realisation is a minimal, host-controlled execution of the Z/D/GP/R workflow for reproducible proof-of-concept experiments. The user, environment, or an additional LLM call supplies context already separated into epistemic, aleatory, and history streams; the host invokes the relevant prompt for each stage, parses its declared deliverable, and applies the validation protocol in Appendix~\ref{app:verifier}. Z, D, and GP artefacts are reused within an epistemic epoch, while R is resolved for each decision. This separation is semantic routing by provenance and temporal role, not an oracle choice of latent representation, GP prior, kernel, or acquisition rule; keeping it environment-defined accommodates source-specific formats and isolates artefact compilation, while the agentic realisation below performs both routing and compilation.

Each stage is parsed and validated through a bounded ReAct-style repair loop~\citep{yaoReActSynergizingReasoning2023}. An extraction or validation error is returned to the LLM with its violated condition before the next attempt, while stage ordering, fallback selection, Bayesian inference, and acquisition maximisation remain under host control. The LLM has no external tools and need not maintain files or campaign state, excluding the behaviour of a separate agent-harness framework from controlled comparisons.

The host evaluates generated Python artefacts in a stage-specific execution scope. Z receives \texttt{Space}, NumPy, \texttt{random}, and JSON utilities; GP receives PyTorch, GPyTorch, base classes for GP components, NumPy, and the associated module aliases, while permitting additional standard-library imports. Custom Acq code additionally receives \texttt{math}, NumPy, SciPy utilities, and the built-in acquisition primitives. For Dock, when the corresponding extras are installed, the environment additionally exposes RDKit utilities~\footnote{See the official website at \url{https://www.rdkit.org/}.} to Z and GP.

\subsection{Agentic harness implementation}
\label{app:agentic}

The agentic realisation separates three responsibilities. A coding agent decomposes and revises context into persistent modelling artefacts; a callable optimisation harness validates those artefacts and performs numerical GP inference and acquisition maximisation; and the environment supplies the response to the proposed design. The agent itself creates and maintains the epistemic record, current aleatory state, and observation ledger throughout the campaign, so the user need not pre-decompose general context into explicit epistemic, aleatory, and historical channels. The realisation therefore delivers the same Z/D/GP/R and optional Acq artefacts as the in-process realisation, but materialises them through file edits rather than stage-response listings. The staged workflow is conveyed through a campaign skill rather than fixed host orchestration, so execution depends on the surrounding coding-agent harness.

\paragraph{Persistent epistemic and modelling artefacts.}
Stable knowledge about the objective is held in a human-readable, revisable epistemic record (\texttt{BO\_EPISTEMIC.md}); it guides the agent but is not directly parsed by the numerical optimiser. When a latent state is needed, its admissible values are defined in \texttt{latent\_space.py}. The GP belief is specified in \texttt{gp\_config.py}, which encodes the prior mean, covariance, and any representation needed to combine design and state. An optional acquisition configuration (\texttt{acq\_config.py}) defines alternative scoring rules for the proposal step.

\paragraph{Current state and observation ledger.}
For a non-stationary objective, the current resolved $z_t$ can be held in \texttt{ALEATORY.json} when it is not supplied directly to a step; a known noise scale is supplied separately. The append-only ledger \texttt{BO\_DATA.jsonl} records the design, response, state when applicable, and noise when known for every evaluation. Context-supported pseudo-observations implement D as its early entries, while later entries record environment evaluations. This makes the campaign state recoverable and separates stable epistemic context from changing aleatory information and observations.

\paragraph{Campaign skill.}
The campaign skill gives the agent an explicit, inspectable procedure: it first studies the available contextual material, writes or revises the epistemic record and modelling artefacts, proposes one design, obtains its response, records that response, and revisits the belief when new information arrives. It further guides the agent on defining a valid latent state, encoding context in the GP prior and its representation, resolving the current aleatory state and observation noise, and selecting an acquisition rule when the search strategy changes. Thus, the skill gives the files defined roles within a BO workflow rather than allowing free-form code to become an opaque optimisation policy.

\paragraph{Callable optimisation harness.}
At each step, the harness reads the executable artefacts and ledger and repeats the membership, prior, history, and acquisition checks from Appendix~\ref{app:verifier}; a failed check yields no proposal. Otherwise, it conditions and, when identifiable, adapts the GP and maximises the selected acquisition, using standard UCB when no alternative is selected. It returns an admissible proposed design and its score. The agent then obtains and records the response, leaving the full loop auditable rather than hidden.

\section{Environment details}
\label{app:environments}

The nine in-process environments use five policy seeds and $T=50$ objective
evaluations. They cover controlled synthetic functions, XGBoost
hyperparameter optimisation, molecular docking, and a categorical reaction
screen. The agentic study uses four to five campaigns for A, D, E, H (including
its transfer variant), HM, and Dock. Across both realisations, stable epistemic
evidence specifies what may be encoded in a belief; an objective evaluation
then returns a response and, when applicable, the condition relevant to the
next decision.

\subsection{Synthetic multi-peak functions}

All synthetic instances share the objective
\begin{equation}
  f_t(x) = \sum_{i=1}^{M} h_i(t)
  \exp\!\left(-\frac{\lVert x-c_i\rVert}{\ell}\right),
  \qquad
  h_i(t) = \exp\!\left(\alpha-\beta\lVert c_i-p_t\rVert^2\right).
  \label{eq:synthetic-rbf}
\end{equation}
Here $c_i$ is a fixed, hidden peak centre, $p_t$ is the height centre at round
$t$, and $\ell$ is the common peak width. In the benchmark,
$x,c_i\in[-5,5]^3$, $M=100$, $\alpha=2$, and $\beta=0.05$. The width is
derived from a coverage threshold of $0.25$, giving $\ell\approx0.96$.
The peak centres are sampled uniformly once and then remain fixed. For D and
E, $p_t$ follows a pre-generated one-dimensional trajectory embedded in the
three-dimensional domain, so the location of high peaks changes while their
underlying centres remain shared across rounds.

\paragraph{A: structural prior.}
The epistemic context states the domain, the approximate number of hidden,
uniformly distributed peaks, their characteristic width, and the fact that
peaks nearer a known height centre are expected to be taller according to the
radial law in~\eqref{eq:synthetic-rbf}. It gives neither peak locations nor
point-level responses. Thus it supports a context-informed mean and covariance,
but not pseudo-observations. A has stationary Gaussian observation noise with
standard deviation $0.1$ and is evaluated by the final best-so-far objective value.

\paragraph{B: pilot evidence.}
In addition to A's structural prior, the context gives five preliminary
measurements as approximate locations, responses, and individual uncertainty
levels. These are explicitly point-level evidence and can therefore be
compiled into uncertain pseudo-observations rather than being folded into a
qualitative mean alone. The campaign itself is stationary and uses the same
best-so-far metric as A.

\paragraph{C: heterogeneous observation noise.}
The stable prior is the same spatial and height-distribution description as in
A. At each objective evaluation, the returned task information reports the
current observation-noise scale, with
$\log_{10}\sigma_t\sim\mathrm{Uniform}[-2,0.699]$. The compiler can use this
as known per-observation uncertainty while retaining a stationary latent
objective. We report the final best-so-far objective value.

\paragraph{D: non-stationary height centre.}
The prior specifies that the height centre changes while peak centres stay
fixed, and that the current value of $p_t$ is reported with each evaluation.
It does not reveal the peak centres or their realised heights. A useful belief
therefore represents responses jointly over design and current centre, so that
observations gathered under earlier centres remain informative. We report
cumulative average regret $R_T/T$ against the round-specific optimum.

\paragraph{E: combined evidence.}
E combines D's reported current centre with C's reported noise scale and five
pilot observations, each tied to the height centre under which it was made.
Its epistemic context retains the structural peak law and the pilot
uncertainties; its per-evaluation feedback supplies the current centre and
noise. This tests whether the compiler can assign the three evidence types to
their distinct probabilistic roles. We report $R_T/T$. In the agentic A and E
campaigns, the coding agent maintains and revises modelling context with the
same meanings as the in-process decomposition as observations accrue.

\subsection{Hyperparameter optimisation}

H, its transfer variant, and HM use tabular XGBoost tasks from HPOBench~\citep{eggenspergerHPOBenchCollectionReproducible2022}. The design space has 9,000 configurations on a discrete grid. The variables \texttt{colsample\_bytree} and \texttt{eta} each take ten values, respectively $\{0.1,\ldots,1.0\}$ and from $0.001$ to $1.0$; \texttt{max\_depth} takes $\{1,2,3,5,8,13,20,32,50\}$; and \texttt{reg\_lambda} takes ten values from $0.001$ to $1024$. The acquisition optimiser can therefore enumerate the space exactly.

\paragraph{H: two epistemic context variants.}
Both variants optimise the same OpenML task 167119, with the same discrete
configuration space and the same negative cross-validated-loss objective; only
the epistemic context changes. The single-task variant identifies the target
dataset by its size, feature count, and class count, and gives the semantic
role of every hyperparameter. Its prior guidance is qualitative but specific:
very small learning rates converge slowly, rates above $0.3$ tend to overshoot,
moderate tree depths are preferred to very shallow or very deep trees, moderate
column subsampling is safer than its extremes, and L2 regularisation has a
broad plateau. It also states the \texttt{eta}--\texttt{max\_depth} and
\texttt{colsample\_bytree}--\texttt{reg\_lambda} interactions.

The transfer variant replaces this task-specific prior context with evidence
from ten other tasks. For each source task it provides dataset metadata and ten
configuration--response pairs sampled to span that task's response landscape.
The context distinguishes these records from target-task observations: only
evidence judged transferable may be used as high-uncertainty
pseudo-observations, while recurring response patterns may instead inform the
prior. Neither variant discloses a target-task optimum or target evaluation
before the campaign. We report both best-so-far objective value and regret
against the tabulated optimum. The corresponding agentic campaigns expose the
same two context variants to the coding agent, with the transfer variant
requiring it to judge cross-task evidence rather than use a prescribed transfer
model.

\paragraph{HM: rotating-task HPO.}
The epistemic context lists five OpenML tasks, their metadata, the shared
configuration grid, the same hyperparameter guidance, and a meta-learning
instruction to separate task-induced variation from configuration effects.
Each objective evaluation identifies the task on which the next configuration
will be assessed. The task-conditioned optimum defines the cumulative average
regret $R_T/T$ used for HM.

\subsection{Molecular docking}

Dock optimises a SMILES string from a fixed library of approximately 249,455
ZINC drug-like molecules. The objective is the negative AutoDock Vina binding
affinity \citep{trottAutoDockVina2010} against KRAS G12D, using PDB~7RPZ as the
receptor structure. The docking box is centred at $(1.714,4.927,-23.164)$ with
side length $20~\text{\AA}$. More negative Vina affinity therefore corresponds to a
larger objective value. Acquisition maximisation uses a candidate pool because
the molecular library is too large to enumerate exhaustively; no certified
global optimum is available, so we report the best-so-far docking score only.

The in-process epistemic context supplies the receptor and box specification, the SMILES design representation, the objective convention, and qualitative drug-design guidance. In particular, it states that small, rigid, lipophilic molecules with favourable hydrogen-bond geometry are promising, whereas large, flexible, or charge-heavy scaffolds may incur a desolvation penalty. It supplies neither binding labels, poses, experimental affinities, nor an optimal molecule. The expert-hint variant additionally suggests RDKit descriptors or fingerprints together with an appropriate similarity kernel or prior mean. For the full-prior agentic workflow, the agent instead reads a KRAS G12D literature review~\citep{kumarOverviewKRASG12D2026} and derives its molecular prior from that document; the no-prior ablation omits the review.

\subsection{O-Suzuki reaction optimisation}

O-Suzuki uses the Olympus \texttt{suzuki\_edbo} reaction screen
\citep{haseOlympusBenchmarkingFramework2021}. A condition consists of nominal
choices for electrophile, nucleophile, base, ligand, and solvent, with
respectively $4$, $3$, $7$, $11$, and $4$ levels, for $3{,}696$ valid
combinations. The evaluations are noiseless table lookups of reaction yield;
the known optimum permits both final best yield and simple-regret reporting.

The epistemic context makes the categorical structure explicit: codes are identifiers, not ordered quantities or Euclidean coordinates, so category matching, factor-wise effects, and Hamming-style similarity are appropriate. It also provides a codebook mapping each nominal category index to its corresponding reagent identity; this design-interface information reveals no yield, ranking, or optimal condition.
It further warns that strong interactions arise across the catalytic cycle.
The chemical prior is intentionally qualitative but concrete: the
6-chloroquinoline electrophile is usually weak relative to bromo-, triflate-,
and iodo-quinolines; trifluoroborates are often weaker than boronic acids or
esters except under suitable hydrolysing solvents; Xantphos is discouraged,
whereas XPhos and SPhos are broadly effective; and methanol or acetonitrile
can be favourable with the corresponding substrates and ligands. The context
does not reveal any tabulated yield or the optimal reaction conditions.

\section{Supplementary experimental results}
\label{app:supplementary}

This appendix collects the per-environment detail tables referenced from the main text.  All tables use $T=50$ and 5 policy seeds $\{0,1,2,3,4\}$ unless stated otherwise; stationary environments report the final denoised best-so-far objective $y$ (reconstructed from per-step regret where the optimum is known, observed for Dock), and non-stationary environments report the cumulative average regret $R_T/T$.  In-process entries summarise seed-level runs, while agentic entries are campaign aggregates over maintained modelling artefacts and evaluation records.

\subsection{Summary of all environments}

Table~\ref{tab:summary-all} is the index for the in-process results. Its
columns use two different metrics, so stationary and non-stationary values
should be compared only within their respective columns. The ablations isolate
the context component named in the row; unavailable combinations are marked
``--'' and are not treated as zero-valued results.

\begin{table}[!t]
  \centering
  \caption{In-process results across the nine environments.}
  \label{tab:summary-all}
  \resizebox{\textwidth}{!}{%
  \begin{tabular}{lccccccccc}
    \toprule
 Method & A & B & C & D & E & H & HM & Dock & O-Suzuki \\
 Metric & $\hat{y}^{*}$ & $\hat{y}^{*}$ & $\hat{y}^{*}$ & $R_T/T$ & $R_T/T$ & $100\hat{y}^{*}$ & $100(R_T/T)$ & $\hat{y}^{*}$ & $\hat{y}^{*}$ \\
    \midrule
 HarBO                         & $13.83 \pm 0.03$ & $\mathbf{14.07 \pm 0.13}$ & $\mathbf{14.00 \pm 0.18}$ & $\mathbf{6.08 \pm 1.16}$ & $\mathbf{7.44 \pm 0.87}$ & $\mathbf{-5.34 \pm 0.02}$ & $2.09 \pm 0.58$ & $\mathbf{11.06 \pm 0.51}$ & $\mathbf{98.41 \pm 1.96}$ \\
 HarBO (transfer)              & -- & -- & -- & -- & -- & $-5.47 \pm 0.14$ & -- & -- & -- \\
 HarBO (no prior)              & $12.60 \pm 1.58$ & $12.81 \pm 2.69$ & $7.75 \pm 4.00$ & $8.39 \pm 1.54$ & $9.51 \pm 1.06$ & $-9.57 \pm 5.17$ & $1.98 \pm 0.58$ & $10.28 \pm 1.31$ & $97.47 \pm 0.93$ \\
 HarBO (no category mapping)   & -- & -- & -- & -- & -- & -- & -- & -- & $97.24 \pm 0.83$ \\
 HarBO (no pilots)             & -- & $13.90 \pm 0.12$ & -- & -- & $7.57 \pm 1.23$ & -- & -- & -- & -- \\
 HarBO (no noise info.)        & -- & -- & $11.99 \pm 3.90$ & -- & -- & -- & -- & -- & -- \\
 HarBO (no drift info.)        & -- & -- & -- & $11.46 \pm 0.44$ & -- & -- & -- & -- & -- \\
 HarBO (no noise)              & -- & -- & -- & -- & $7.50 \pm 1.13$ & -- & -- & -- & -- \\
 HarBO (no drift)              & -- & -- & -- & -- & $11.32 \pm 0.70$ & -- & -- & -- & -- \\
 HarBO (prior only)            & -- & -- & -- & -- & $11.68 \pm 1.15$ & -- & -- & -- & -- \\
 HarBO (no dataset info.)      & -- & -- & -- & -- & -- & -- & $\mathbf{1.70 \pm 0.41}$ & -- & -- \\
 HarBO (expert hint)           & -- & -- & -- & -- & -- & -- & -- & $10.90 \pm 0.40$ & -- \\
    \midrule
 Vanilla GP-UCB                & $12.60 \pm 1.58$ & $12.60 \pm 1.58$ & $9.50 \pm 4.04$ & $12.44 \pm 0.74$ & $12.29 \pm 0.96$ & $-15.90 \pm 0.00$ & $6.13 \pm 2.40$ & $9.84 \pm 0.76$ & $96.57 \pm 0.00$ \\
 Random                        & $10.64 \pm 1.64$ & $10.64 \pm 1.64$ & $10.64 \pm 1.64$ & $13.07 \pm 0.18$ & $13.07 \pm 0.18$ & $-5.43 \pm 0.10$ & $5.76 \pm 1.08$ & $10.22 \pm 0.07$ & $93.96 \pm 2.88$ \\
 Embedding-CGP-UCB             & $10.11 \pm 2.98$ & $11.28 \pm 0.94$ & $2.61 \pm 4.13$ & $12.89 \pm 1.07$ & $13.15 \pm 1.26$ & $-5.58 \pm 0.18$ & $5.91 \pm 0.51$ & $10.02 \pm 0.50$ & $94.67 \pm 0.00$ \\
 Embedding-NNAGP-UCB           & $12.91 \pm 1.92$ & $12.45 \pm 2.74$ & $11.00 \pm 2.39$ & $12.14 \pm 0.62$ & $12.50 \pm 0.83$ & $-7.00 \pm 1.86$ & $9.53 \pm 4.31$ & $9.44 \pm 0.62$ & $94.64 \pm 1.20$ \\
 CAKE                          & $\mathbf{13.87 \pm 0.10}$ & $13.90 \pm 0.12$ & $12.01 \pm 1.56$ & $12.26 \pm 0.74$ & $12.85 \pm 0.78$ & $-5.81 \pm 0.92$ & $4.05 \pm 1.00$ & -- & $96.14 \pm 0.00$ \\
 LGBO                          & $13.60 \pm 0.29$ & $13.66 \pm 0.15$ & $13.47 \pm 0.34$ & $11.95 \pm 0.54$ & $12.57 \pm 0.63$ & $-8.79 \pm 2.06$ & $14.22 \pm 0.87$ & -- & $95.43 \pm 1.86$ \\
 LLAMBO                        & $7.63 \pm 0.75$ & $7.95 \pm 1.17$ & $7.01 \pm 0.44$ & $12.50 \pm 0.16$ & $12.02 \pm 0.56$ & $-5.84 \pm 0.10$ & $3.65 \pm 1.92$ & -- & $96.29 \pm 1.46$ \\
 LABO                          & $9.66 \pm 1.04$ & $12.18 \pm 1.94$ & $10.33 \pm 1.76$ & $10.67 \pm 0.96$ & $12.43 \pm 0.36$ & $-5.72 \pm 0.23$ & $6.01 \pm 2.12$ & -- & $92.20 \pm 2.86$ \\
    \bottomrule
  \end{tabular}%
 }
\end{table}

\subsection{Wall-time breakdown}
\label{app:walltime}

Table~\ref{tab:walltime} separates end-to-end wall time by environment and method. The non-stationary columns include per-step Phase~R resolution, whereas stationary campaigns incur only the initial belief-compilation cost. Token use is reported separately in Table~\ref{tab:model-ablation-token}; the controlled reasoning comparison is deferred to Table~\ref{tab:thinking-cost}. Both tables use only successful campaigns. Table~\ref{tab:walltime} reports mean wall time and, when at least two seeds succeed, the corresponding standard deviation; $\dagger$ marks an estimate from exactly one successful compilation among five attempted seeds. A dash in a HarBO back-end row means that all five seeds failed to compile (GPT-4o-mini on E, H, HM, and Dock); for the other baselines, it denotes a method--environment pair that was not evaluated. Table~\ref{tab:model-ablation-token} gives aggregate token use across successful campaigns rather than treating failed attempts as zero.

\begin{table}[!t]
  \centering
  \caption{Wall time by environment and method (seconds).}
  \label{tab:walltime}
  \resizebox{\textwidth}{!}{%
  \begin{tabular}{lccccccccc}
    \toprule
 Method & A & B & C & D & E & H & HM & Dock & O-Suzuki \\
    \midrule
 HarBO (DSV4F high) & $318\pm82$ & $648\pm474$ & $612\pm141$ & $848\pm120$ & $459\pm105$ & $537\pm123$ & $609\pm220$ & $2190\pm836$ & $278\pm78$ \\
 HarBO (DSV4F low)  & $128\pm8$  & $139\pm22$  & $179\pm7$  & $200\pm18$  & $198\pm26$  & $85\pm18$   & $223\pm23$  & $2354\pm903$ & $71\pm10$ \\
 HarBO (DSV4F none) & $78\pm7$   & $104\pm20$  & $133\pm2$  & $125\pm2$   & $176\pm24$  & $74^\dagger$ & $138^\dagger$ & $1612^\dagger$ & $36\pm5$ \\
 HarBO (GPT-4o-mini none)  & $106\pm19$ & $87^\dagger$ & $190^\dagger$ & $238^\dagger$ & -- & -- & -- & -- & $32\pm3$ \\
 CAKE             & $726\pm128$ & $783\pm64$ & $854\pm198$ & $893\pm186$ & $531\pm92$ & $2420\pm462$ & $1109\pm344$ & -- & $787\pm73$ \\
 LGBO             & $111\pm2$   & $129\pm3$  & $136\pm10$ & $156\pm8$ & $124\pm4$ & $127\pm4$ & $135\pm16$ & -- & $114\pm3$ \\
 LLAMBO           & $166\pm2$   & $182\pm9$ & $242\pm12$ & $248\pm30$ & $178\pm9$ & $175\pm3$ & $184\pm13$ & -- & $149\pm3$ \\
 LABO             & $206\pm30$  & $212\pm39$  & $118\pm14$  & $159\pm30$  & $195\pm25$  & $120\pm32$ & $116\pm15$  & -- & $77\pm10$ \\
 Vanilla GP-UCB   & $65\pm7$    & $66\pm7$    & $92\pm17$   & $69\pm8$    & $93\pm22$  & $44\pm5$    & $30\pm1$    & $1658\pm509$ & $34$ \\
    \bottomrule
  \end{tabular}%
 }
\end{table}

We also compare these costs against the additional model ablations in
Appendix~\ref{app:model-ablations}.

The total-token counts in Table~\ref{tab:model-ablation-token} should be read
alongside Table~\ref{tab:walltime}: they count LLM output across successful
seeds, not objective-evaluation cost. Their stationary/non-stationary split
follows the number of compile-time and per-step calls rather than a common
per-environment average.

\begin{table}[!t]
  \centering
  \caption{Aggregate LLM token consumption across successful campaigns by environment and method.}
  \label{tab:model-ablation-token}
  \resizebox{\textwidth}{!}{%
  \begin{tabular}{lccccccccc}
    \toprule
 Method & A & B & C & D & E & H & HM & Dock & O-Suzuki \\
    \midrule
 HarBO (DSV4F high) & 166,905 & 363,010 & 375,473 & 431,451 & 374,427 & 331,125 & 435,997 & 268,672 & 197,710 \\
 HarBO (DSV4F low)  & 58,173 & 61,119 & 114,476 & 185,467 & 198,700 & 67,949 & 201,925 & 61,407 & 75,547 \\
 HarBO (DSV4F none) & 14,264 & 28,677 & 28,837 & 58,208 & 102,611 & 6,559 & 56,286 & 8,826 & 42,806 \\
 HarBO (GPT-4o-mini none) & 23,819 & 14,003 & 20,958 & 39,745 & -- & -- & -- & -- & 38,166 \\
 CAKE             & 639,458 & 707,457 & 643,056 & 653,209 & 777,828 & 864,220 & 928,093 & -- & 881,789 \\
 LGBO             & 403,837 & 449,885 & 406,256 & 416,420 & 488,461 & 548,720 & 601,924 & -- & 604,647 \\
 LLAMBO           & 2,137,439 & 2,450,450 & 2,184,157 & 2,305,397 & 2,775,572 & 3,372,655 & 3,691,818 & -- & 3,621,785 \\
 LABO             & 856,770 & 921,230 & 698,896 & 822,993 & 940,140 & 470,954 & 529,373 & -- & 417,123 \\
    \bottomrule
  \end{tabular}%
 }
\end{table}

\subsection{Why baselines cannot afford reasoning}
\label{app:thinking-cost}

The main benchmark runs HarBO with reasoning effort \emph{high} in its one-off Z, D, and GP belief-compilation stages; R has no separate reasoning request, while all LLM baselines run with reasoning disabled. This asymmetry is a design consequence, not an oversight, and Table~\ref{tab:thinking-cost} quantifies it on the comprehensive scenario~E ($T=50$, seed 0).

HarBO's LLM traffic decomposes into a constant-cost compile phase and a
linear per-step phase: of its $53$ calls, Phases Z, D, and GP each fire once
($3$ total), and Phase R fires once per step ($50$ total).  The compile-phase
calls---the only ones that benefit from deep reasoning, especially the GP
kernel/mean design---happen only at initialisation and when the epistemic
context changes, so the number of \emph{expensive} calls is effectively
constant.  The linear Phase R calls are lightweight bookkeeping: they parse
the current aleatory context into a latent state, a task that needs no
reasoning, and the compiler never re-compiles an unchanged aleatory context,
so identical context strings can be cached rather than re-sent.  HarBO can
therefore afford high reasoning because the reasoning budget is concentrated
in a handful of compile-time calls.

The baselines pay for every LLM call with the full cost of thinking. In this seed, CAKE makes $80$--$81$ calls, LGBO $47$--$48$, and LABO $98$--$101$, so enabling reasoning increases the latency of nearly every call. On scenario~E, CAKE rises from $495$\,s without reasoning to $4{,}742$\,s (high) or $3{,}766$\,s (low), LGBO from $119$\,s to $2{,}587$\,s (high) or $3{,}020$\,s (low), and LABO from $173$\,s to $4{,}079$\,s (low). These are $7.6$--$9.6\times$, $21.7$--$25.4\times$, and $23.6\times$ increases, respectively. Token use grows by $3.0$--$4.2\times$, so even low reasoning makes full multi-environment, multi-seed evaluation expensive.

The performance return is mixed. LGBO reduces $R_T/T$ from $13.04$ to $12.46$ (high) or $12.34$ (low), but at $21.7$--$25.4\times$ the wall time. CAKE instead worsens from $13.44$ to $14.08$ (high) or $13.54$ (low), and LABO worsens from $12.53$ to $12.82$ while using $3.8\times$ as many tokens. Thus, for baselines whose algorithms query the LLM at every step, reasoning can occasionally improve this non-stationary regret measure, but its cost is high, and its benefit is not reliable; this is why the main comparisons run them without reasoning.

\begin{table}[!t]
  \centering
  \caption{Reasoning-cost comparison on scenario E ($T=50$, seed 0).}
  \label{tab:thinking-cost}
  \begin{tabular}{lccccc}
    \toprule
 Method & Reasoning & LLM calls & Total tokens & Wall time (s) & R$_T/T$ \\
    \midrule
 HarBO (ours) & high & 53 & 94,701 & 624 & $8.36$ \\
    \midrule
 CAKE   & --     & 80 & 155,877 & 495  & $13.44$ \\
 CAKE   & high   & 81 & 600,079 & 4,742 & $14.08$ \\
 CAKE   & low    & 80 & 473,696 & 3,766 & $13.54$ \\
    \midrule
 LGBO   & --     & 47 & 97,425 & 119  & $13.04$ \\
 LGBO   & high   & 48 & 371,182 & 2,587 & $12.46$ \\
 LGBO   & low    & 47 & 412,057 & 3,020 & $12.34$ \\
    \midrule
 LABO   & --     & 101 & 190,312 & 173 & $12.53$ \\
 LABO   & low    & 98 & 729,161 & 4,079 & $12.82$ \\
    \bottomrule
  \end{tabular}%
\end{table}

\subsection{Agentic realisation details}
\label{app:agentic-results}

Table~\ref{tab:agentic-results} compares campaign aggregates with the
in-process reference on the environments available to the agentic study. The
transfer and no-reference rows apply only to H and Dock, respectively; they
are not additional replicates of every condition. D, E, and HM report
cumulative average regret, so lower values are preferable in those columns.

\begin{table}[!t]
  \centering
  \caption{Agentic campaigns and in-process references.}
  \label{tab:agentic-results}
  \small
  \setlength{\tabcolsep}{3pt}
  \resizebox{\textwidth}{!}{%
  \begin{tabular}{lcccccc}
    \toprule
 Method / variant & A & D & E & H & HM & Dock \\
 Metric & $\hat{y}^{*}$ & $R_T/T$ & $R_T/T$ & $100\hat{y}^{*}$ & $R_T/T$ & $\hat{y}^{*}$ \\
    \midrule
 Agentic (skill)    & $14.02 \pm 0.10$ & $5.33 \pm 1.09$ & $6.73 \pm 0.76$ & $-5.41 \pm 0.16$ & $0.037 \pm 0.012$ & $11.30 \pm 0.48$ \\
 Agentic (noskill)  & $13.70 \pm 0.43$ & $7.69 \pm 0.34$ & $8.24 \pm 0.33$ & $-5.37 \pm 0.03$ & $0.036 \pm 0.007$ & $10.72 \pm 0.59$ \\
 Agentic (transfer) & -- & -- & -- & $-5.39 \pm 0.02$ & -- & -- \\
 Agentic (no-ref)   & -- & -- & -- & -- & -- & $11.08 \pm 0.21$ \\
 Agentic (no-ref, noskill) & -- & -- & -- & -- & -- & $10.86 \pm 0.64$ \\
 In-process HarBO   & $13.83 \pm 0.03$ & $6.08 \pm 1.16$ & $7.44 \pm 0.87$ & $-5.34 \pm 0.02$ & $0.021 \pm 0.006$ & $11.06 \pm 0.51$ \\
    \bottomrule
  \end{tabular}
 }
\end{table}

\subsection{End-to-end agentic compilation demo: KRAS G12D docking}
\label{app:dock-agentic-demo}

We use one 20-evaluation OpenCode session to make the compilation process
concrete.  This is an interactive capability demonstration, separate from the
50-step aggregate evaluation in Section~\ref{sec:agentic-case-study}; it is not
included as an additional replicate in Figure~\ref{fig:agentic}.  The initial
context included the 27-page KRAS G12D review by
\citet{kumarOverviewKRASG12D2026}.  From this review, the agent extracted four
actionable structure--activity cues: a basic cyclic amine can engage Asp12 and
Gly60; a fused heteroaromatic core can interact with Arg68 and His95; aromatic
bulk can occupy the Switch-II hydrophobic pocket; and F/Cl/CF$_3$
substitution, rigidity, and a low rotatable-bond count are useful design
signals.  The trace below reports the agent's observable compilation decisions
and file changes rather than its private chain-of-thought.  The text boxes
condense the four optimisation instructions; the fifth user turn requested the
final report.  They do not add BO terminology that the user would need to
supply.

\paragraph{Initial message: optimise using the available scientific context.}
\begin{center}
  \setlength{\fboxsep}{8pt}
  \fbox{\begin{minipage}{0.91\textwidth}
    \small
    \textbf{User message (abridged from the session).}
 ``Optimise a small molecule for binding to KRAS G12D (PDB 7RPZ) using the
 AutoDock Vina evaluator.  Choose SMILES from the supplied ZINC library, use
 the review paper as relevant scientific context, and maximise binding
 affinity.  Run five standard steps first.''
  \end{minipage}}
\end{center}

The user did not specify a surrogate, prior, kernel, or acquisition rule. The LLM inferred that the expensive docking search should be managed as a BO campaign, initialised the available harness, and extracted the PDF. It summarised the literature in \texttt{BO\_EPISTEMIC.md}, including the basic cyclic amine, fused heteroaromatic core, hydrophobic aromatic bulk, halogen substitution, and rigidity cues above. It treated the task as stationary and therefore created no latent state. Because the review supplied qualitative SAR rather than reliable affinity labels for molecules in the candidate library, it also left the observation log empty instead of inventing pseudo-observations.

The LLM then wrote \texttt{gp\_config.py}.  Its feature transform concatenated
a 256-bit radius-2 Morgan fingerprint with eight physicochemical descriptors:
logP, TPSA, MW, H-bond acceptors and donors, rotatable bonds, aromatic-ring
count, and ring count.  It combined a fingerprint Tanimoto kernel with a
descriptor ARD-RBF kernel.  The prior mean rewarded basic amines,
pyrimidine/quinazoline-like cores, bulky aromatic groups, at least three
aromatic rings, and MW/logP values near the literature reference values.  The
default acquisition in \texttt{acq\_config.py} was UCB with $\beta=2$.

For iterations 1--5, the LLM repeatedly ran \texttt{harbo-step}, passed the
returned SMILES to the docking evaluator, and appended the measured score to
\texttt{BO\_DATA.jsonl}.  The observed scores were $10.5$, $11.0$, $9.4$,
$10.3$, and $8.8$, so iteration 2 was the initial campaign best.  The
literature-conditioned GP and campaign state remained available for the
follow-up instructions.

\paragraph{Follow-up message: explore only around observed hits.}
\begin{center}
  \setlength{\fboxsep}{8pt}
  \fbox{\begin{minipage}{0.91\textwidth}
    \small
    \textbf{User message (abridged from the session).}
 ``Now explore, but among analogues of the above $10.0$ score.  Run five
 more steps.''
  \end{minipage}}
\end{center}

The LLM identified the initial observations satisfying $y\geq10$ and modified
\texttt{acq\_config.py} to add a named \texttt{analogue} acquisition: a
higher-exploration UCB utility plus a Tanimoto-similarity bonus to the current
hit centres.  The GP belief and observations were retained.  The five observed
scores were $9.8$, $8.6$, $9.1$, $8.2$, and $11.1$; iteration 10 established a
new campaign best.

\paragraph{Follow-up message: broaden analogue exploration.}
\begin{center}
  \setlength{\fboxsep}{8pt}
  \fbox{\begin{minipage}{0.91\textwidth}
    \small
    \textbf{User message (abridged from the session).}
 ``Perhaps add this \#10 to the analogue centres too.  I think we need to
 run five more steps with more exploration.''
  \end{minipage}}
\end{center}

No posterior-model change was needed.  The LLM added the iteration-10 hit to
the analogue centres, increased the UCB exploration weight, and retained a
similarity bonus while preventing exact reselection of an observed molecule.
For iterations 11--15, the observed scores were $11.1$, $7.0$, $10.7$,
$11.0$, and $9.8$; iteration 11 tied the campaign best.

\paragraph{Follow-up message: move beyond analogues.}
\begin{center}
  \setlength{\fboxsep}{8pt}
  \fbox{\begin{minipage}{0.91\textwidth}
    \small
    \textbf{User message (abridged from the session).}
 ``We may run five balanced steps, and do not limit to analogues.''
  \end{minipage}}
\end{center}

The LLM used EI with $\xi=0.01$, without the analogue restriction, for five
further manual cycles.  The values were $10.5$, $9.6$, $8.5$, $9.3$, and $0.0$;
the final zero was an invalid \texttt{CG0} Vina atom type, not a fabricated
observation.  The final report retained $11.1$ at iterations 10 and 11 as the
campaign best.  Thus, dialogue changed decision policy without rebuilding the
GP or discarding the compiled literature context and observation history.

\subsection{Validator error detail}
\label{app:validator-errors}

Tables~\ref{tab:validator-failures}--\ref{tab:validator-errors} break down validator outcomes by model and reasoning configuration. An actual failure is a run with a terminal ``error'' record; a run with any error contains at least one failed validation attempt followed by a retry; and total validation errors counts all such retry-triggering attempts. The denominator is the number of recorded runs for that configuration and environment.

The counterfactual first-attempt compilation success rate treats a run as successful only if its initial artefacts pass the present verifier, thereby estimating how often the compiler would produce a valid deliverable without verification-guided retries. It does not assume that an invalid unverified artefact would necessarily crash: such an artefact might instead be accepted silently, which is precisely the failure mode excluded by this definition. With high reasoning, 9 of 45 runs required at least one repair, so 36 of 45 would succeed on the first attempt (80.0\%); verification and bounded retries recovered all nine, yielding 45 of 45 successful runs (100\%). With low reasoning, 21 of 45 runs required repair, giving a first-attempt rate of 24 of 45 (53.3\%); retries recovered 20 of these runs, yielding 44 of 45 successful runs (97.8\%).

\begin{table}[!t]
  \centering
  \caption{Actual terminal failures by model and thinking configuration.}
  \label{tab:validator-failures}
  \resizebox{\textwidth}{!}{%
  \begin{tabular}{llccccccccc}
    \toprule
 Model & Thinking & A & B & C & D & E & H & HM & Dock & O-Suzuki \\
    \midrule
 DeepSeek-V4-Flash-0731 & high & $0/5$ & $0/5$ & $0/5$ & $0/5$ & $0/5$ & $0/5$ & $0/5$ & $0/5$ & $0/5$ \\
 DeepSeek-V4-Flash-0731 & low  & $0/5$ & $0/5$ & $0/5$ & $0/5$ & $0/5$ & $0/5$ & $0/5$ & $1/5$ & $0/5$ \\
 DeepSeek-V4-Flash-0731 & none & $2/5$ & $0/5$ & $3/5$ & $3/5$ & $2/5$ & $4/5$ & $4/5$ & $4/5$ & $2/5$ \\
 gpt-4o-mini            & none & $2/5$ & $4/5$ & $4/5$ & $4/5$ & $5/5$ & $5/5$ & $5/5$ & $5/5$ & $2/5$ \\
    \bottomrule
  \end{tabular}
 }
\end{table}

\begin{table}[!t]
  \centering
  \caption{Runs containing at least one validation error by model and thinking configuration.}
  \label{tab:validator-runs}
  \resizebox{\textwidth}{!}{%
  \begin{tabular}{llccccccccc}
    \toprule
 Model & Thinking & A & B & C & D & E & H & HM & Dock & O-Suzuki \\
    \midrule
 DeepSeek-V4-Flash-0731 & high & $0/5$ & $1/5$ & $2/5$ & $1/5$ & $0/5$ & $1/5$ & $3/5$ & $1/5$ & $0/5$ \\
 DeepSeek-V4-Flash-0731 & low  & $2/5$ & $1/5$ & $3/5$ & $3/5$ & $2/5$ & $1/5$ & $4/5$ & $3/5$ & $2/5$ \\
 DeepSeek-V4-Flash-0731 & none & $2/5$ & $0/5$ & $3/5$ & $3/5$ & $4/5$ & $4/5$ & $5/5$ & $5/5$ & $5/5$ \\
 gpt-4o-mini            & none & $5/5$ & $5/5$ & $5/5$ & $5/5$ & $5/5$ & $5/5$ & $5/5$ & $5/5$ & $5/5$ \\
    \bottomrule
  \end{tabular}
 }
\end{table}

\begin{table}[!t]
  \centering
  \caption{Total validation errors by model and thinking configuration.}
  \label{tab:validator-errors}
  \resizebox{\textwidth}{!}{%
  \begin{tabular}{llccccccccc}
    \toprule
 Model & Thinking & A & B & C & D & E & H & HM & Dock & O-Suzuki \\
    \midrule
 DeepSeek-V4-Flash-0731 & high & 0 & 2 & 2 & 1 & 0 & 1 & 7 & 1 & 0 \\
 DeepSeek-V4-Flash-0731 & low  & 3 & 1 & 4 & 5 & 2 & 1 & 8 & 5 & 2 \\
 DeepSeek-V4-Flash-0731 & none & 4 & 0 & 6 & 6 & 6 & 8 & 10 & 8 & 7 \\
 gpt-4o-mini            & none & 7 & 14 & 10 & 19 & 16 & 10 & 10 & 10 & 8 \\
    \bottomrule
  \end{tabular}
 }
\end{table}

\subsection{Model and reasoning-intensity ablations}
\label{app:model-ablations}

We further ablate the HarBO compilation pipeline across back-end choices,
holding the four-phase compiler and the evaluation protocol fixed.  Four
configurations are compared in all nine environments ($T=50$, 5 seeds each):
DeepSeek-V4-Flash-0731 with thinking \emph{high} (the main benchmark),
\emph{low}, or \emph{none}, and GPT-4o-mini with thinking \emph{none}.
Table~\ref{tab:model-ablation-perf} reports the mean performance of
the successful seeds, with the successful seed count in parentheses;
Table~\ref{tab:model-ablation-token} reports the per-scenario token
consumption alongside the LLM baselines (wall time for all methods is in
Table~\ref{tab:walltime}).  The seed success rate and the effect of the
validation harness are analysed in Appendix~\ref{app:validator-errors}.

The values in Table~\ref{tab:model-ablation-perf} are conditional on a
successful seed. They therefore describe the realised performance of each
configuration, while Tables~\ref{tab:validator-failures}--\ref{tab:validator-errors}
provide the separate reliability evidence. No monotonic relation between
reasoning effort and every environment is implied by this conditional table.

\begin{table}[!t]
  \centering
  \caption{Successful-seed performance by model and reasoning setting.}
  \label{tab:model-ablation-perf}
  \begin{tabular}{lcccc}
    \toprule
 Scenario & DSV4F (high) & DSV4F (low) & DSV4F (none) & GPT-4o-mini (none) \\
    \midrule
 A & $13.83$ (5/5) & $13.97$ (5/5) & $14.04$ (3/5) & $13.84$ (3/5) \\
 B & $14.07$ (5/5) & $13.42$ (5/5) & $13.53$ (5/5) & $13.82$ (1/5) \\
 C & $14.00$ (5/5) & $12.03$ (5/5) & $13.75$ (2/5) & $14.11$ (1/5) \\
 D ($R_T/T$) & $6.08$ (5/5) & $7.17$ (5/5) & $7.69$ (2/5) & $14.07$ (1/5) \\
 E ($R_T/T$) & $7.44$ (5/5) & $7.79$ (5/5) & $9.57$ (3/5) & -- (0/5) \\
 H & $-0.053$ (5/5) & $-0.088$ (5/5) & $-0.061$ (1/5) & -- (0/5) \\
 HM ($R_T/T$) & $0.021$ (5/5) & $0.040$ (5/5) & $0.038$ (1/5) & -- (0/5) \\
 Dock & $11.06$ (5/5) & $10.75$ (4/5) & $10.30$ (1/5) & -- (0/5) \\
 O-Suzuki & $98.41$ (5/5) & $98.66$ (5/5) & $97.49$ (3/5) & $97.98$ (3/5) \\
    \bottomrule
  \end{tabular}%
\end{table}

Figure~\ref{fig:model-ablations} gives the corresponding trajectories. It
uses denoised best-so-far $y$ in stationary environments and $R_t/t$ in
non-stationary environments; it is therefore a within-panel visual comparison,
not a common scale across all scenarios.

\begin{figure}[!ht]
  \centering
  \includegraphics[width=\textwidth]{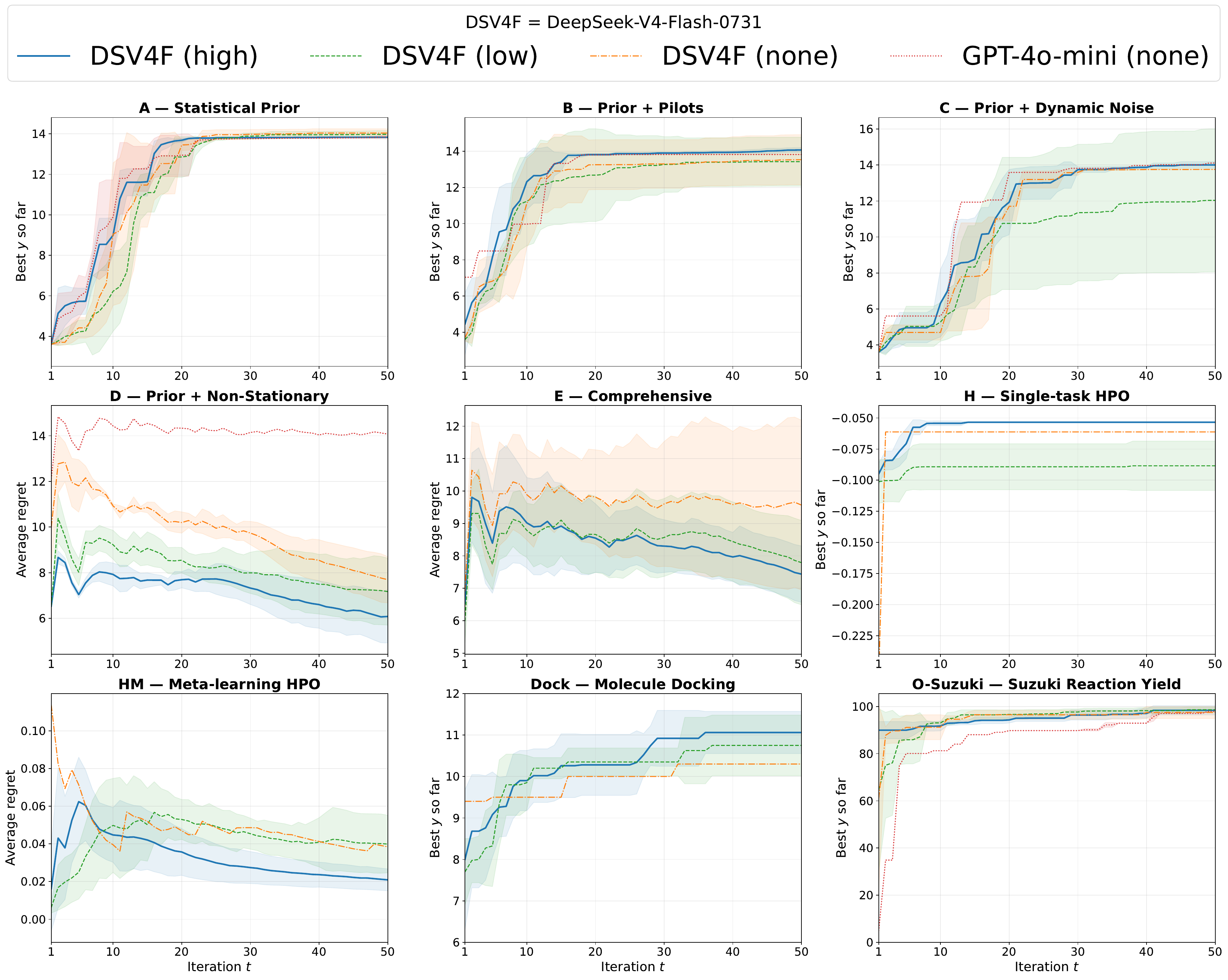}
  \caption{Model and reasoning-intensity ablation trajectories.}
\label{fig:model-ablations}
\end{figure}
\section{Case studies of compiled artefacts}
\label{app:case-studies}

The following case studies expose the compiler outputs verbatim for one representative in-process seed in each environment. Each case follows the four HarBO phases: latent-state construction (Z), prior evidence (D), GP program compilation (GP), and state resolution (R). The source and structured outputs are embedded directly here so that the appendix does not depend on generated snapshot files.

\subsection{Scenario A (seed 4)}

\begin{lstlisting}[language=Python,caption={Verbatim Phase Z source for scenario A (seed 4).},label={lst:case-a-z}]
def build_latent_space():
    return None
\end{lstlisting}

\begin{lstlisting}[language=json,caption={Verbatim Phase D output for scenario A (seed 4).},label={lst:case-a-d}]
{"D_init": []}
\end{lstlisting}

\begin{lstlisting}[language=Python,caption={Verbatim Phase GP source for scenario A (seed 4).},label={lst:case-a-gp}]
def build_feature(dtype, device):
    return None


class MultiPeakKernel(RawKernel):
    def __init__(self, dtype, device):
        super().__init__()
        self.inner = gpytorch.kernels.ScaleKernel(
            gpytorch.kernels.MaternKernel(nu=2.5, ard_num_dims=3)
        ).to(dtype=dtype, device=device)
        self.inner.base_kernel.lengthscale = torch.full(
            (1, 3), 0.964, dtype=dtype, device=device
        )

    def forward(self, x1, x2, diag=False, **kwargs):
        if x2 is None:
            x2 = x1
        ref_dtype = self.inner.base_kernel.lengthscale.dtype
        ref_device = self.inner.base_kernel.lengthscale.device
        t1 = torch.as_tensor(x1, dtype=ref_dtype, device=ref_device)
        t2 = torch.as_tensor(x2, dtype=ref_dtype, device=ref_device)
        if t1.dim() == 1:
            t1 = t1.unsqueeze(0)
        if t2.dim() == 1:
            t2 = t2.unsqueeze(0)
        if diag:
            return self.inner(t1, t2, diag=True)
        return self.inner(t1, t2).to_dense()


class MultiPeakMean(RawMean):
    def __init__(self, dtype, device):
        super().__init__()
        self.register_buffer(
            "center",
            torch.tensor([3.97, 0.833, -4.6], dtype=dtype, device=device),
        )
        self.register_buffer(
            "lengthscale",
            torch.tensor((10.0 + 0.964 ** 2) ** 0.5, dtype=dtype, device=device),
        )
        self.log_amplitude = nn.Parameter(
            torch.tensor(np.log(9.0), dtype=dtype, device=device)
        )
        self.offset = nn.Parameter(torch.tensor(0.0, dtype=dtype, device=device))

    def forward(self, x):
        t = torch.as_tensor(x, dtype=self.center.dtype, device=self.center.device)
        if t.dim() == 1:
            t = t.unsqueeze(0)
        d2 = ((t - self.center) ** 2).sum(dim=-1)
        envelope = torch.exp(-d2 / (2.0 * self.lengthscale.pow(2)))
        return self.offset + self.log_amplitude.exp() * envelope


def build_kernel(dtype, device):
    return MultiPeakKernel(dtype, device)


def build_mean(dtype, device):
    return MultiPeakMean(dtype, device)
\end{lstlisting}

\begin{lstlisting}[language=json,caption={Verbatim Phase R output for scenario A (seed 4).},label={lst:case-a-r}]
{"epsilon": null}
\end{lstlisting}
\subsection{Scenario E (seed 1)}

\begin{lstlisting}[language=Python,caption={Verbatim Phase Z source for scenario E (seed 1).},label={lst:case-e-z}]
def build_latent_space():
    class DriftCenterSpace(Space):
        def sample(self, n: int, seed: int | None = None) -> list[list[float]]:
            if seed is not None:
                np.random.seed(seed)
            points = np.random.uniform(-5.0, 5.0, size=(n, 3))
            return [[float(v) for v in row] for row in points]

        def contains(self, x) -> bool:
            if isinstance(x, np.ndarray):
                x = x.tolist()
            if not isinstance(x, (list, tuple)):
                return False
            if len(x) != 3:
                return False
            return all(isinstance(v, (int, float)) and -5.0 <= v <= 5.0 for v in x)

    return DriftCenterSpace()
\end{lstlisting}

\begin{lstlisting}[language=json,caption={Verbatim Phase D output for scenario E (seed 1).},label={lst:case-e-d}]
{
  "D_init": [
    {
      "x": [-1.74, 0.23, -0.101],
      "y": 6.8,
      "epsilon": 1.0,
      "z": [-0.578, 1.97, -1.42]
    },
    {
      "x": [-0.544, -4.74, 2.77],
      "y": 10.0,
      "epsilon": 2.0,
      "z": [1.19, -4.06, 2.93]
    },
    {
      "x": [0.222, 2.37, -3.25],
      "y": 5.5,
      "epsilon": 3.0,
      "z": [-1.39, 4.76, -3.43]
    },
    {
      "x": [-2.67, 2.22, -2.75],
      "y": 6.0,
      "epsilon": 2.0,
      "z": [-0.765, 2.61, -1.88]
    },
    {
      "x": [-0.464, 2.67, -2.32],
      "y": 7.3,
      "epsilon": 2.0,
      "z": [-0.838, 2.86, -2.06]
    }
  ]
}
\end{lstlisting}

\begin{lstlisting}[language=Python,caption={Verbatim Phase GP source for scenario E (seed 1).},label={lst:case-e-gp}]
def to_row(x, z):
    return [float(v) for v in x] + [float(v) for v in z]


def build_feature(dtype, device):
    class DriftFeature(FeatureModule):
        def __init__(self):
            super().__init__()
            self.dim = 6

        def forward(self, rows):
            return torch.as_tensor(rows, dtype=dtype, device=device)

    return DriftFeature().to(dtype=dtype, device=device)


def build_kernel(dtype, device):
    return gpytorch.kernels.ScaleKernel(
        gpytorch.kernels.MaternKernel(nu=2.5, ard_num_dims=6)
    ).to(dtype=dtype, device=device)


def build_mean(dtype, device):
    class DriftMean(RawMean):
        def __init__(self):
            super().__init__()
            self._raw_log_amp = nn.Parameter(torch.tensor(2.0, dtype=dtype, device=device))
            self._raw_decay = nn.Parameter(torch.tensor(-3.0, dtype=dtype, device=device))

        def forward(self, x):
            t = torch.as_tensor(x, dtype=dtype, device=device)
            xt = t[:, :3]
            zt = t[:, 3:]
            dist2 = torch.sum((xt - zt) ** 2, dim=-1)
            decay = F.softplus(self._raw_decay) + 1e-3
            return torch.exp(self._raw_log_amp - decay * dist2)

    return DriftMean().to(dtype=dtype, device=device)
\end{lstlisting}

\begin{lstlisting}[language=json,caption={Verbatim Phase R output for scenario E (seed 1).},label={lst:case-e-r}]
{"raw_z": [-0.5783, 1.9737, -1.4242], "epsilon": 1.23}
\end{lstlisting}
\subsection{Scenario H (seed 1)}

\begin{lstlisting}[language=Python,caption={Verbatim Phase Z source for scenario H (seed 1).},label={lst:case-h-z}]
def build_latent_space():
    return None
\end{lstlisting}

\begin{lstlisting}[language=json,caption={Verbatim Phase D output for scenario H (seed 1).},label={lst:case-h-d}]
{"D_init": []}
\end{lstlisting}

\begin{lstlisting}[language=Python,caption={Verbatim Phase GP source for scenario H (seed 1).},label={lst:case-h-gp}]
def build_feature(dtype, device):
    return None


class _RankKernel(RawKernel):
    _COL_GRID = (
        0.1, 0.2, 0.3, 0.4, 0.5, 0.6, 0.7, 0.8, 0.9, 1.0,
    )
    _ETA_GRID = (
        0.0009765625,
        0.0021094917319715023,
        0.004556754138320684,
        0.009843133389949799,
        0.02126234397292137,
        0.045929204672575,
        0.09921256452798843,
        0.21431098878383636,
        0.4629373550415039,
        1.0,
    )
    _DEPTH_GRID = (
        1.0, 2.0, 3.0, 5.0, 8.0, 13.0, 20.0, 32.0, 50.0,
    )
    _LAMBDA_GRID = (
        0.0009765625,
        0.004556754138320684,
        0.02126234397292137,
        0.09921256452798843,
        0.4629373550415039,
        2.1601195335388184,
        10.079368591308594,
        47.0315055847168,
        219.45445251464844,
        1024.0,
    )

    def __init__(self):
        super().__init__()
        self.inner = gpytorch.kernels.ScaleKernel(
            gpytorch.kernels.MaternKernel(nu=2.5, ard_num_dims=4)
        )

    @staticmethod
    def _encode(rows):
        cols = _RankKernel._COL_GRID
        etas = _RankKernel._ETA_GRID
        depths = _RankKernel._DEPTH_GRID
        lambdas = _RankKernel._LAMBDA_GRID
        feats = []
        for row in rows:
            c = min(range(len(cols)), key=lambda i: abs(cols[i] - row["colsample_bytree"]))
            e = min(range(len(etas)), key=lambda i: abs(etas[i] - row["eta"]))
            d = min(range(len(depths)), key=lambda i: abs(depths[i] - row["max_depth"]))
            l = min(range(len(lambdas)), key=lambda i: abs(lambdas[i] - row["reg_lambda"]))
            feats.append([
                c / (len(cols) - 1),
                e / (len(etas) - 1),
                d / (len(depths) - 1),
                l / (len(lambdas) - 1),
            ])
        return feats

    def forward(self, x1, x2=None, diag=False, **kwargs):
        ref = next(self.parameters())
        dtype = ref.dtype
        device = ref.device
        n1 = len(x1)
        n2 = n1 if x2 is None else len(x2)
        if n1 == 0 or n2 == 0:
            shape = (n1,) if diag else (n1, n2)
            return torch.zeros(shape, dtype=dtype, device=device)
        t1 = torch.as_tensor(self._encode(x1), dtype=dtype, device=device)
        t2 = None if x2 is None else torch.as_tensor(self._encode(x2), dtype=dtype, device=device)
        return self.inner(t1, t2, diag=diag).to_dense()


class _TrendMean(RawMean):
    def __init__(self):
        super().__init__()
        self.offset = nn.Parameter(torch.tensor(-0.25))
        self.eta_c = nn.Parameter(torch.tensor(-0.3))
        self.eta_loc = nn.Parameter(torch.tensor(0.6666666666666666))
        self.depth_c = nn.Parameter(torch.tensor(-0.2))
        self.depth_loc = nn.Parameter(torch.tensor(0.5))
        self.col_c = nn.Parameter(torch.tensor(-0.2))
        self.col_loc = nn.Parameter(torch.tensor(0.6))
        self.lambda_c = nn.Parameter(torch.tensor(-0.05))
        self.lambda_loc = nn.Parameter(torch.tensor(0.5))
        self.eta_depth_inter = nn.Parameter(torch.tensor(-0.15))
        self.col_lambda_inter = nn.Parameter(torch.tensor(0.1))

    def forward(self, x):
        ref = next(self.parameters())
        dtype = ref.dtype
        device = ref.device
        if len(x) == 0:
            return torch.zeros(0, dtype=dtype, device=device)
        feats = torch.as_tensor(_RankKernel._encode(x), dtype=dtype, device=device)
        col = feats[:, 0]
        eta = feats[:, 1]
        depth = feats[:, 2]
        lam = feats[:, 3]
        out = self.offset
        out = out + self.eta_c * (eta - self.eta_loc).pow(2)
        out = out + self.depth_c * (depth - self.depth_loc).pow(2)
        out = out + self.col_c * (col - self.col_loc).pow(2)
        out = out + self.lambda_c * (lam - self.lambda_loc).pow(2)
        out = out + self.eta_depth_inter * (eta - self.eta_loc) * (depth - self.depth_loc)
        out = out + self.col_lambda_inter * (col - self.col_loc) * (lam - self.lambda_loc)
        return out


def build_kernel(dtype, device):
    return _RankKernel().to(dtype=dtype, device=device)


def build_mean(dtype, device):
    return _TrendMean().to(dtype=dtype, device=device)
\end{lstlisting}

\begin{lstlisting}[language=json,caption={Verbatim Phase R output for scenario H (seed 1).},label={lst:case-h-r}]
{"epsilon": null}
\end{lstlisting}
\subsection{Molecule docking (seed 0)}

\begin{lstlisting}[language=Python,caption={Verbatim Phase Z source for molecule docking (seed 0).},label={lst:case-dock-z}]
def build_latent_space():
    return None
\end{lstlisting}

\begin{lstlisting}[language=json,caption={Verbatim Phase D output for molecule docking (seed 0).},label={lst:case-dock-d}]
{"D_init": []}
\end{lstlisting}

\begin{lstlisting}[language=Python,caption={Verbatim Phase GP source for molecule docking (seed 0).},label={lst:case-dock-gp}]
class TanimotoKernel(RawKernel):
    def __init__(self, dtype, device):
        super().__init__()
        self._scale = nn.Parameter(torch.tensor(1.0, dtype=dtype, device=device))

    def _get_fp(self, smi):
        mol = Chem.MolFromSmiles(smi)
        if mol is None:
            return rdkit.DataStructs.ExplicitBitVect(2048)
        return AllChem.GetMorganFingerprintAsBitVect(mol, 2, nBits=2048)

    def forward(self, x1, x2, diag=False, **kwargs):
        dtype = self._scale.dtype
        device = self._scale.device
        fps1 = [self._get_fp(s) for s in x1]
        fps2 = [self._get_fp(s) for s in x2]
        if diag:
            return self._scale * torch.ones(len(x1), dtype=dtype, device=device)
        mat = torch.zeros((len(x1), len(x2)), dtype=dtype, device=device)
        for i, fp1 in enumerate(fps1):
            for j, fp2 in enumerate(fps2):
                if fp1.GetNumOnBits() == 0 and fp2.GetNumOnBits() == 0:
                    sim = 0.0
                else:
                    sim = rdkit.DataStructs.TanimotoSimilarity(fp1, fp2)
                mat[i, j] = sim
        return self._scale * mat


class DescriptorMean(RawMean):
    def __init__(self, dtype, device):
        super().__init__()
        self.b0 = nn.Parameter(torch.tensor(0.0, dtype=dtype, device=device))
        self.b_logp = nn.Parameter(torch.tensor(0.2, dtype=dtype, device=device))
        self.b_mw = nn.Parameter(torch.tensor(0.01, dtype=dtype, device=device))
        self.b_rot = nn.Parameter(torch.tensor(0.1, dtype=dtype, device=device))
        self.b_tpsa = nn.Parameter(torch.tensor(0.1, dtype=dtype, device=device))
        self.b_charge = nn.Parameter(torch.tensor(0.3, dtype=dtype, device=device))

    def forward(self, x):
        dtype = self.b0.dtype
        device = self.b0.device
        means = []
        for smi in x:
            mol = Chem.MolFromSmiles(smi)
            if mol is None:
                logp = mw = rot = tpsa = charge = 0.0
            else:
                logp = Descriptors.MolLogP(mol)
                mw = Descriptors.MolWt(mol)
                rot = Descriptors.NumRotatableBonds(mol)
                tpsa = Descriptors.TPSA(mol)
                charge = sum(atom.GetFormalCharge() for atom in mol.GetAtoms())
            val = (self.b0
                   + self.b_logp * (logp - 2.0)
                   + self.b_mw * (300.0 - mw) / 100.0
                   + self.b_rot * (6.0 - rot) / 6.0
                   + self.b_tpsa * (70.0 - tpsa) / 70.0
                   - self.b_charge * abs(charge))
            means.append(val)
        return torch.stack(means)


def build_feature(dtype, device):
    return None


def build_kernel(dtype, device):
    return TanimotoKernel(dtype, device)


def build_mean(dtype, device):
    return DescriptorMean(dtype, device)
\end{lstlisting}

\begin{lstlisting}[language=json,caption={Verbatim Phase R output for molecule docking (seed 0).},label={lst:case-dock-r}]
{"epsilon": null}
\end{lstlisting}

\section{Hyperparameter settings for reproducibility}
\label{app:hyperparams}

This appendix records the numerical and model settings used for the reported
experiments.  It distinguishes the conventional GP engine from the LLM
baselines: settings for one are not implicitly inherited by the other.
Environment definitions and the information supplied to each method are given
in Appendix~\ref{app:environments}.

\subsection{Execution budget and conventional GP engine}

Every reported in-process run uses CPU execution, $T=50$ objective evaluations,
and policy seeds $\{0,1,2,3,4\}$.  The PyTorch-based GP policies use
\texttt{float64}.  \textbf{HarBO}, \textbf{Vanilla GP-UCB}, \textbf{Embedding-CGP-UCB},
and \textbf{Embedding-NNAGP-UCB} share the conventional GP decision engine:
they use UCB with $\beta=2.0$, set $n_{\mathrm{init}}=0$, and begin empirical
GP hyperparameter fitting after five real observations.  Random search and the
LLM-in-the-loop baselines below instead use their own proposal mechanisms.

For this conventional GP engine, optimisation is \emph{automatic}: it selects
L-BFGS-B when the model has at most $20$ trainable parameters and Adam
otherwise.  The learning rate is $0.05$; the default fit budgets are $50$
L-BFGS-B iterations and $200$ Adam iterations.  Cholesky jitter starts at
$10^{-6}$, is multiplied by $10$ after a factorisation failure, and is capped
at $1.0$ with one attempt at each level.  If factorisation remains unstable,
the engine uses prior-conditioned inference rather than emitting an invalid
posterior.  Supplied observation standard deviations are fixed in the
likelihood; otherwise likelihood noise is learned.

The design-space maximiser is likewise fixed by the space type:

\begin{itemize}[leftmargin=1.5em]
  \item \textbf{Continuous spaces:} score $100$ raw candidates and refine the
  best $10$ with L-BFGS-B.
  \item \textbf{Finite HPO grids:} enumerate exactly when there are at most
  $200{,}000$ configurations.
  \item \textbf{Docking library:} score batches of $256$ candidates, stopping
  after $50{,}000$ candidates or $5$ seconds.
\end{itemize}

\subsection{In-process HarBO compiler}

The main in-process HarBO condition uses
\texttt{DeepSeek-V4-Flash-0731}.  The model/reasoning ablation compares this
model with \emph{high}, \emph{low}, and \emph{none} reasoning, and compares
against \texttt{gpt-4o-mini} with no reasoning.  Here \emph{none} means that no
reasoning-effort field is sent.  Z and D receive the configured Z reasoning
level, GP receives the configured GP reasoning level, and R has no separate
reasoning request.

Each compiler call has a $120$-second timeout.  Validation and local repair
permit at most three candidate attempts for a compilation stage.  The compiler
client uses temperature $1.0$ and top-$p$ $1.0$; when reasoning is enabled, the
completion budget is $32{,}000$ tokens.  The client permits at most two
transport/API retries.  The published in-process benchmark uses the fixed
Z--D--GP--R workflow and does not invoke optional acquisition compilation.

\subsection{HarBO realisations}

\paragraph{In-process harness.}
The harness receives structured epistemic, aleatory, and history streams.  It
validates the compiled contracts, including an end-to-end GP
fit/posterior smoke test, before passing the resulting artefacts to the
conventional GP numerical decision-maker.  Thus the LLM supplies modelling
artefacts, while posterior inference and acquisition optimisation remain
non-LLM numerical operations.

\paragraph{Agentic harness.}
The agentic campaigns have a nominal budget of $50$ evaluations and use the
same environment interface and numerical GP decision-maker.  In the skill
condition, the coding agent receives the HarBO contracts, validation workflow,
and BO-specific guidance.  In the no-skill condition, it is a general coding
agent that shares only the environment and evaluation interface; it is not
required to retain HarBO's surrogate, acquisition, or initialisation choices.
Transfer evidence and reference-document availability vary only in the
experiments that name those conditions.  The separate 20-evaluation Dock demo
is a capability demonstration, not an additional campaign replicate.

\subsection{Baseline settings}

\paragraph{Non-LLM baselines.}
\textbf{Random} draws uniformly from the design space and has no GP or LLM
parameters.  \textbf{Vanilla GP-UCB} uses a Mat\'ern-$5/2$ GP with the conventional
engine settings above.  \textbf{Embedding-CGP-UCB} uses the
\texttt{text-embedding-3-small} representation ($1536$ dimensions) and a
product of a Mat\'ern-$5/2$ design kernel with a cosine-style embedding kernel.
\textbf{Embedding-NNAGP-UCB} uses context rank $m=5$, one latent GP ($Q=1$), a
single hidden layer of width $64$, independent terms enabled, and a constant
mean; it otherwise uses the conventional engine settings.

\paragraph{LLM-in-the-loop baselines.}
\begin{itemize}[leftmargin=1.5em]
  \item \textbf{CAKE} begins with three seed points.  Its kernel population
  contains SE, periodic, linear, RQ, Mat\'ern-$3/2$, and Mat\'ern-$5/2$ kernels,
  combines them by addition or multiplication, retains a population of $6$,
  performs one crossover, uses mutation probability $0.7$, and ranks fitted
  kernels by BIC.
  \item \textbf{LGBO} uses three seed points, up to two strict-format retries,
  and $256$ batched paths.
  \item \textbf{LLAMBO} uses three seed points; $4$ candidates, one template,
  one generation, and prediction batches of $2$.  Its client settings are
  temperature $0.7$, top-$p$ $0.95$, a $32$-token prediction cap, a $500$-token
  request cap, one API retry, and at most three candidate retries.
  \item \textbf{LABO} uses three seed points and at most $100$ inner loops.
  It uses temperature $0.7$, top-$p$ $0.9$, $2048$ output tokens, $100$ GP
  training iterations, $\alpha=1$, $\beta=0$, and a low-fidelity update on every
  loop.
\end{itemize}

LLM baseline calls use their ordinary client request unless a result belongs to
an explicitly labelled baseline-thinking comparison.

\end{document}